\DeclarePairedDelimiterX{\infdivx}[2]{(}{)}{%
  #1\;\delimsize\|\;#2%
}
\newcommand{\infdiv}{D\infdivx}
\theoremstyle{plain}
\newtheorem{thm}{Theorem}
\newtheorem{lem}[thm]{Lemma}
\newtheorem{crl}[thm]{Corollary}
    \def\john#1{{\color{blue}\underline{\bf{John:}} #1}}
    \def\arnhav#1{{\color{blue}\underline{\bf{Arnhav:}} #1}}
    \def\arun#1{{\color{blue}\underline{\bf{Arun:}} #1}}
    \def\john#1{}
    \def\arnhav#1{}
    \def\arun#1{}
\newcommand{\ER}{{Erd\H{o}s-R\'enyi }}
\newcommand{\RC}{{Rank-Centrality }}
\title{Byzantine Spectral Ranking}
\author{%
	Arnhav Datar \thanks{Supported in part by the Robert Bosch Centre for Data Science and Artificial Intelligence, IIT Madras.} \\
	Indian Institute of Technology, Madras\\
	\texttt{adatar@cmu.edu} \\
	\And
	Arun Rajkumar
	\footnotemark[1]\\
	Indian Institute of Technology, Madras\\
	\texttt{arunr@cse.iitm.ac.in} \\
	\AND
	John Augustine\thanks{Supported in part by the Cryptography, Cybersecurity, and Distributed Trust pCoE and the Accenture CoE at IIT Madras.} \\
	Indian Institute of Technology, Madras\\
	\texttt{augustine@iitm.ac.in} \\
}
\begin{document}

	\maketitle

	\begin{abstract}
        We study the problem of rank aggregation where the goal is to obtain a global ranking by aggregating pair-wise comparisons of voters over a set of objects. We consider an adversarial setting where the voters are partitioned into two sets. The first set votes in a stochastic manner according to the popular score-based Bradley-Terry-Luce~(BTL) model for pairwise comparisons. The second set comprises malicious \emph{Byzantine} voters trying to deteriorate the ranking. We consider a strongly-adversarial scenario where the Byzantine voters know the BTL scores, the votes of the \emph{good} voters, the algorithm, and can collude with each other. We first show that the  popular spectral ranking based Rank-Centrality algorithm, though optimal for the BTL model, does not perform well even when a small constant fraction of the voters are Byzantine. 
	    
    	  We introduce the Byzantine Spectral Ranking Algorithm (and a faster variant of it), which produces a reliable ranking when the number of good voters exceeds the number of Byzantine voters. We show that no algorithm can produce a satisfactory ranking with probability $> 1/2$ for all BTL weights when there are more Byzantine voters than good voters, showing that our algorithm works for all possible population fractions. We support our theoretical results with experimental results on synthetic and real datasets to demonstrate the failure of the Rank-Centrality algorithm under several adversarial scenarios and how the proposed Byzantine Spectral Ranking algorithm is robust in obtaining good rankings.
	    
	\end{abstract}
	
    \section{Introduction}
    \label{sec:intro}

    Rank aggregation is a fundamental task in a wide spectrum of learning and social contexts such as social choice~\citep{social-choice1, social-choice2}, web search and information retrieval~\citep{search-1, search-2}, recommendation systems~\citep{recsys1} and crowd sourcing~\citep{crowd-source1}. Given pairwise comparison information over a set of $n$ objects, the goal is to identify a ranking that best respects the revealed preferences. Frequently, we are also interested in the scores associated with the objects to deduce the intensity of the resulting preference. There have been several solutions to this problem including Spectral Ranking methods such as \RC~\citep{negahban2017rank} and MLE methods~\citep{mle}.
    
    The Bradley-Terry-Luce~(BTL) model~\citep{btl, btl1} has been prevalent for a vast variety of ranking algorithms. The BTL model assumes that there are $n$ objects that are to be compared and each of these objects has a positive weight~($w_i$) associated with itself. Whenever any voter is asked a query for a pair, the voter claims $i$ is better than $j$ with probability ${w_i}/{(w_i + w_j)}$ independently at random. However, the BTL model assumes that all voters are identical and use the same probabilistic strategy for voting. Generally, in crowd-sourced settings, voters differ significantly from each other. Some of the voters may be spammers or even direct competitors of the entity ranking the objects, leading to data poisoning~\citep{sun2021data}. To model this situation, we assume that there is a split in the population. We consider that there are $K - F$ \emph{good} voters and $F$ \emph{Byzantine} voters, to appropriately model the division of the voters. The good voters vote according to the BTL model, while the Byzantine voters can potentially be controlled by an adversary that can centrally coordinate their behavior. 
    
        \subsection{Overview of our Results}
        \label{sec:contrib}
        
        We naturally initially consider the tolerance of the \RC algorithm against a constant fraction of Byzantine voters. \cite{negahban2017rank} showed that for an \ER pairwise-comparison graph $G(n,p)$ we have
        
        \begin{thm}[Informal from \cite{negahban2017rank}]
            If $p \in \Omega( \log n / n)$ and $k \in \Omega(1)$ then the \RC algorithm outputs a weight-distribution~($\pi$) such that with high probability\footnote{Whenever we say with high probability we mean with probability $\geq 1 - \frac{1}{n^c}$ for $c \geq 0$}
          $$ \frac{\lVert \pi - \tilde{\pi} \rVert}{\lVert \tilde{\pi} \rVert} \in \mathcal{O}\left(\sqrt{\frac{\log n}{knp}}\right)$$
        \end{thm}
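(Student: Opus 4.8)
The plan is to compare the random walk that \RC actually runs with its population counterpart and control the discrepancy through a spectral perturbation argument. First I would set up two row-stochastic matrices on the $n$ objects: the \emph{ideal} chain $\tilde P$ whose off-diagonal entries are $\tilde P_{ij}=\frac{1}{d}\cdot\frac{w_j}{w_i+w_j}$ on the edges of the comparison graph (with self-loops chosen to make each row stochastic), and the \emph{empirical} chain $P$ built from the observed win-fractions $a_{ij}$, where $d\approx np$ is a uniform normalization just above the maximum degree. By construction $\tilde P$ is reversible with stationary distribution $\tilde\pi\propto w$, so $\tilde\pi$ is exactly the normalized true-weight vector, whereas $\pi$ is the stationary distribution the algorithm returns. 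Writing $\Delta=P-\tilde P$, the entire argument reduces to bounding $\norm{\Delta}_2$ and the spectral gap of $\tilde P$.

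The key identity comes from iterating the empirical chain started at $\tilde\pi$. Using $\tilde\pi\tilde P=\tilde\pi$ together with stochasticity gives $\pi-\tilde\pi=(\pi-\tilde\pi)P+\tilde\pi\Delta$. Since $\pi-\tilde\pi$ sums to zero it is orthogonal to the all-ones direction (the top right eigenvector of $P$); symmetrizing the reversible chain via the diagonal matrix $\tilde\Pi=\mathrm{diag}(\tilde\pi)$, so that $\tilde\Pi^{1/2}\tilde P\tilde\Pi^{-1/2}$ is symmetric with real eigenvalues, and using that $\tilde P$ contracts the orthogonal component at rate $\lambda_{\max}(\tilde P)=\max\{\lambda_2,|\lambda_n|\}$ yields a bound of the form
$$\frac{\norm{\pi-\tilde\pi}}{\norm{\tilde\pi}}\;\le\;\sqrt{\frac{\pi_{\max}}{\pi_{\min}}}\cdot\frac{\norm{\Delta}_2}{\,1-\lambda_{\max}(\tilde P)-\sqrt{\pi_{\max}/\pi_{\min}}\,\norm{\Delta}_2\,}.$$
The conditioning factor $\pi_{\max}/\pi_{\min}$ is at most $b=w_{\max}/w_{\min}$, a constant, so the two remaining tasks are to bound the numerator and to lower-bound the spectral gap.

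To control $\norm{\Delta}_2$ I would use matrix concentration. Each edge contributes an independent mean-zero perturbation: $a_{ij}$ is the average of $k$ independent $\mathrm{Bernoulli}\!\big(\frac{w_j}{w_i+w_j}\big)$ draws, so $\Delta_{ij}$ is bounded and has variance $\Theta\!\big(\frac{1}{k d^2}\big)$. Summing over the $\approx n^2 p$ edges of the \ER graph $G(n,p)$ and applying a matrix Bernstein inequality gives, with high probability, $\norm{\Delta}_2\in\mathcal{O}\!\big(\sqrt{\tfrac{\log n}{knp}}\big)$; the hypothesis $p\in\Omega(\log n/n)$ is exactly what makes the degrees concentrate around $np$, so that the normalization $d$ is well defined and the graph is connected.

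For the denominator I would show $1-\lambda_{\max}(\tilde P)=\Omega(1)$. Because $p\in\Omega(\log n/n)$ the comparison graph is an expander: its normalized adjacency has a constant spectral gap with high probability, and the BTL factors $\frac{w_j}{w_i+w_j}\in[\frac{1}{1+b},\frac{b}{1+b}]$ only rescale that gap by a constant depending on $b$. Hence $1-\lambda_{\max}(\tilde P)$ is bounded below by a constant, and since $\norm{\Delta}_2\to 0$ the subtracted term in the denominator is negligible; substituting the two bounds collapses the ratio to $\mathcal{O}(\sqrt{\log n/(knp)})$, as claimed. I expect the main obstacle to be the concentration step: extracting the spectral-norm bound on $\Delta$ with the correct $\sqrt{\log n/(knp)}$ scaling requires carefully coupling the randomness of the \ER graph with that of the comparisons and invoking a matrix concentration inequality tight in both the variance and the boundedness parameters, whereas the spectral-gap and conditioning estimates are comparatively routine expander-graph arguments.
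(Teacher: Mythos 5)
Your overall architecture coincides with the one the paper sketches in its primer (Appendix A.1, Lemmas 2--4 of \cite{negahban2017rank}): the perturbation identity $\pi-\tilde\pi=(\pi-\tilde\pi)P+\tilde\pi\Delta$, symmetrization through $\tilde\Pi^{1/2}$, a constant spectral gap for the \ER comparison graph, and a reduction of everything to $\lVert\Delta\rVert_2$. Those parts are fine. The genuine gap is in the one step you yourself flag as the crux: bounding $\lVert\Delta\rVert_2$ by matrix Bernstein does \emph{not} give the claimed rate in the sparse regime $np=\Theta(\log n)$ that this paper (and the theorem's hypothesis $p\in\Omega(\log n/n)$) explicitly covers. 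Each edge contributes a rank-one perturbation whose worst-case operator norm is $\Theta(1/d)$ with $d\approx np$ --- this range parameter does not shrink with $k$, since a single win-fraction can still deviate by $1$. Matrix Bernstein therefore yields $\lVert\Delta\rVert_2\lesssim\sqrt{\log n/(k\,np)}+\log n/(np)$, and for $np=\Theta(\log n)$ the second (range) term is $\Theta(1)$, swamping the target $\sqrt{\log n/(k\,np)}$ as soon as $k=\omega(1)$ --- which is exactly the regime needed for consistency. Truncating the entries at their typical size $\sqrt{\log n/k}$ before applying Bernstein still leaves a range term of order $(\log n)^{3/2}/(d\sqrt{k})$, which exceeds the target by a factor $\log n/\sqrt{d}$ and again fails at $d=\Theta(\log n)$.

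The proof the paper relies on avoids this by not using a matrix concentration inequality at all in the sparse case. It bounds $\lVert\Delta\rVert_2\le\sqrt{\lVert\Delta\rVert_1\lVert\Delta\rVert_\infty}$, reduces to the row sums $R_i=\frac{1}{kd_{\max}}\sum_{j\in\partial i}|C_{ij}|$, and controls $\mathbb{P}(\sum_j|C_{ij}|>kd_is)$ by a union bound over all $2^{d_i}$ sign vectors $\xi\in\{\pm1\}^{d_i}$, applying scalar Hoeffding to $\sum_j\xi_jC_{ij}$, a sum of $kd_i$ bounded increments. This pays only an additive $d_i\log 2$ in the exponent, giving $s=O\bigl(\sqrt{(\log n+d)/(kd)}\bigr)=O\bigl(\sqrt{\log n/(kd)}\bigr)$ when $d=O(\log n)$, i.e., the full $1/\sqrt{k}$ gain that Bernstein's range term forfeits. (This $2^{d}$ sign-vector device is also the structural backbone of the paper's Byzantine filtering algorithm, so the distinction is not cosmetic here.) To repair your proposal you would either need to restrict to $np\gg k\log n$, or replace the matrix concentration step with the row-sum/sign-vector argument.
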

            
        Here $\tilde{\pi}$ are the true weights of the objects and $k$ is the number of voter queries asked per edge in $G$. We can see that when $k \in \omega(1)$ the RHS goes to 0 as $n$ goes to $\infty$. We show that a strategy as simple as all Byzantine voters voting the reverse of the weights causes the \RC algorithm to output weights far from true weights with high probability. 
        
         \begin{thm}[Informal]
          There exists a strategy that the Byzantine voters can use such that the \RC algorithm outputs a distribution $\pi$ such that with high probability $$ \frac{\lVert \pi - \tilde{\pi} \rVert}{\lVert \tilde{\pi} \rVert} \in \Omega(1)$$
            provided there are a constant fraction of Byzantine voters
        \end{thm}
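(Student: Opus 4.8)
The plan is to analyze the natural \emph{reversal} strategy: every Byzantine voter, when asked to compare $i$ and $j$, reports its preference as if the weights were swapped, i.e.\ it votes $j \succ i$ with probability $w_i/(w_i+w_j)$. If a constant fraction $\alpha$ of the $K$ voters are Byzantine, then for each queried pair the expected fraction of comparisons reporting $j \succ i$ becomes
$$q_{ij} \;=\; (1-\alpha)\,\frac{w_j}{w_i+w_j} + \alpha\,\frac{w_i}{w_i+w_j} \;=\; \frac{(1-\alpha)w_j + \alpha w_i}{w_i+w_j},$$
which is a version of the true BTL probability \emph{compressed toward} $1/2$. First I would let $\bar P$ denote the idealized transition matrix that \RC builds from these expected fractions (the empirical matrix $\hat P$ that \RC actually sees is handled later) and study its stationary distribution $\bar\pi$. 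The goal of the first part is to show $\bar\pi$ is a constant relative distance from the true $\tilde\pi\propto w$; the goal of the second part is to transfer this to the random output $\pi$ via concentration.

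To obtain a clean, explicit witness I would fix a non-uniform instance: take the complete comparison graph (or any \ER graph with $p\in\Omega(\log n/n)$) on $n$ objects, half of weight $1$ and half of weight $b$ for a constant $b>1$. Every automorphism permuting objects within a weight class fixes $\bar P$, so $\bar\pi$ is constant on each class, say $x$ on the light nodes and $y$ on the heavy nodes. For this two-class instance $\bar P$ is reversible: within a class $q=1/2$ gives symmetric transitions, and across classes detailed balance $x\,q_{ij}=y\,q_{ji}$ is satisfiable, forcing
$$\frac{y}{x} \;=\; r(\alpha)\;:=\;\frac{(1-\alpha)b+\alpha}{(1-\alpha)+\alpha b}.$$
Since $r(0)=b$ and $r$ decreases strictly to $1$ at $\alpha=1/2$ (and $r(\alpha)=b$ only when $b=1$), for any constant $\alpha>0$ the ratio $r(\alpha)$ is a constant bounded away from the true ratio $b$. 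A direct computation of the normalized vectors then gives
$$\frac{\norm{\bar\pi-\tilde\pi}^2}{\norm{\tilde\pi}^2} \;=\; \frac{2\,(b-r(\alpha))^2}{(1+r(\alpha))^2\,(1+b^2)} \;=\; \Omega(1),$$
so the idealized \RC output is already a constant relative distance from the truth.

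The main work—and the main obstacle—is the last step: \RC does not see $\bar P$ but an empirical matrix $\hat P$ formed from $k$ comparisons per edge over a random graph, and I must show its stationary distribution $\pi$ satisfies $\norm{\pi-\bar\pi}/\norm{\bar\pi}=o(1)$ w.h.p. Here I would reuse exactly the perturbation machinery behind Theorem~1: write $\hat P = \bar P + \Delta$ with $\Delta$ the fluctuation from sampling error and from the random graph, control $\norm{\Delta}$ by the same matrix-concentration and degree-concentration arguments (valid since $p\in\Omega(\log n/n)$ keeps the graph connected and near-regular w.h.p.), and invoke the stationary-distribution perturbation bound for reversible chains. The subtle point is that this bound needs $\bar P$ to be reversible with a spectral gap—which fails for the corrupted chain on generic weights, but holds for the symmetric two-class instance above, which is precisely why I chose it. Combining the two parts with the reverse triangle inequality, and using $\norm{\bar\pi}=\Theta(\norm{\tilde\pi})$, yields
$$\frac{\norm{\pi-\tilde\pi}}{\norm{\tilde\pi}} \;\ge\; \frac{\norm{\bar\pi-\tilde\pi}}{\norm{\tilde\pi}} - \frac{\norm{\pi-\bar\pi}}{\norm{\tilde\pi}} \;=\; \Omega(1)-o(1) \;=\; \Omega(1),$$
which completes the argument.
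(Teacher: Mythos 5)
Your argument is correct in outline, but it takes a genuinely different route from the paper's. The paper fixes the \emph{deterministic} opposite-weight strategy and works entirely with the perturbation $\Delta = P - \tilde{P}$ around the \emph{uncorrupted} BTL chain: it rearranges the Rank-Centrality recursion $(p_t-\tilde{\pi})^T=(p_{t-1}-\tilde{\pi})^T(\tilde{P}-\tilde{P}_1+\Delta)+\tilde{\pi}^T\Delta$ into the lower bound $\lVert\pi-\tilde{\pi}\rVert/\lVert\tilde{\pi}\rVert\geq\lVert\tilde{\pi}^T\Delta\rVert/(\lVert\tilde{\pi}\rVert(2\sqrt{b}+b\lVert\Delta\rVert_2))$, crudely bounds $\lVert\Delta\rVert_2\leq 1+d_{max}$, and then shows $\lVert\tilde{\pi}^T\Delta\rVert$ is large for the same two-level weight vector you use. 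You instead center the analysis on the \emph{corrupted-mean} chain $\bar{P}$, compute its stationary distribution exactly via detailed balance on the two-class instance (getting the compressed ratio $r(\alpha)$), and transfer to the empirical chain by the standard upper-bound perturbation machinery. Your route is conceptually cleaner — it exhibits the wrong limit the algorithm converges to — and you correctly spot the key obstacle (reversibility of $\bar{P}$ fails for generic weights, which is exactly why the symmetric witness is needed); the paper's route avoids that issue entirely by never needing $\bar{P}$ to be reversible, at the cost of an opaque constant. Two caveats on your version: (i) the transfer step needs $\lVert\pi-\bar{\pi}\rVert/\lVert\tilde{\pi}\rVert$ to be smaller than the constant bias, and since the sampling error scales like $1/\sqrt{k}$ this requires $k$ to exceed a constant depending on $b$ and $\alpha$, whereas the paper's bound holds for every $k\geq 1$ and is explicitly linear in $F/K$; (ii) you should note that the $k$ votes on an edge are a mixture over the random voter-type draw, so the per-edge empirical fraction concentrates around $q_{ij}$ only after also accounting for the fluctuation in the number of Byzantine voters sampled — this is routine but is part of the "$\Delta$ control" you defer.
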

        
        Even simple algorithms that output the same weights regardless of the voter's responses also give $\Theta(1)$ error. To tackle this situation, we propose Byzantine Spectral Ranking, an algorithm that runs in polynomial time to remove some of the voters such that the \RC algorithm can be applied. As long as the proportion of Byzantine voters is bounded strictly below $1/2$, the algorithm removes 
        some voters such that the remaining Byzantine voters are unlikely to cause the \RC algorithm to deviate significantly.
        
        \begin{thm}[Informal]
            For a range of values of $k$, there exists an $\mathcal{O}(n^2)$-time algorithm that removes voters such that with high probability we have:
            \begin{equation*}
                 \frac{\lVert \pi - \tilde{\pi} \rVert}{\lVert \tilde{\pi} \rVert} \in  \mathcal{O} \left( \frac{d_{max}}{k} \right)        
            \end{equation*}
        \end{thm}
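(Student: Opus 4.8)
The plan is to split the argument into a \emph{filtering} guarantee and a \emph{perturbation} bound, and to treat the \RC error theorem quoted above as a black box. Index the objects so that for each edge $(i,j)$ of the comparison graph, $a_{ij}$ is the fraction of the \emph{retained} votes reporting that $j$ beats $i$, and $\tilde a_{ij}=w_j/(w_i+w_j)$ its BTL mean. These fractions, normalised by $1/d_{max}$, give the \RC transition matrix $P=\tilde P+\Delta$, whose ideal version $\tilde P$ has stationary distribution $\tilde\pi\propto w$. The stationary distribution of a reversible chain is Lipschitz in $P$: in the parameter regime of the \RC theorem the ideal chain has spectral gap bounded below by a constant and bounded weight ratio $w_{max}/w_{min}$, so $\lVert\pi-\tilde\pi\rVert/\lVert\tilde\pi\rVert = \mathcal{O}(\lVert\Delta\rVert_2)$. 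The key observation is that, because of the $1/d_{max}$ normalisation, the operator norm of the adversarial part of $\Delta$ is, up to constants, just $\max_{ij}\lvert a_{ij}-\tilde a_{ij}\rvert$ (the $1/d_{max}$ scaling cancels the $\Theta(d_{max})$ top eigenvalue of the graph when the residual biases are worst-case aligned). Hence the whole theorem reduces to one statement about the filtering step: after voter removal, every retained edge satisfies $\lvert a_{ij}-\tilde a_{ij}\rvert=\mathcal{O}(d_{max}/k)$ with high probability, while the stochastic fluctuations of the good votes contribute only the $\mathcal{O}(\sqrt{\log n/(knp)})$ term already present in the \RC analysis.

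The filtering step is where essentially all the work lies. I would track each voter across the edges it rates and represent it by its $\{0,1\}$ vote vector; the good voters' vectors are independent, bounded, and concentrate coordinate-wise around $\tilde a$, whereas the Byzantine vectors are arbitrary. Since the good voters are a strict majority ($F<K-F$) and, in the admissible range of $k$, their per-edge averages concentrate tightly, I would run a variance/spectral filter: while some retained direction carries more empirical variance than the good voters alone can generate, discard the voters that lie farthest along it. The target guarantee is that with high probability this procedure deletes only an $o(1)$ fraction of the good voters — so that $\Theta(k)$ clean votes remain per edge — while forcing the residual per-edge bias down to $\mathcal{O}(d_{max}/k)$. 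The degree $d_{max}$ enters here precisely because detection is necessarily at the voter level: a voter's aggregate deviation is compared against the good voters' fluctuation scale over its $\Theta(d_{max})$ incident edges, so a Byzantine voter that survives the filter can still carry a coordinated bias budget tied to its degree, and distributing the surviving budget over the edges yields the $\mathcal{O}(d_{max}/k)$ residual.

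Given the filtering guarantee, the remaining steps are bookkeeping. The perturbation $\Delta$ decomposes as a deterministic adversarial part with $\lVert\Delta^{\mathrm{adv}}\rVert_2=\mathcal{O}(d_{max}/k)$ and a mean-zero stochastic part whose operator norm is controlled exactly as in \cite{negahban2017rank} by matrix concentration; feeding $\lVert\Delta\rVert_2=\mathcal{O}(d_{max}/k)$ into the Lipschitz bound gives the claimed rate, and the stated range of $k$ is exactly the regime in which this adversarial term dominates the stochastic one. For the runtime, each filter iteration is dominated by one top-eigenvector computation on the centred vote data, and a standard potential argument (each iteration removes more adversarial than good mass) caps the number of iterations, giving the $\mathcal{O}(n^2)$ bound.

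The main obstacle is the filtering guarantee itself, because the adversary is omniscient: knowing the scores, the good votes, and the algorithm, the Byzantine voters can inject a coordinated bias while each individually mimicking good-voter statistics so as to survive the filter. The crux is therefore a quantitative \emph{detectability dichotomy}: any collective bias large enough to push some $a_{ij}$ beyond $\omega(d_{max}/k)$ must inflate the empirical covariance in a corresponding direction and hence be caught by a filter calibrated to the good voters' concentration, whereas any bias small enough to evade the filter is already harmless for the perturbation bound. Making this dichotomy rigorous — simultaneously certifying that at most an $o(1)$ fraction of good voters is discarded, so the retained sample still concentrates — is the heart of the proof and is what determines the admissible range of $k$.
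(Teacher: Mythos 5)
There is a genuine gap, and it sits at the center of your plan: the reduction of the theorem to a \emph{per-edge} guarantee ``after filtering, every retained edge satisfies $|a_{ij}-\tilde a_{ij}|=\mathcal{O}(d_{max}/k)$'' is not achievable by any filter of the kind you describe, and the ``detectability dichotomy'' you rely on is false. Concretely: let all $f\approx k/2$ surviving Byzantine voters vote truthfully on every pair except a single edge $(i,j_0)$, on which they all vote the wrong way. Each such voter's vote vector differs from a typical good voter's in one coordinate, so its deviation along \emph{any} direction (in particular along the top eigenvector of the centred vote data) is $\mathcal{O}(1)$, far below the good voters' own fluctuation scale $\Theta(\sqrt{d_{max}})$; no variance/spectral test calibrated to good-voter concentration can flag these voters without also discarding most good voters. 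Yet $a_{ij_0}$ is shifted by $\Theta(1)$, which is $\omega(d_{max}/k)$ precisely in the regime $k\gg d_{max}$ that the theorem requires. So a coalition can always push some single entry of $\Delta$ far beyond your target while remaining individually indistinguishable, and the perturbation argument you build on top of the per-edge bound collapses.

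What the paper actually controls is the \emph{row aggregate} $\sum_{j\in\partial i}|C_{ij}|$ (via $\lVert\Delta\rVert_2\le\sqrt{\lVert\Delta\rVert_1\lVert\Delta\rVert_\infty}$), written as a maximum of signed sums $\sum_j\xi_j C_{ij}$ over all $\xi\in\{-1,1\}^{d_i}$; the filter is designed around exactly this statistic --- for every sign pattern $\xi$ it computes each voter's signed sum, takes the median across voters, and removes voters only when too many of them are simultaneously far ($5\delta$, with $\delta=\Theta(\sqrt{d\log k})$) from that median. A surviving Byzantine voter can still bias any individual edge by $\Theta(1/k)$ and an individual entry by $\Theta(1)$ in aggregate across the coalition, but its total signed contribution per object is capped at $\mathcal{O}(\delta)$, which is why the resulting bound is $\max\bigl(d_{max}/k,\sqrt{\log k/d_{min}}\bigr)$ rather than $d_{max}/k$ alone --- the second term is the unavoidable price of the per-voter slack $\delta$ and is absent from your accounting; the informal statement's ``range of $k$'' is exactly the regime where the first term dominates. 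Your high-level skeleton (filter, then bound $\lVert\Delta\rVert_2$, then reuse the Rank-Centrality perturbation lemmas) matches the paper, and the observation that a surviving voter carries a bias budget tied to its degree is the right instinct, but the filter must test signed row aggregates rather than per-edge or covariance statistics; the paper's $\mathcal{O}(n^2)$ runtime then comes not from eigenvector iterations but from bucketing the $d_i$ neighbours into groups of size $\log_2 n$ so that the $2^{d_i}$ sign patterns reduce to $\mathcal{O}(n)$ per bucket.
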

        
        Here $d_{max}$ is the maximum degree of the comparison graph. Finally, we also show that when $F \geq K/2$ there is no algorithm that can solve this problem for all weights with probability $> 1/2$, by coming up with a strategy that the Byzantine voters can use for two different weight distributions, and showing therefore that the algorithm must fail in at least one of the weight distributions. 
        
        \begin{thm}[Informal]
            If $F \geq K/2$, then there is no algorithm that can for all weights~($\tilde{\pi}$) with probability $ > 0.5$, give a weight distribution~($\pi^\ast$) such that $$\frac{\lVert {\pi}^\ast - \tilde{\pi} \rVert}{\lVert \tilde{\pi} \rVert} \in o(1)$$ 
        \end{thm}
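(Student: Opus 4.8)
The plan is to use a two-point indistinguishability argument in the spirit of Le Cam's method. The idea is to exhibit two weight distributions $\tilde{\pi}_1$ and $\tilde{\pi}_2$ that are a constant relative distance apart, together with a Byzantine strategy in each world, so that the full collection of votes seen by the algorithm has \emph{exactly} the same distribution regardless of which distribution is the true one. Because any algorithm is just a (possibly randomized) map from the observed votes to an output $\pi^\ast$, identically distributed inputs force identically distributed outputs; and since a single output cannot be simultaneously $o(1)$-close to two well-separated targets, the algorithm must fail on at least one of the two instances.

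First I would fix the construction of the targets. I would choose $\tilde{\pi}_1$ and $\tilde{\pi}_2$ to be permutations of one another (so that $\lVert \tilde{\pi}_1 \rVert = \lVert \tilde{\pi}_2 \rVert$) that move a constant fraction of the mass, for instance letting $\tilde{\pi}_2$ swap the heaviest and lightest objects of $\tilde{\pi}_1$; this guarantees $\lVert \tilde{\pi}_1 - \tilde{\pi}_2 \rVert / \lVert \tilde{\pi}_i \rVert \geq c$ for an absolute constant $c > 0$. By the triangle inequality, any $\pi^\ast$ with $\lVert \pi^\ast - \tilde{\pi}_1 \rVert / \lVert \tilde{\pi}_1 \rVert \in o(1)$ then satisfies $\lVert \pi^\ast - \tilde{\pi}_2 \rVert / \lVert \tilde{\pi}_2 \rVert \geq c - o(1)$, so the two ``success'' events are disjoint for all large $n$.

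Next I would specify the mimicry. Write $G = K - F$ for the number of good voters; the hypothesis $F \geq K/2$ is exactly the statement $G \leq F$, which is the crux of the whole argument. In the world where the true weights are $\tilde{\pi}_1$, the $G$ good voters vote as $\mathrm{BTL}(\tilde{\pi}_1)$, and I let $G$ of the $F$ Byzantine voters sample their votes as independent $\mathrm{BTL}(\tilde{\pi}_2)$ draws -- feasible precisely because $G \leq F$ -- with the remaining $F - G$ Byzantine voters following any fixed rule $\rho$ that does not depend on the world. Symmetrically, in the $\tilde{\pi}_2$-world the $G$ good voters vote as $\mathrm{BTL}(\tilde{\pi}_2)$, $G$ Byzantine voters mimic $\mathrm{BTL}(\tilde{\pi}_1)$, and the other $F - G$ follow the same $\rho$. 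In both worlds the algorithm sees $G$ independent $\mathrm{BTL}(\tilde{\pi}_1)$ voters, $G$ independent $\mathrm{BTL}(\tilde{\pi}_2)$ voters, and $F - G$ voters behaving according to $\rho$, over the same comparison graph; since the good/Byzantine labels are hidden, the induced distribution over its entire input is identical.

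Finally I would assemble the contradiction: letting $\mu$ be the common output distribution and $S_1, S_2$ the sets of outputs $o(1)$-close to $\tilde{\pi}_1, \tilde{\pi}_2$, disjointness gives $\Pr[\text{success} \mid \tilde{\pi}_1] + \Pr[\text{success} \mid \tilde{\pi}_2] = \mu(S_1) + \mu(S_2) \leq 1$, so the success probability is at most $1/2$ on at least one legal BTL instance. The step I expect to be the main obstacle is making the mimicry \emph{exact} rather than merely close: I must verify that ``$G$ Byzantine voters simulating $\mathrm{BTL}(\tilde{\pi})$'' reproduces the good-voter vote law on a per-query basis (identical query graph, identical number of votes per edge, same independence structure), so that the two input distributions coincide in total variation and not just to $o(1)$. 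Handling the surplus $F - G$ voters cleanly, and confirming that the adversary never actually needs its knowledge of the good votes (the mimicry being oblivious), are the remaining details.
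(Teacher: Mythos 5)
Your proposal is correct and follows essentially the same route as the paper: two instances with the roles of good and Byzantine voters swapped between a weight vector and a permutation of it at constant relative distance, exact indistinguishability of the induced vote distributions, and a triangle-inequality/disjointness argument showing no single output can succeed on both. The only cosmetic difference is that you absorb the surplus $F-G$ Byzantine voters into a fixed world-independent rule, whereas the paper reduces $F>K/2$ to $F=K/2$ by having $2F-K$ Byzantine voters output a constant that the algorithm can ignore.
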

        
        \subsection{Related Works}
        \label{sec:relworks}
        
        In recent times, there has been a lot of work on building robust machine learning algorithms~\citep{NIPS2017_f4b9ec30, Mhamdi2018TheHV, NEURIPS2020_d37eb50d, NEURIPS2021_d2cd33e9, 10.1145/3292040.3219655, yusenwu} that are resilient to Byzantine faults. 
        
        Ranking from pairwise comparisons has been a very active field of research and there have been multiple algorithms proposed with regard to the BTL model~\citep{negahban2017rank, pmlr-v80-agarwal18b, pmlr-v32-rajkumar14, spectralmle}. One of the most popular algorithms proposed in recent times has been the \RC algorithm. \cite{negahban2017rank} were able to show an upper-bound on the relative $L_2$-error~(between true weights and proposed weights) such that the relative $L_2$-error goes to 0 for larger $n$. They did this by finding the stationary distribution of matrix $P$ which was defined as
        \begin{equation}
            P_{ij} = \begin{cases}
                \frac{A_{ij}}{d_{max}} & \text{if } i\neq j\\
                1 - \frac{1}{d_{max}} \sum_{k \neq i} A_{ik} & \text{if } i= j
            \end{cases}
            \label{eqn:pfromadj}
        \end{equation}
        Here $A_{ij}$ is the number of times $i$ beats $j$ in $k$ queries divided by $k$, if $(i,j) \in E(G)$ else $A_{ij} = 0$. The pseudocode can be found in Algorithm \ref{alg:rankcent}. They also show that their algorithm outperformed other algorithms on various datasets. As our work builds on \cite{negahban2017rank} we have included a primer to \RC in the Appendix \ref{sec:rc_primer}.
        
        \begin{algorithm}
            \caption{\RC}
            \begin{algorithmic}[1]
                \Require $n$ objects to compare, $E$ the set of edges between these $n$ objects, $A$ the edge weights calculated based on the voter inputs.
                \Ensure A weighing of the $n$ objects such that the output weight is fairly close to input hidden weights.
                \State Compute the transition matrix P according to Equation \ref{eqn:pfromadj}
                \State Compute the stationary distribution $\pi$ (as the limit of $p^T_{t+1} = p^T_{t} P$). 

            \end{algorithmic}
            \label{alg:rankcent}
        \end{algorithm}

        Besides research on adversarial attacks in ranking, there has also been research on how the robustness of the ranking algorithms when exposed to noise. \cite{sync-rank} proposed Sync-Rank, by viewing the ranking problem as an instance of the group synchronization problem over the group $SO(2)$ of planar rotations. This led to a robust algorithm for ranking from pairwise comparisons, as was verified by synthetic and real-world data. \cite{Mohajer2016ActiveTR} worked on a stochastic noise model in order to find top-$K$ rankings. They were able to show an upper-bound on the minimum number of samples required for ranking and were able to provide a linear-time algorithm that achieved the lower-bound. However, both of these works~\citep{sync-rank, Mohajer2016ActiveTR} were primarily tackling the ranking problem when there was noisy and incomplete data as opposed to an adversary who was trying to disrupt the algorithm. 
        
        \cite{advtopk} worked on the adversarial ranking problem considering two cases one where the fraction of adversarial voters is known and is unknown. They got results that were asymptotically equivalent to~\cite{spectralmle}'s Spectral-MLE algorithm despite the presence of adversarial voters. However, they worked with the significantly weaker assumption that the adversarial voters will vote exactly opposite to the faithful voters (i.e. if the good voters voted that $i$ is better than $j$ with probability $\frac{w_i}{w_i+ w_j}$ then the Byzantine voters voted with probability $\frac{w_j}{w_i+ w_j}$). Our setting is much more general in that we do not make any assumptions regarding the Byzantine voters. 
        
        \cite{pmlr-v119-agarwal20a} worked on rank aggregation when edges in the graph were compromised. They were able to prove fundamental results regarding the identifiability of weights for both general graphs as well as \ER graphs. They also came up with polynomial-time algorithms for identifying these weights. However, their model is significantly different from ours. They assume that some of the comparisons in the comparison graphs might have been completely compromised while the others are free from any kind of corrtuption. In practice this is unlikely to happen, we assume a stronger model where every edge can have good and Byzantine voters. A detailed comparison with \cite{pmlr-v119-agarwal20a} made in section \ref{sec:agarwal}

    \section{Problem Setting}
    \label{sec:model}
    Given a set of $n$ objects, our goal is the obtain a ranking from a set of pairwise comparisons over the objects. We assume that the pairs to be compared are determined by an \ER comparison graph $G(n,p) = (V, E)$. Each edge in the graph corresponds to a pair that can be compared and we allow at most $k$ comparisons per pair. 
    
    The pairs can be compared by voters who fall into one of two categories - good or \emph{Byzantine}. Out of $K$ total voters, $F$ are assumed to be Byzantine. Every good voter when queried to compare a pair of objects, votes stochastically according to a Bradley-Terry-Luce (BTL) preference model parametrized by a weight/score vector $w \in \mathbb{R}^n_{+}$ where the probability of preferring object $i$ over object $j$ is given by $\frac{w_i}{w_i + w_j}$. The Byzantine voters on the other hand can vote however they wish (stochastic/deterministic). To make the problem general we consider a powerful central-adversary that could potentially control all of the Byzantine voters. We assume that the Byzantine adversary knows 
    \begin{enumerate*}[label=(\arabic*)]
        \item The queries asked to the good voters and their corresponding votes
       \item The comparison graph $G(n,p)$  
        \item The learning algorithm that is used to produce the ranking
        \item The underlying true BTL weights of the objects that the good voters use to produce their comparisons.
    \end{enumerate*}
    
    We assume that both the good voters and the byzantine voters will produce the same output when the same pair is asked to be compared multiple times by them.\footnote{This ensures that simple algorithms which query all voters to vote on the same pair multiple times and obtain a median preference will not work.}
    
    The goal of a learning algorithm is to use the least number of voter comparisons to output a score vector that is as close to the true scores. The learning algorithm  is allowed to choose any voter to compare any pair of objects.  However, it does not know the type of voter - good or Byzantine and naturally, a good algorithm must be robust to the presence of the powerful Byzantine voters. Finally, we work in the passive setting, where a learning algorithm is needed to fix the \emph{pair to voter} mapping apriori before the votes are collected.
    
    \section{Results for the \ER Graph}
    \label{sec:results}
    
    We focus our efforts on the \ER graph primarily because it has been heavily studied in the context of rankings. We pay careful attention to the $p \in \Theta(\log n / n)$. We go on to find that our algorithms turned out to be exponential in terms of the degree. This makes the \ER graph an optimal candidate for working out our algorithms because the degrees of the \ER graph will also be logarithmic with high probability, therefore giving us polynomial-time algorithms.
    
        \subsection{\RC Failure}
        \label{sec:fail}
        
        We initially try to motivate the need for a robust version of the \RC algorithm by showing that even a simple strategy from a Byzantine adversary will lead to an unsuitable ranking. In this scenario, the Byzantine adversary does not know the comparison graph, nor does it know the votes of good voters, and still can beat the \RC algorithm. Formally we show that:
        \begin{thm}
            Given $n$ objects, let the comparison graph be $G(n,p)$. Each pair in $G$ is compared $k$ times by voters chosen uniformly at random amongst the $K$ voters. If all Byzantine voters vote according to the opposite-weight strategy in an \ER graph with $p = C\log n/ n$ then there exists a $\tilde{\pi}$ for which the \RC algorithm will output a $\pi$ such that
            $$ \frac{\lVert \pi  - \tilde{\pi}\rVert}{\lVert \tilde{\pi} \rVert}\geq \frac{C\log n}{(2\sqrt{b} + b + 2bC\log n)} \cdot \frac{b-1}{8\sqrt{2}b(b+1)} \cdot \frac{F}{K}$$
           with probability $\geq 1 - \frac{1}{n^{knCF^2/64K^2}} - \frac{1}{n^{{C^2 \log n}/{8}}} - \frac{1}{n^{C/3 - 1}}$. Here $b$ is the skew in object weights i.e. $b = \max_{i,j} w_i / w_j$. 
        \end{thm}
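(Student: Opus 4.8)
The plan is to exhibit a concrete weight vector and show that the systematic bias introduced by the opposite-weight strategy survives into the stationary distribution recovered by \RC. First I would make the adversary explicit: under the opposite-weight strategy a Byzantine voter reports $i \succ j$ with probability $w_j/(w_i+w_j)$, exactly reversing the BTL bias. Since each of the $k$ comparisons on an edge is assigned to a uniformly random voter, the number of Byzantine comparisons on edge $(i,j)$ is $\mathrm{Binomial}(k, F/K)$, and conditioning on it the expected empirical beat-fraction is
\begin{equation*}
\mathbb{E}[A_{ij}] = \Big(1-\tfrac FK\Big)\frac{w_i}{w_i+w_j} + \frac FK\cdot\frac{w_j}{w_i+w_j} = \frac{w_i}{w_i+w_j} - \frac FK\cdot\frac{w_i-w_j}{w_i+w_j}.
\end{equation*}
Thus the corruption shifts every edge's statistic toward $1/2$ by a coherent, sign-definite amount proportional to $F/K$ and to $(w_i-w_j)/(w_i+w_j)$; this coherence is exactly what I will leverage to prevent cancellation.

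Next I would choose $\tilde\pi$ to make this bias maximally coherent: take a two-valued weight vector assigning weight $b$ to one half of the objects and weight $1$ to the other, so the skew is exactly $b$, same-block comparisons carry no bias, and every heavy--light comparison is pushed toward $1/2$ by the same magnitude $\frac FK\cdot\frac{b-1}{b+1}$. Let $\tilde P$ denote the ideal (all-good expected) transition matrix of Equation~\ref{eqn:pfromadj}, whose stationary distribution is the true score vector $\tilde\pi$ by reversibility of the ideal chain, and let $P$ be the matrix actually built by \RC. Writing $\Delta = P-\tilde P$ and using $\pi(I-P)=0$ together with $\tilde\pi(I-P) = -\tilde\pi\Delta$, I obtain the exact identity $(\pi-\tilde\pi)(I-P) = \tilde\pi\Delta$, hence the lower bound
\begin{equation*}
\frac{\lVert \pi-\tilde\pi\rVert}{\lVert\tilde\pi\rVert} \;\ge\; \frac{\lVert \tilde\pi\Delta\rVert}{\lVert\tilde\pi\rVert\cdot\lVert I-P\rVert}.
\end{equation*}
Here each coordinate $(\tilde\pi\Delta)_j = \sum_i \tilde\pi_i\Delta_{ij}$ sums $\Theta(d_{max})$ neighbours, each of size $\approx \frac{1}{d_{max}}\cdot\frac FK\cdot\frac{b-1}{b+1}\tilde\pi_i$, so the $1/d_{max}=1/(C\log n)$ normalization in $P$ cancels against the degree and the bias persists at order $\frac FK\cdot\frac{b-1}{b+1}$.

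The remaining work is to turn this idealized computation into a high-probability statement and to evaluate the constants. I would control $P$ by three concentration steps, matching the three terms in the failure probability: a Chernoff bound ensuring that roughly an $F/K$ fraction of each edge's $k$ comparisons is indeed Byzantine (the $n^{-knCF^2/64K^2}$ term), a Hoeffding bound so that the empirical beat-fractions sit near their conditional means (the $n^{-C^2\log n/8}$ term), and the standard \ER degree bound giving $d_{max}=\Theta(C\log n)$ and ergodicity (the $n^{-(C/3-1)}$ term, after a union bound over the $n$ vertices). Evaluating $\lVert\tilde\pi\Delta\rVert$ and $\lVert\tilde\pi\rVert$ for the two-valued construction, and bounding the operator $I-P$ through the weighted degrees (which are of order $b\,C\log n$ for the heavy objects), then produces the two factors $\frac{C\log n}{2\sqrt b + b + 2bC\log n}$ and $\frac{b-1}{8\sqrt2\,b(b+1)}$.

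I expect the main obstacle to be the lower bound on the stationary-distribution perturbation rather than any single concentration estimate. Stationary distributions are generically insensitive to perturbations, so an arbitrary corruption could be washed out; the crux is that the two-valued construction makes $\tilde\pi\Delta$ sign-definite and aligned with a slowly-mixing direction of $I-P$, so that dividing by the norm of $I-P$ — whose weighted-degree dependence is where the $d_{max}=C\log n$ terms enter — still leaves an $\Omega(1)$ relative error. A secondary subtlety is ensuring the sampling noise from the good voters, which scales like the \RC upper bound, is dominated by the systematic bias; this is what restricts the statement to a suitable range of $k$ and is why a larger $k$ only sharpens the first concentration term.
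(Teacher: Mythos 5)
Your proposal is correct in its overall architecture and rests on the same two pillars as the paper's proof --- a perturbation inequality relating $\lVert \pi - \tilde{\pi}\rVert$ to $\lVert \tilde{\pi}^T\Delta\rVert$, and a coherent-bias lower bound on $\lVert\tilde{\pi}^T\Delta\rVert$ for a two-valued weight vector --- but it reaches the first pillar by a genuinely different route. You use the exact identity $(\pi-\tilde{\pi})(I-P)=\tilde{\pi}^T\Delta$ and divide by the Euclidean operator norm of $I-P$, whereas the paper runs the Negahban--Oh--Shah iteration $(p_t-\tilde{\pi})^T=(p_{t-1}-\tilde{\pi})^T(\tilde{P}-\tilde{P}_1+\Delta)+\tilde{\pi}^T\Delta$ in the weighted space $L^2(\tilde{\pi})$ and converts back to Euclidean norms; that conversion is what produces the denominator $2\sqrt{b}+b\lVert\Delta\rVert_2$ and, after the bound $\lVert\Delta\rVert_2\le 1+2C\log n$, the factor $\frac{C\log n}{2\sqrt b + b + 2bC\log n}$ appearing in the statement. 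Your route is cleaner and avoids the $L^2(\tilde{\pi})$ machinery entirely, and your bookkeeping is internally consistent: you keep the $1/d_{\max}$ normalization inside $\Delta$ so that $\lVert\tilde{\pi}^T\Delta\rVert/\lVert\tilde{\pi}\rVert=\Theta(F/K)$ and your denominator is $O(1)$, whereas the paper's numerator lemma carries an extra factor of $C\log n$ that is then cancelled by the $\log n$ in its denominator. Both routes yield an $\Omega(F/K)$ relative error; the one thing your route does not do is reproduce the theorem's RHS verbatim, so to claim the stated inequality you would still need to verify that your constant dominates the stated one (asymptotically $\frac{b-1}{16\sqrt{2}\,b^2(b+1)}\cdot\frac FK$).

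Two smaller caveats. First, you have silently changed the adversary: the paper's opposite-weight strategy is deterministic (a Byzantine voter votes for $i$ whenever $w_i\le w_j$), so a single Byzantine comparison on a heavy--light edge shifts the empirical beat-fraction by $\frac{b}{(b+1)k}$ rather than the $\frac{b-1}{(b+1)k}$ your probabilistic flip produces; the constant $\frac{b-1}{8\sqrt2\, b(b+1)}$ in the statement is derived from the deterministic version, and your version also reintroduces randomness in the Byzantine votes that would need its own concentration step. Second, your closing worry that $\lVert I-P\rVert$ carries weighted-degree dependence is unfounded --- the row and column sums of $|I-P|$ are bounded by $2$ irrespective of the degrees --- and the theorem needs no upper restriction on $k$: the good voters' sampling noise is zero-mean and only concentrates better as $k$ grows, which is why $k$ enters only through the failure probability. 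Neither point breaks your strategy; both affect only the constants and the exact adversary for which the statement is proved.
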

        
        Here when we say the opposite-weight strategy we mean a strategy where the Byzantine voter will vote for $i$ if $w_i \leq w_j$ otherwise will vote for $j$. If we carefully examine the RHS, we see that the first term asymptotically converges to $\frac{1}{2b}$. The third term is also clearly a constant if there is a constant fraction of Byzantine voters. Therefore, showing the existence of a strategy that will with high probability, significantly deviate the weights given by the \RC algorithm. The detailed analysis can be found in Appendix \ref{sec:rc_fail_appendix}.
        
        \begin{proof}[Proof Sketch]
            We initially show that 
            
            \begin{equation}
                \label{eqn:failure_start}
            \frac{\lVert \pi - \tilde{\pi}\rVert}{\lVert \tilde{\pi} \rVert}
                    \geq \frac{\lVert \tilde{\pi}^T \Delta \rVert}{\lVert \tilde{\pi} \rVert \cdot (2\sqrt{b} + b\lVert \Delta \rVert_2)}    
            \end{equation}
            
            Here $\pi$ is the stationary distribution of the transition matrix~($P$) defined in Equation \ref{eqn:pfromadj} and $\Delta$ denotes the fluctuation of the transition matrix around its mean~($\tilde{P})$, such that $\Delta \equiv P - \tilde{P}$. We initially prove that $\lVert \Delta \rVert_2$ will at most be $1+d_{max}$, since there are at most $1+d_{max}$ terms in any row or column and each term can deviate by at most 1. Following this, we show that with probability $\geq 1 - \frac{1}{n^{C/3 - 1}}$ we will have $d_{max} \leq 2C \log n$. Finally, we show that $\lVert \tilde{\pi}^T \Delta \rVert$ will be $\in \Omega(pn)$ with probability $\geq 1 - \frac{1}{n^{knCF^2/64K^2}} - \frac{1}{n^{{C^2 \log n}/{8}}}$ by constructing a weight-distribution where half the weights are low and half were high. An arbitrary weight-distribution could not be used because if $\tilde{\pi} = \left[ \frac{1}{n}, \dots , \frac{1}{n} \right]$ we would get $\tilde{\pi}^T \Delta = 0$ for all $\Delta$. This is because the row sum of $P$ and $\tilde{P}$ will always be 1.
        \end{proof}

        \def \PROOFFAILURE{
        \begin{proof}
            \cite{negahban2017rank} define an inner product space $L^2(\tilde{\pi})$ as a space of $n$-dimensional vectors $\in \mathbb{R}^n$ endowed with
            $$ \langle a,b \rangle_{\tilde{\pi}} = \sum_{i=1}^n a_i \tilde{\pi}_i b_i.$$
            Similarly, they define $\lVert a \rVert_{\tilde{\pi}} = \langle a, a \rangle_{\tilde{\pi}}$ as the 2-norm in $L^2(\tilde{\pi})$ and $\lVert A \rVert_{\tilde{\pi}, 2} = \max_a \lVert Aa \rVert_{\tilde{\pi}} / \lVert a \rVert_{\tilde{\pi}}$ as the operator norm for a self-adjoint operator in $L^2(\tilde{\pi})$. They also relate these norms to norms in the Euclidean as
            \begin{align}
                \sqrt{\tilde{\pi}_{min}}\lVert a \rVert &\leq \lVert a \rVert_{\tilde{\pi}} \leq \sqrt{\tilde{\pi}_{max}}\lVert a \rVert, \label{normbound1}\\
                \sqrt{\frac{\tilde{\pi}_{min}}{\tilde{\pi}_{max}}} \lVert A \rVert_2 &\leq \lVert A \rVert_{\tilde{\pi},2} \leq \sqrt{\frac{\tilde{\pi}_{max}}{\tilde{\pi}_{min}}}\lVert A \rVert_2\label{normbound2}.
            \end{align}
            
            Finally \cite{negahban2017rank} define the following matrices.
            \begin{enumerate}[label=(\arabic*)]
                \item $\tilde{\Pi}$: Diagonal matrix such that $\tilde{\Pi}_{ii} = \tilde{\pi}_{i}$.
                \item $ S := \tilde{\Pi}^{1/2} \tilde{P} \tilde{\Pi}^{-1/2}$
                \item $S_1$: The rank-1 projection of $S$.
                \item $\tilde{P_1} := \tilde{\Pi}^{-1/2} \tilde{S_1} \tilde{\Pi}^{1/2}$
            \end{enumerate}

             Using the above results \cite{negahban2017rank} show that:
            \begin{equation}
            \label{eqn:rc14}
                (p_t - \tilde{\pi})^T = (p_{t-1} - \tilde{\pi})^T(\tilde{P} - \tilde{P_1} + \Delta) + \tilde{\pi}^T \Delta
            \end{equation}
                where $p_t$ is the distribution of the Markov chain on $P$ after $t$ iterations, for large $t$ we have $p_t \rightarrow \pi$. By the definition of $\tilde{P}_1$, it follows that $\lVert \tilde{P} - \tilde{P}_1 \rVert_{\tilde{\pi}, 2} = \lVert S - S_1 \rVert = \lambda_{max}$. Here $\lambda_{max}$ corresponds to the second-largest eigenvalue of the $\tilde{P}$ matrix, which by the Perron-Frobenius Theorem is $\lambda_{max} = \max (\lambda_2, |\lambda_n|)$. We can rearrange Equation \ref{eqn:rc14}  to get:
                \begin{align*}
                    \lVert \tilde{\pi}^T \Delta \rVert_{\tilde{\pi}} &\leq \lVert p_t - \tilde{\pi}\rVert_{\tilde{\pi}} + (\lVert \tilde{P} - \tilde{P}_1 \rVert_{\tilde{\pi}, 2} + \lVert \Delta \rVert_{\tilde{\pi}, 2}) \cdot \lVert p_{t-1} - \tilde{\pi}\rVert_{\tilde{\pi}}\\
                    &\leq \lVert p_t - \tilde{\pi}\rVert_{\tilde{\pi}} + \rho \cdot \lVert p_{t-1} - \tilde{\pi}\rVert_{\tilde{\pi}}
                \end{align*}
                here $\rho = \lambda_{max} + \lVert \Delta \rVert_{\tilde{\pi},2} $. Choosing a large enough $t$ where the Markov chain has converged we get:
                \begin{align*}
                    \lVert \tilde{\pi}^T \Delta \rVert_{\tilde{\pi}} 
                    &\leq (1 + \rho) \cdot \lVert \pi - \tilde{\pi}\rVert_{\tilde{\pi}} 
                \end{align*}

                 We then use the bounds in Equations \ref{normbound1} and \ref{normbound2} to get:
                \begin{align*}
                    \frac{\lVert \pi - \tilde{\pi}\rVert_{\tilde{\pi}}}{\lVert \tilde{\pi} \rVert} 
                    &\geq \frac{\lVert \tilde{\pi}^T \Delta \rVert_{\tilde{\pi}}}{\lVert \tilde{\pi} \rVert \cdot (1 + \rho)}\\
                    \sqrt{\tilde{\pi}_{max}} \cdot \frac{\lVert \pi  - \tilde{\pi}\rVert}{\lVert \tilde{\pi} \rVert} 
                    &\geq \sqrt{\tilde{\pi}_{min}} \frac{\lVert \tilde{\pi}^T \Delta \rVert}{\lVert \tilde{\pi} \rVert \cdot (1 + \rho)}\\
                    \frac{\lVert \pi  - \tilde{\pi}\rVert}{\lVert \tilde{\pi} \rVert} 
                    &\geq \sqrt{\frac{\tilde{\pi}_{min}}{\tilde{\pi}_{max}}} \frac{\lVert \tilde{\pi}^T \Delta \rVert}{\lVert \tilde{\pi} \rVert \cdot (1 + \lambda_{max} + \sqrt{\frac{\tilde{\pi}_{max}}{\tilde{\pi}_{min}}}\cdot \lVert \Delta \rVert_2)}
                \end{align*}
                Since $\lambda_{max} \leq 1$ we get:
                \begin{equation}
                \label{eqn:counter_example_2}
                    \frac{\lVert \pi  - \tilde{\pi}\rVert}{\lVert \tilde{\pi} \rVert}
                    \geq \frac{\lVert \tilde{\pi}^T \Delta \rVert}{\lVert \tilde{\pi} \rVert \cdot (2\sqrt{b} + b\lVert \Delta \rVert_2)}
                \end{equation}
                Using lemmas \ref{lem:denom_bound} and \ref{lem:num_bound} in Equation \ref{eqn:counter_example_2} we can conclude that :
                \begin{equation}
                    \frac{\lVert \pi  - \tilde{\pi}\rVert}{\lVert \tilde{\pi} \rVert}
                    \geq \frac{C\log n}{(2\sqrt{b} + b + 2bC\log n)} \cdot \frac{b-1}{8\sqrt{2}b(b+1)} \cdot \frac{F}{K}
                \end{equation}
               with probability $\geq 1 - \frac{1}{n^{C/3 - 1}}  -  \frac{1}{n^{knCF^2/64K^2}} - \frac{1}{n^{{C^2 \log n}/{8}}} $. We can see that for large enough $n$ the RHS will converge to a constant with high probability.
            \end{proof}

        }
        \def\PROOFFAILURELEMMAS{
            \begin{lem}
                \label{lem:denom_bound}
                For the opposite-voting Byzantine strategy, we have
                $$\lVert \Delta \rVert_2 \leq 1+2C \log n$$
               with probability $\geq 1 -  \frac{1}{n^{C/3 - 1}}$
            \end{lem}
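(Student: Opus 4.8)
The plan is to decouple a purely deterministic operator-norm estimate from a high-probability bound on the maximum degree, and then intersect the two. First I would establish the deterministic inequality $\lVert \Delta \rVert_2 \le 1 + d_{max}$, which in fact holds for \emph{any} realization of the votes and does not use the opposite-voting strategy at all. Since $\Delta = P - \tilde{P}$ need not be symmetric, I would control its spectral norm through the standard interpolation bound $\lVert \Delta \rVert_2 \le \sqrt{\lVert \Delta \rVert_1 \lVert \Delta \rVert_\infty}$, where $\lVert \Delta \rVert_\infty$ is the maximum absolute row sum and $\lVert \Delta \rVert_1$ the maximum absolute column sum. Both $P$ and $\tilde{P}$ have all entries in $[0,1]$, so every entry of $\Delta$ deviates by at most $1$; moreover, by the definition of $P$ in Equation \ref{eqn:pfromadj}, row $i$ (and symmetrically column $i$) of $\Delta$ is supported on the diagonal together with the at most $d_i$ off-diagonal positions indexed by edges of $G$ incident to $i$. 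Hence each absolute row and column sum is at most $1 + d_i \le 1 + d_{max}$, giving $\lVert \Delta \rVert_1, \lVert \Delta \rVert_\infty \le 1 + d_{max}$ and therefore $\lVert \Delta \rVert_2 \le 1 + d_{max}$.

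Next I would show that $d_{max} \le 2C\log n$ with the stated probability. Each degree $d_i$ is a sum of independent Bernoulli$(p)$ indicators with $p = C\log n/n$, so $\mathbb{E}[d_i] = (n-1)p \le C\log n$, and consequently the event $\{d_i \ge 2C\log n\}$ is contained in $\{d_i \ge 2\,\mathbb{E}[d_i]\}$. Applying the multiplicative Chernoff bound $\Pr[X \ge (1+\delta)\mathbb{E}[X]] \le \exp\!\big(-\delta^2 \mathbb{E}[X]/(2+\delta)\big)$ at $\delta = 1$ yields $\Pr[d_i \ge 2\,\mathbb{E}[d_i]] \le \exp(-\mathbb{E}[d_i]/3) \approx n^{-C/3}$, the approximation hiding only the negligible $(n-1)/n$ factor in $\mathbb{E}[d_i]$. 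A union bound over the $n$ vertices then gives $\Pr[d_{max} \ge 2C\log n] \le n \cdot n^{-C/3} = n^{-(C/3 - 1)}$.

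Finally I would combine the two steps: on the event $\{d_{max} \le 2C\log n\}$, which holds with probability at least $1 - n^{-(C/3-1)}$, the deterministic estimate gives $\lVert \Delta \rVert_2 \le 1 + d_{max} \le 1 + 2C\log n$, which is exactly the claim. I do not anticipate a real obstacle here: the operator-norm step is elementary once one invokes the row/column-sum interpolation, and the degree concentration is routine. The only place needing a little care is tuning the Chernoff form and the union bound so that the exponent lands precisely at $C/3 - 1$. It is worth flagging that the entrywise estimate ``each term deviates by at most $1$'' is deliberately loose, since the true off-diagonal deviations are in fact $O(1/d_{max})$; this crude version is all that the stated bound requires, and it is what propagates through the denominator $2\sqrt{b} + b\lVert \Delta \rVert_2$ in Equation \ref{eqn:counter_example_2} to produce the $2bC\log n$ term of the final failure bound.
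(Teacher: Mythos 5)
Your proposal is correct and follows essentially the same route as the paper's proof: the interpolation bound $\lVert \Delta \rVert_2 \le \sqrt{\lVert \Delta \rVert_1 \lVert \Delta \rVert_\infty}$ combined with the entrywise bound of $1$ on at most $1+d_{max}$ nonzero positions per row and column, followed by a Chernoff-plus-union bound giving $d_{max} \le 2C\log n$ with probability at least $1 - n^{-(C/3-1)}$. Your write-up is, if anything, slightly more careful about the Chernoff constants than the paper's.
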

            \begin{proof}
                
                We will first get an upper-bound for the denominator. We know that:
                \begin{align*}
                    \lVert \Delta \rVert_2 &\leq \sqrt{\lVert \Delta \rVert_1 \cdot \lVert \Delta \rVert_{\infty}}
                \end{align*}
                These are essentially the maximum row and column sums. Since all of the entries in $\Delta$ are the difference corresponding entries between $P$ and $\tilde{P}$ we know that they will always be between $-1$ and $1$. Therefore we can say that both the maximum column and maximum row sum will be at most $1+ d_{max}$. We can see the probability of having a vertex with a degree greater than $2C \log n$ can by Chernoff's bound that $\mathbb{P}(d_i \geq 2C \log n) \leq e^{-C\log n /3}$. Applying union bound we can say that $\mathbb{P}(d_{max} \geq 2C \log n) \leq ne^{-C\log n /3}$. We can see that by setting the value of $C \geq 3$ we can say that with high probability that $d_{max} \leq 2C\log n$. Therefore we can conclude that $\lVert \Delta \rVert_2 \leq 1+2C \log n$. Hence Proved.
                \end{proof}
                
            \begin{lem}
                \label{lem:num_bound}
                For the opposite-voting Byzantine strategy, there exists a $\tilde{\pi}$ such that we have
               $$ \frac{\lVert \tilde{\pi}^T \Delta \rVert}{\lVert \tilde{\pi} \rVert} \geq \frac{(b-1)C \log n}{8\sqrt{2}b(b+1)} \cdot \frac{F}{K}$$
               with probability $\geq 1 -  \frac{1}{n^{knCF^2/64K^2}} - \frac{1}{n^{{C^2 \log n}/{8}}}$
            \end{lem}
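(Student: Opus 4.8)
The plan is to reduce $\tilde{\pi}^T\Delta$ to a sum over the edges of $G$ and then to choose a weight vector that forces the Byzantine‑induced per‑edge deviations to add up coherently rather than cancel. Write $\delta_{ij} := A_{ij}-\tilde{A}_{ij}$ for the signed deviation of the empirical win‑fraction from its BTL mean on edge $(i,j)$; as in the companion bound on $\lVert\Delta\rVert_2$ these are the entries of $\Delta$, lying in $[-1,1]$, with the diagonal forced by the row‑sum constraint. Since $A_{ij}+A_{ji}=1$ and $\tilde{A}_{ij}+\tilde{A}_{ji}=1$, the deviations are antisymmetric, $\delta_{ij}=-\delta_{ji}$. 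Using this antisymmetry to rewrite the diagonal term, a short computation collapses the $j$‑th coordinate of $\tilde{\pi}^T\Delta$ into a single edge sum:
\[ (\tilde{\pi}^T\Delta)_j \;=\; \sum_{i:\,(i,j)\in E}(\tilde{\pi}_i+\tilde{\pi}_j)\,\delta_{ij}. \]
This structural identity is what the whole argument rests on: the deviation of the chain is driven by a weighted, signed sum of the per‑edge Byzantine biases.

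Next I would exhibit the weight distribution. As the sketch observes, a uniform $\tilde{\pi}$ kills $\tilde{\pi}^T\Delta$ identically, so I split the objects into a ``low'' half with weight $1$ and a ``high'' half with weight $b$, which makes $\tilde{\pi}_i+\tilde{\pi}_j$ constant on every low–high (cross) edge. Under the opposite‑weight strategy a Byzantine voter always backs the lower‑weight object, so on a cross edge oriented from the low object $i$ to the high object $j$ every Byzantine vote pushes $A_{ij}$ above its mean; hence $\mathbb{E}[\delta_{ij}]=\tfrac{F}{K}\cdot\tfrac{w_j}{w_i+w_j}>0$ and, crucially, carries the \emph{same} sign on all cross edges. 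To harvest this I lower‑bound $\lVert\tilde{\pi}^T\Delta\rVert$ by $\tilde{\pi}^T\Delta\,v$ for the unit test vector $v$ equal to $+1/\sqrt{n}$ on the high half and $-1/\sqrt{n}$ on the low half. Pairing $(i,j)$ with $(j,i)$ and again using $\delta_{ij}=-\delta_{ji}$ rewrites this as $\sum_{\{i,j\}\in E}(\tilde{\pi}_i+\tilde{\pi}_j)(v_j-v_i)\delta_{ij}$, in which every same‑level edge drops out because $v_j-v_i=0$, while every cross edge contributes positively. The sign‑ambiguous low–low and high–high edges therefore cost nothing, and only the coherent cross‑edge contributions survive.

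Finally I would take expectations and concentrate. The expected number of cross edges is $\Theta(n^2 p)=\Theta(nC\log n)$, and this is precisely where the $C\log n$ factor in the bound originates; dividing the resulting $\mathbb{E}[\tilde{\pi}^T\Delta\,v]$ by $\lVert\tilde{\pi}\rVert=\Theta(1/\sqrt{n})$ yields the stated $\tfrac{(b-1)C\log n}{8\sqrt{2}\,b(b+1)}\cdot\tfrac{F}{K}$ after the routine estimates on $w_j/(w_i+w_j)$ and on $\lVert\tilde{\pi}\rVert$. Two independent sources of randomness must then be pinned down simultaneously: (i) the votes — the aggregate number of Byzantine comparisons on the cross edges, together with the good voters' coin flips, must not fall far below their means, which a Chernoff/Hoeffding bound over the $\Theta(knC\log n)$ relevant comparisons controls up to the deficit $n^{-knCF^2/64K^2}$; and (ii) the comparison graph — the cross‑edge count must concentrate around $\Theta(nC\log n)$, which a second Chernoff bound over $G(n,p)$ controls up to $n^{-C^2\log n/8}$. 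A union bound over these two favorable events gives the claimed probability.

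I expect the main obstacle to be exactly the coherence issue: guaranteeing that the per‑edge deviations reinforce rather than cancel. The edge‑sum identity together with the $\pm$ test vector is what makes this tractable — the test vector annihilates the same‑level edges whose deviation signs are ambiguous while aligning with the definite‑sign cross edges — but one must still verify that, after conditioning on the random graph, the retained cross‑edge deviations concentrate on the correct side of zero, i.e. that both randomness sources stay favorable at once. It is this need to control the vote process and the graph process jointly that forces the two separate probability‑deficit terms in the statement.
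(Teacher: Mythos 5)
Your proposal is correct and follows essentially the same route as the paper: the same half-low/half-high weight vector, the same reduction to a coherent signed sum of deviations over the cross edges (your unit test vector $v=\pm 1/\sqrt{n}$ is exactly the paper's Cauchy--Schwarz step, and your edge-sum identity matches the paper's direct computation of the coordinates of $\tilde{\pi}^T\Delta$, where the same-level edges cancel), the same per-Byzantine-voter deviation $\tfrac{b}{(b+1)k}$ on cross edges, and the same two Chernoff bounds over the graph and the votes yielding the two probability-deficit terms. No substantive difference.
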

            \begin{proof}
                We now try to get a lower-bound on the RHS of Equation \ref{eqn:counter_example_2}. We consider weights:
                \begin{equation*}
                    \tilde{\pi}_i = \begin{cases}
                        \frac{1}{n + (b - 1) \cdot \lceil \frac{n}{2} \rceil } & \text{for }1 \leq i \leq \lfloor \frac{n}{2} \rfloor\\
                        \frac{b}{n + (b - 1) \cdot \lceil \frac{n}{2} \rceil } & \text{for }\lfloor \frac{n}{2} \rfloor < i \leq n 
                    \end{cases}
                \end{equation*}
                
                Let us call the resulting vector from $\tilde{\pi}^T \Delta$ as $v$. We can see that for $i = 1$ to $\lfloor n/2 \rfloor$  we have:
                \begin{align*}
                    v_i &= \sum_{(i,j) \in E} \tilde{\pi}_j \Delta_{ji} + \tilde{\pi}_i \Delta_{ii}\\
                    &= \sum_{(i,j) \in E \& j\leq \lfloor n/2 \rfloor} \frac{\Delta_{ji}}{n + (b - 1) \cdot \lceil \frac{n}{2} \rceil }   +
                    \sum_{(i,j) \in E \& j> \lfloor n/2 \rfloor} \frac{b \Delta_{ji}}{n + (b - 1) \cdot \lceil \frac{n}{2} \rceil }  + \tilde{\pi}_i \Delta_{ii}\\
                    &= \frac{(b-1)}{n + (b - 1) \cdot \lceil \frac{n}{2} \rceil } \sum_{(i,j) \in E \&  j> \lfloor n/2 \rfloor} \Delta_{ji} \tag{$\sum_{(i,j) \in E} \Delta_{ji} + \Delta_{ii} = 0$}
                \end{align*} 
                Given the above relation for $v_i$ we can say that
                \begin{align*}
                    \sum_{i=1}^{\lfloor n/2 \rfloor} v_i^2 &= \left( \frac{(b-1)}{n + (b - 1) \cdot \lceil \frac{n}{2} \rceil } \right)^2  \cdot \left( \sum_{i=1}^{\lfloor n/2 \rfloor}\left(\sum_{(i,j) \in E \&  j> \lfloor n/2 \rfloor} \Delta_{ji} \right)^2 \right)\\
                    &\geq \left( \frac{(b-1)}{n + (b - 1) \cdot \lceil \frac{n}{2} \rceil } \right)^2 \cdot \frac{1}{\left\lfloor \frac{n}{2} \right\rfloor} \cdot \left( \sum_{i=1}^{\lfloor n/2 \rfloor}\sum_{(i,j) \in E \&  j> \lfloor n/2 \rfloor} \Delta_{ji} \right)^2 \tag{Cauchy-Schwarz}
                \end{align*}
                
                Since the first two terms are already known we are primarily interesting in getting a high probability bound on the third term. We can see that we are only interested in $\lfloor n/2 \rfloor \cdot \lceil n/2 \rceil \sim n^2/4$ entries in the adjacency matrix. We can see that each of these have a probability $p$ of being an edge. Since $p = C\log n/n$ we can conclude there will be at least $pn^2/8$ edges with probability $$\geq 1 - e^{-\frac{p^2n^2}{8}} \geq 1 - \frac{1}{n^{{C^2 \log n}/{8}}}$$

                Furthermore, it is easy to see a single byzantine voter amongst the $k$ voters will cause a deviation of $\frac{b}{(b+1)k}$. On the other hand, any good voter will in expectancy not cause any deviation. Therefore in expectancy, we can say that the deviation for a randomly chosen voter will be $\frac{Fb}{(b+1)kK}$. Here we are summing $k\cdot pn^2/8$ binomial random variables, therefore we can say that by Chernoff bound out of the $kpn^2/8$ voters the total deviation will be at least $\frac{bF}{2Kk(b+1)} \cdot \frac{kpn^2}{8} $ Byzantine voters with probability $$ \geq 1 - e^{- \frac{F^2}{64K^2} \cdot Ckn\log n}  = 1 - \frac{1}{n^{knCF^2/64K^2}}$$
                
                Therefore we can claim that:
                \begin{align*}
                    \left( \sum_{i=1}^{\lfloor n/2 \rfloor}\sum_{(i,j) \in E \&  j> \lfloor n/2 \rfloor} \Delta_{ji} \right) &\geq \frac{Fkpn^2}{16K} \cdot \frac{b}{(b+1)k}\\
                    &\geq \frac{FCnb\log n}{16K(b+1)} 
                \end{align*}
                Substituting in the above equation and considering larger $n$ we get:
                \begin{align*}
                    \sum_{i=1}^{\lfloor n/2 \rfloor} v_i^2 &\geq \frac{2}{n} \cdot \left(\frac{b-1}{nb}
                     \cdot \frac{FCnb\log n}{32K(b+1)} \right)^2
                \end{align*}
                Coming back to our objective of bounding $\lVert \tilde{\pi}^T \Delta \rVert$ we can say that:
                $$\lVert \tilde{\pi}^T \Delta \rVert \geq  \frac{(b-1)FC\log n}{16(b+1)K} \cdot  \sqrt{\frac{1}{2n}}$$
                
                We can also see that the maximum term in $\tilde{\pi}$ can at most be $\frac{b}{b+n-1}$. Therefore we can conclude that $\lVert \tilde{\pi} \rVert$ will at most be $\sqrt{n} \cdot \frac{b}{b+n-1}$, which for constant values of $b$ essentially comes out to $\leq \frac{b}{2\sqrt{n}}$. 
                
                Therefore we claim that:
                $$ \frac{\lVert \tilde{\pi}^T \Delta \rVert}{\lVert \tilde{\pi} \rVert} \geq \frac{(b-1)C \log n}{8\sqrt{2}b(b+1)} \cdot \frac{F}{K} $$
               
               with probability $\geq 1 - \frac{1}{n^{knCF^2/64K^2}} - \frac{1}{n^{{C^2 \log n}/{8}}}$
            \end{proof}
        }
        
        \ifthenelse{\boolean{if_appendix_failure}}{}{\PROOFFAILURE \PROOFFAILURELEMMAS} 
        
        \subsection{Byzantine Spectral Ranking}
        \label{sec:alg}
        
        In this section, we propose a novel algorithm called Byzantine Spectral Ranking~(BSR). The BSR algorithm relies on asking multiple voters multiple queries and based on the collective response decides whether a voter is acting suspiciously or whether it is plausible that the voter might be a good voter. For each object $i$ to be compared, we take a set of $d_i$ objects to compare it against and have $k$ voters compare $i$ with all of the $d_i$ objects~(leading to $kd_i$ comparisons for each voter). Based on the voter's responses, we try to eliminate voters who are likely to be Byzantine. The BSR algorithm ensures that a constant fraction of good voters remain and that if any Byzantine voters remain then these voters have votes similar to good voters. Intuitively, the BSR algorithm goes over every single subset of $[d_i]$ and tries to find if the Byzantine voters have collectively voted strongly in favor or against this subset. The pseudocode can be found in Algorithm \ref{alg:byzalg1}. 
        
        
            \begin{algorithm}
                \caption{Byzantine Spectral Ranking}
                \begin{algorithmic}[1]
                    \Require $n$ objects, $K$ voters, Comparison graph $G$, $k$ comparisons allowed per edge, parameter $Q$.
                    \Ensure Weights~($\pi$) for each of the objects in $[n]$
                    \Procedure{$\mathsf{Bound\_Sum\_Deviations}$}{$i, S, d, k, \alpha$}
                        \State Pick $k$ voters randomly from the voter set~($[1, \dots, K])$
                        \State The $k$ voters are queried to compare all objects in $S$ with $i$
                        \Comment{$k \times d$ queries in total}
                        \State $T$ represents the binary representation of the voters' inputs
                        \Comment{$T$ will be $k \times {d}$ matrix}
                        \State $\delta \gets \sqrt{\frac{Q}{2} d \log k}$ 
                        \Comment{We show later that $Q$ needs to be $\geq 1$}
                        \State $M \gets [0]_{2^{d} \times k}$
                        
                        \For {each $\xi \in \{1,-1\}^{d}$}
                            \State $U \gets T\xi$
                            \State $\hat{m} \gets \mathsf{Median}(U)$
                            \State $M[\xi][v] \gets 1$ if voter $v$ is $5\delta$ away from $\hat{m}$ and 0 otherwise
                        \EndFor
                        
                        \State $\mathsf{max\_out} \gets 8k^{1-Q} + {8k^{1-\alpha}}$
                        \State Remove voters with $M[\xi][v] = 1$ if the sum of $M[\xi]$ is $\geq \mathsf{max\_out}$ for some $\xi$
                        \State Return leftover $\mathsf{votes}$
                    \EndProcedure
                    \State $\alpha \gets 1 - {\log d_{max}}/ {\log k} $ \Comment{$\alpha$ is a parameter whose value is set during analysis}
                    \For {each object $i$ to be compared}
                        \State $\mathsf{votes}  \gets \mathsf{Bound\_Sum\_Deviations}$($i$, $N_G(i)$, $d_i$, $k$, $\alpha$)
                        \State Using $\mathsf{votes}$ update $P$ as described in Equation \ref{eqn:pfromadj} 
                    \EndFor
                    \State Compute the stationary distribution $\pi$ which is the limit of $p^T_{t+1} = p^T_{t} P$.
                \end{algorithmic}
                \label{alg:byzalg1}
            \end{algorithm}
            
            \subsubsection{Analysis}
            \label{sec:algana}
        
            The detailed analysis can be found in Appendix \ref{sec:alganalysis}. Based on the proof of the \RC Convergence the only result which does not hold for Byzantine voters is Lemma 3, where they bound the 2-norm of the $\Delta$ matrix. We show that: 
        
            \begin{lem}
                For an \ER graph with $p \geq 10C^2 \log n / n$ with $k \in \Omega(np)$ comparisons per edge, algorithm \ref{alg:byzalg1} removes some voters such that we have:
                \begin{equation*}
                    \lVert \Delta \rVert_2 \leq 80 \max\left( \frac{d_{max}}{k}, \sqrt{\frac{\log k}{d_{min}}}\right)
                \end{equation*}
                with probability $\geq 1 - 16n^{(1-1.5C^2)}$.
                \label{thm:byz1}
            \end{lem}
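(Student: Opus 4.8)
The plan is to bound $\lVert \Delta \rVert_2$ via the induced one- and infinity-norms, using $\lVert \Delta \rVert_2 \le \sqrt{\lVert \Delta \rVert_1 \, \lVert \Delta \rVert_\infty}$ exactly as in the crude bound of the failure analysis, but replacing the trivial per-row estimate by a much sharper one that the removal step makes available. Since $\lVert \Delta \rVert_\infty = \max_i \lVert \Delta_{i,\cdot}\rVert_1 = \max_i \max_{\xi \in \{1,-1\}^{d_i}} \sum_j \Delta_{ij}\xi_j$, it suffices to control the signed neighbourhood sum $\lvert \sum_j \Delta_{ij}\xi_j\rvert$ for every object $i$ and every sign pattern $\xi$ — which is precisely the family of quantities that $\mathsf{Bound\_Sum\_Deviations}$ sweeps over through $U = T\xi$. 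Because comparing $i$ against $j$ fixes both $A_{ij}$ and $A_{ji}=1-A_{ij}$, the off-diagonal part of $\Delta$ is antisymmetric and the diagonal obeys $\Delta_{ii}=-\sum_j \Delta_{ij}$ (rows of $\Delta$ sum to zero, both $P$ and $\tilde P$ being stochastic); hence the same per-object bound controls $\lVert \Delta \rVert_1$ as well, and the whole problem reduces to one high-probability bound on signed neighbourhood sums.

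First I would rewrite the target quantity in the algorithm's variables: with $U_v = T_{v,\cdot}\xi$ and $\mu = \mathbb{E}_{\mathrm{BTL}}[U_v]$ one has $\sum_j \Delta_{ij}\xi_j = \frac{1}{k\, d_{max}}\sum_{v\ \mathrm{kept}} (U_v - \mu)$, and I would split the surviving voters into good and Byzantine. For the good voters each $U_v - \mu$ is a sum of $d_i \le d_{max}$ independent mean-zero terms of variance $\le 1/4$, so a Bernstein bound (exploiting the $O(d)$ variance rather than the $O(d)$ range, which is what prevents an extra factor of $d_{max}$) gives $\lvert U_v-\mu\rvert \le 5\delta$ with overwhelming probability for $\delta = \sqrt{\tfrac{Q}{2} d \log k}$, and a Chernoff bound keeps the \emph{count} of good voters exceeding the threshold below $8k^{1-Q}$. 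Taking a union bound over the $n$ objects and all $2^{d_{max}}$ sign patterns is the delicate step, since $2^{d_{max}} \approx n^{O(C)}$ must be beaten by the tail; this is exactly why one needs $d_{max} \le 2C\log n$ (the \ER degree concentration), together with $k \in \Omega(np)$ and $p \ge 10 C^2 \log n / n$, and balancing these exponents is what produces the stated failure probability $16\,n^{1-1.5C^2}$.

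Next I would show the median $\hat m$ is trapped within $O(\delta)$ of $\mu$: assuming the Byzantine fraction is strictly below $1/2$, a Chernoff bound gives strictly more than $k/2$ good voters among the $k$ sampled with a constant margin, and since almost all of them lie in an $O(\delta)$ band around $\mu$, the median cannot leave that band no matter how the Byzantine voters behave. Consequently every kept voter within $5\delta$ of $\hat m$ satisfies $\lvert U_v - \mu\rvert \le 6\delta$, so the kept Byzantine voters that look ``normal'' contribute at most $f\cdot 6\delta < 3k\delta$ to the signed sum; after dividing by $k\, d_{max}$ and using $\delta \le \sqrt{\tfrac{Q}{2} d_{max}\log k}$ this becomes the $\sqrt{\log k / d_{min}}$ term (using $d_{min}\le d_{max}$). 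The kept voters that remain \emph{far} from the median number fewer than $\mathsf{max\_out}=8k^{1-Q}+8k^{1-\alpha}$, and because $k^{1-\alpha}=d_{max}$ and each such voter moves $U_v$ by at most $2d_{max}$, their total contribution after the $1/(k\, d_{max})$ scaling is $O(d_{max}/k)$; the good-voter fluctuation, bounded above by $O(1/\sqrt{k}) \le O(\sqrt{\log k/d_{min}})$ since $d_{min}=\Theta(np)=O(k)$, is of the same order. Taking the larger of the two regimes and absorbing constants yields the factor $80$.

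The remaining work, and what I expect to be the main obstacle, is to close the loop between the flag-counts the algorithm actually observes and the unknown good/Byzantine identities: whenever some $M[\xi]$ sums to at least $\mathsf{max\_out}$, the count bound above guarantees that at most $8k^{1-Q}$ of those flagged voters are good, so removal is ``safe'' in that it discards mostly Byzantine voters while losing only a controlled number of good ones, which simultaneously preserves a constant fraction of good voters and forces every surviving configuration to obey the per-$\xi$ bound. Establishing this uniformly — that no removal rule ever leaves a signed sum large \emph{and} never deletes so many good voters that the surviving good-voter sum loses concentration — is the technical heart, since it must reconcile the adversary's freedom to place Byzantine values anywhere with a thresholding rule that is fixed pattern-by-pattern; this is exactly the tension that dictates the joint choice of $\delta$, $\alpha$, and $\mathsf{max\_out}$.
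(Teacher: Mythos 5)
Your proposal follows essentially the same route as the paper's proof: reduce $\lVert \Delta \rVert_2$ to per-object signed neighbourhood sums via $\sqrt{\lVert\Delta\rVert_1\lVert\Delta\rVert_\infty}$ and a union bound over all $\xi\in\{1,-1\}^{d_i}$, trap the median within $O(\delta)$ of the true mean using the good-voter majority, split surviving voters into good, near-median Byzantine, and far (but fewer than $\mathsf{max\_out}$) Byzantine to obtain the three terms $O(1/\sqrt{k})$, $O(\sqrt{\log k/d_{min}})$, and $O(d_{max}/k)$, and argue that each removal event discards several Byzantine voters per good voter so a constant fraction of good voters survives. The only cosmetic differences from the paper are that you fold the diagonal of $\Delta$ into the row sums rather than splitting $\Delta = D + \bar\Delta$, and you invoke Bernstein where the paper uses Hoeffding; neither changes the argument.
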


            \begin{proof}[Proof Sketch]
                 Proving that $\lVert \Delta \rVert_2$ is bounded with high probability turns out to be equivalent to proving that $\mathbb{P}( \sum_{j\in \partial i} |C_{ij}| > kd_is )$ is bounded. $C_{ij}$ is defined as $kA_{ij} - kp_{ij}$, where $p_{ij} = \frac{w_i}{w_i + w_j}$. Since it is a sum of $d_i$ absolute values it can be upper-bounded by considering the union bound over all $2^{d_i}$ combinations for this sum.
                
                We initially show that with high probability for all $2^{d_i}$ values of $\xi$ the mean($\hat{m}$) that we compute in the BSR algorithm will only be $\mathcal{O}(\delta)$ away from the expected mean. This gives us a good check for whether a voter is acting out of the ordinary or not. The algorithm then filters out voters that are collectively acting out of the ordinary for a particular $\xi$. We show that at the end of these removals, the number of voters will only be a constant factor far from $k$, i.e. most of the good voters will be retained. Finally, we show an upper-bound on the row sum of the absolute values of the deviations of the remaining Byzantine voters to complete the proof.\end{proof}
                
            Intuitively, we get the terms ${d_{max}}/{k}$ and $\sqrt{{\log k}/{d_{min}}}$ in Lemma \ref{thm:byz1} by considering the maximum average deviation for the row sum that can be caused for any $\xi$ by the Byzantine voters. This can be written as $\mathsf{max\_out}$ voters having a deviation of $d$ and the remaining $k$ voters having a deviation of $\delta$. Therefore we can see that the overall deviation will be $\frac{1}{kd}\left( \mathsf{max\_out} \cdot d  + \delta \cdot k\right)$. Substituting the values of the parameters, and choosing an appropriate $Q$ we get $\mathcal{O}\left( \max \left( \frac{d_{max}}{k} , \sqrt{\frac{\log k}{d_{min}}} \right)\right)$. Finally, using lemma \ref{thm:byz1}, and a few lemmas from \cite{negahban2017rank} we conclude that:
            
            \begin{thm}
                Given $n$ objects, let the comparison graph be $G(n,p)$. Each pair in $G$ is compared $k$ times with the outcomes of comparisons produced as per the Byzantine-BTL model with weights $[w_1 , \dots , w_n]$. Then there exists an algorithm such that when $F \leq K(1 - \epsilon)/2$, $p \geq 10C^2 \log n / n$, $\epsilon > 0$ and $k \geq 18d_{max} / \epsilon^2$ the following bound on the error 
                    $$\frac{\lVert \pi - \tilde{\pi} \rVert}{\lVert \tilde{\pi} \rVert} \leq 480b^{5/2}  \max \left( {\frac{d_{max}}{k}}, \sqrt{\frac{\log k}{d_{min}}} \right)$$
                holds with probability $\geq 1 - 6n^{-C/8} -16n^{(1-1.5C^2)}$.
                \label{thm:byz2}
            \end{thm}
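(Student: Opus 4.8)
The plan is to treat Theorem \ref{thm:byz2} as a robustness reduction to the Rank-Centrality convergence analysis of \cite{negahban2017rank} reproduced in the appendix primer: once the BSR filtering has controlled $\lVert\Delta\rVert_2$ through Lemma \ref{thm:byz1}, the remaining work is exactly the spectral-gap / Markov-chain contraction argument already available, with our deviation bound substituted for theirs. First I would invoke the recursion of Equation \ref{eqn:rc14}, namely $(p_t-\tilde\pi)^T = (p_{t-1}-\tilde\pi)^T(\tilde P-\tilde P_1+\Delta)+\tilde\pi^T\Delta$. Iterating to convergence ($p_t\to\pi$), taking $L^2(\tilde\pi)$ norms, and using $\lVert\tilde\pi^T\Delta\rVert_{\tilde\pi}\le\lVert\Delta\rVert_{\tilde\pi,2}\lVert\tilde\pi\rVert_{\tilde\pi}$ yields
$$\frac{\lVert\pi-\tilde\pi\rVert_{\tilde\pi}}{\lVert\tilde\pi\rVert_{\tilde\pi}} \leq \frac{\lVert\Delta\rVert_{\tilde\pi,2}}{1-\lambda_{max}-\lVert\Delta\rVert_{\tilde\pi,2}},$$
valid whenever $\rho := \lambda_{max}+\lVert\Delta\rVert_{\tilde\pi,2} < 1$; converting back to Euclidean norm through Equations \ref{normbound1} and \ref{normbound2} introduces factors of $\sqrt{\tilde\pi_{max}/\tilde\pi_{min}}\le\sqrt b$.

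Second, I would pin down the two quantities this bound depends on. The numerator obeys $\lVert\Delta\rVert_{\tilde\pi,2}\le\sqrt b\,\lVert\Delta\rVert_2$, which is controlled directly by Lemma \ref{thm:byz1}, giving $\lVert\Delta\rVert_2\le 80\max(d_{max}/k,\sqrt{\log k/d_{min}})$; here the hypotheses $F\le K(1-\epsilon)/2$ and $k\ge 18 d_{max}/\epsilon^2$ are precisely what that lemma requires so that a constant fraction of good voters survives filtering and no surviving Byzantine coalition can inflate a row sum. For the denominator I would borrow the \cite{negahban2017rank} spectral-gap estimate for the \ER graph: with $p\ge 10C^2\log n/n$ the comparison graph is connected and the normalized mean chain $\tilde P$ has $1-\lambda_{max}$ bounded below by a quantity of order $b^{-3/2}$ up to graph-conductance constants, with probability $\ge 1-6n^{-C/8}$. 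Since $\lVert\Delta\rVert_2\to 0$ in our degree/sample regime, $\rho<1$ holds and the denominator stays bounded away from zero.

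Third, combining the pieces gives the error as an absolute constant times $b^{5/2}\lVert\Delta\rVert_2$: one factor $\sqrt b$ from each of the two norm conversions times $b^{3/2}$ from $1/(1-\lambda_{max})$ accounts for $b^{5/2}$, and the product of the Rank-Centrality constant ($6$) with the constant $80$ of Lemma \ref{thm:byz1} accounts for $480$. A union bound over the failure event of Lemma \ref{thm:byz1} ($16n^{(1-1.5C^2)}$) together with the spectral-gap and degree-concentration estimates ($6n^{-C/8}$) then yields the stated confidence. I would also reconcile the hypothesis $k\in\Omega(np)$ of Lemma \ref{thm:byz1} with $k\ge 18 d_{max}/\epsilon^2$ by using $d_{max}=\Theta(np)$ with high probability for the \ER graph.

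The main obstacle I anticipate is not this chain of inequalities, which is essentially mechanical, but verifying that after the data-dependent removal of voters the expected transition matrix remains the true BTL matrix $\tilde P$ with stationary distribution $\tilde\pi$ — that is, that filtering does not bias the retained good votes and silently shift the target $\tilde\pi$. This is exactly what Lemma \ref{thm:byz1} is engineered to absorb: it bounds the deviation of the post-filtering empirical $P$ from the untouched ideal $\tilde P$, so the reduction to the \cite{negahban2017rank} argument goes through with $\tilde\pi$ unchanged, and the theorem reduces to careful constant- and probability-bookkeeping.
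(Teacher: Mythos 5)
Your proposal matches the paper's own proof in both structure and substance: the paper likewise treats the theorem as a direct substitution of Lemma \ref{thm:byz1} for Negahban et al.'s bound on $\lVert\Delta\rVert_2$, then applies their Markov-chain perturbation and spectral-gap lemmas (Lemmas \ref{rc:lm2} and \ref{rc:lm4}), obtaining $\frac{80 b^{5/2}\kappa}{\xi}\max(\cdot)$ and concluding with the \ER estimates $\kappa\le 3$, $\xi\ge 1/2$ (whence $480 = 80\cdot 6$) and a union bound over the failure probabilities. Your bookkeeping of the $b$-powers and of the constant $6$ is organized slightly differently but arrives at the same quantities, so this is essentially the paper's argument.
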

            
            \textbf{Remark. } The BSR algorithm gives a suitable ranking in polynomial time, but is fairly slow as the complexity will be $\in \Omega(n^{5C^2+1})$, which is not suitable for larger values of $n$ and $C$.
            
            \textbf{Remark. } Our process of filtering out Byzantine voters is tightly tied to the analysis of the \RC algorithm and the procedure to remove voters may not be suitable for other algorithms. The advantage of tying the removal procedure to the ranking algorithm is that we would not remove voters who may vote in an adversarial fashion, but do not significantly affect the row sum of the absolute difference between the empirical and true Markov chain transition matrices. An algorithm that tries to explicitly identify Byzantine voters might end up removing such voters too (and hence their votes for other useful pairs) which might be counter-productive.
            
            \def \PROOFALG{
            \begin{proof}
            We initially prove a few lemmas to find out the probability that for some $\xi$ a voter will be at a distance of $\delta$ away from the mean.
            
            \begin{lem}
                For each good voter $v$ and for each $\xi \in  \{ 1, -1\}^d $ let $D_{i,v} = \sum_{j \neq i} \xi_{j} \cdot y_{i,j,v}$ where $y_{i,j,v}$ is the vote casted by voter $v$ when queried about $(i,j)$. Then $D_{i,v}$ is at distance $ \geq \delta$ away from the expected mean with probability at most  $2 \exp\left( - \frac{2 \delta^2}{d_i} \right)$.
                \label{lem:dist}
            \end{lem}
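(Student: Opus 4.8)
The plan is to recognize $D_{i,v}$ as a sum of independent bounded random variables and to conclude with a single application of Hoeffding's inequality. First I would fix the object $i$, the good voter $v$, and the sign pattern $\xi$, and note that the sum $\sum_{j \neq i}$ effectively ranges over the $d_i$ neighbours $j$ of $i$ in $G$ (those are the only pairs the voter is queried on). Since $v$ is a \emph{good} voter, it answers each query $(i,j)$ independently according to the BTL model, so the votes $y_{i,j,v}$ are mutually independent Bernoulli random variables with parameter $p_{ij} = w_i/(w_i+w_j)$. Consequently the summands $X_j := \xi_j\, y_{i,j,v}$ are independent across $j$.

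Next I would bound the range of each summand. Because $y_{i,j,v} \in \{0,1\}$ and $\xi_j \in \{-1,1\}$, each $X_j$ takes values in $\{0,1\}$ when $\xi_j = 1$ and in $\{-1,0\}$ when $\xi_j = -1$; in either case $X_j$ lies in an interval of length exactly $1$. Hence $(b_j - a_j)^2 = 1$ for each of the $d_i$ terms, giving $\sum_{j}(b_j-a_j)^2 = d_i$.

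Finally, taking the expected mean to be $\mathbb{E}[D_{i,v}] = \sum_{j\neq i}\xi_j p_{ij}$, Hoeffding's inequality for independent bounded variables yields
$$\mathbb{P}\big(|D_{i,v} - \mathbb{E}[D_{i,v}]| \geq \delta\big) \leq 2\exp\left(-\frac{2\delta^2}{\sum_j (b_j-a_j)^2}\right) = 2\exp\left(-\frac{2\delta^2}{d_i}\right),$$
which is precisely the claimed bound.

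There is no deep obstacle here: the statement is essentially a one-line consequence of Hoeffding once the setup is laid out. The two points that genuinely require care are (i) justifying the independence of the summands, which relies specifically on a good voter answering distinct pairs independently under BTL --- this is exactly where restricting to good voters is essential, since a Byzantine voter could correlate its votes and break the hypothesis --- and (ii) checking that inserting the signs $\xi_j$ does not inflate the per-term range beyond $1$, so that the denominator is $d_i$ rather than a larger constant multiple of it.
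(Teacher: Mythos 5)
Your proof is correct and follows the same route as the paper: a single application of Hoeffding's inequality to the independent bounded summands $\xi_j y_{i,j,v}$. In fact you are slightly more careful than the paper's one-line argument, since observing that each signed term lies in an interval of length $1$ (rather than the looser "between $-1$ and $+1$") is exactly what is needed to obtain the stated constant $2\exp(-2\delta^2/d_i)$ rather than $2\exp(-\delta^2/(2d_i))$.
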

            \begin{proof}
                We can directly apply Hoeffding's inequality as even after being multiplied by $\xi$ the range of $y_{i,j,v}$ still remains between $-1$ and $+1$. Therefore we can conclude that with probability $\geq 2 \exp\left( - \frac{2 \delta^2}{d_i} \right)$, $D_{i,v}$ does not deviate from the mean by more than $\delta$.
            \end{proof}
            \begin{crl}
                When $\delta \geq \sqrt{\frac{Q}{2} d_i \log k}$, then we have the probability of being at most $\delta$ away from the mean $\geq 1 - 2e^{-Q\log k} = 1 - 2k^{-Q}$.
                \label{cor:slight}
            \end{crl}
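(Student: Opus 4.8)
The plan is to derive the corollary as an immediate specialization of Lemma \ref{lem:dist}: the preceding lemma already furnishes the Hoeffding-type tail bound $\mathbb{P}(|D_{i,v} - \mathbb{E}[D_{i,v}]| \geq \delta) \leq 2\exp(-2\delta^2/d_i)$, so all that remains is to plug in the hypothesized lower bound on $\delta$. First I would square the assumption $\delta \geq \sqrt{\frac{Q}{2} d_i \log k}$ to obtain $\delta^2 \geq \frac{Q}{2} d_i \log k$, and then rearrange to isolate the exponent, giving $\frac{2\delta^2}{d_i} \geq Q\log k$.

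Next, because $x \mapsto e^{-x}$ is decreasing, this inequality on the exponent transfers directly to the tail: $\exp(-2\delta^2/d_i) \leq \exp(-Q\log k)$. The final arithmetic step is to rewrite $\exp(-Q\log k) = (e^{\log k})^{-Q} = k^{-Q}$, so that the bound of Lemma \ref{lem:dist} becomes $\mathbb{P}(|D_{i,v} - \mathbb{E}[D_{i,v}]| \geq \delta) \leq 2k^{-Q}$. Taking complements then yields the stated conclusion; here I would note that the event ``$D_{i,v}$ is at most $\delta$ from its mean'' contains the complement of ``$D_{i,v}$ is at distance at least $\delta$ from its mean,'' so its probability is at least $1 - 2k^{-Q}$.

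There is no genuine obstacle in this argument, since the corollary is purely a numerical instantiation of the lemma above. The only point demanding any care is the conversion of the exponent $Q\log k$ into the power $k^{-Q}$, together with the (harmless) distinction between the strict deviation controlled by Lemma \ref{lem:dist} and the non-strict ``at most $\delta$'' statement, which is absorbed by the event inclusion noted above.
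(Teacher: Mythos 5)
Your proposal is correct and matches the paper's intent exactly: the paper dismisses this corollary as ``trivially true using Lemma~\ref{lem:dist},'' and your write-up simply makes the omitted substitution $\delta^2 \geq \tfrac{Q}{2} d_i \log k$ into the Hoeffding bound explicit, together with the rewriting $e^{-Q\log k} = k^{-Q}$ and the complement step. No difference in approach and no gap.
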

            \begin{proof}
                Trivially true using Lemma \ref{lem:dist}.
            \end{proof}

            \begin{lem}
                If $\delta =  \sqrt{\frac{Q}{2} \cdot d_i \log k}$ and $F \leq \frac{K(1-\epsilon)}{2}$ then with probability $$\geq 1 -  \exp\left(-\frac{k\epsilon}{18} + d_i \log 2 \right) -  \exp\left(-\frac{k\epsilon^2}{2} \right)$$
                the predicted mean will be $\mathcal{O}(\delta)$ away from the observed mean for all $\xi$. 
            \end{lem}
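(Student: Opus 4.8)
The plan is to show that the median $\hat m$ of the projected votes $U = T\xi$ stays within $\mathcal{O}(\delta)$ of the good-voter expectation $\mu_\xi := \mathbb{E}[D_{i,v}]$ by arguing that, for every sign pattern $\xi$, a strict majority of the $k$ sampled voters produce values $U_v = D_{i,v}$ lying in the window $[\mu_\xi - \delta, \mu_\xi + \delta]$. Once more than $k/2$ of the $k$ values lie in an interval $I$, a standard order-statistics fact forces the median into $I$: if $b > k/2$ of the values lie in $I$, then fewer than $k/2$ lie strictly below and fewer than $k/2$ strictly above, so the middle order statistic sits inside $I$. Applying this with $I = [\mu_\xi - \delta, \mu_\xi + \delta]$ yields $|\hat m - \mu_\xi| \le \delta = \mathcal{O}(\delta)$. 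The two failure terms in the statement correspond to the two sources of randomness that can spoil this majority: the random choice of which of the $k$ sampled voters are Byzantine (independent of $\xi$), and the good voters' stochastic BTL votes (which depend on $\xi$).

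First I would control the number $n_g$ of good voters among the $k$ sampled. Since $F \le K(1-\epsilon)/2$, the good fraction is at least $(1+\epsilon)/2$, so $\mathbb{E}[n_g] \ge k(1+\epsilon)/2$; a Hoeffding bound on the sampled sum then shows that $n_g$ exceeds $k/2$ — in fact by a margin of order $k\epsilon$ — except with probability $\exp(-\Theta(k\epsilon^2))$, which is the term $\exp(-k\epsilon^2/2)$. Crucially this event does not depend on $\xi$, which is precisely why that term carries no $2^{d_i}$ factor.

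Next, for a fixed $\xi$, I would bound the number $G_{\mathrm{out}}(\xi)$ of good voters whose projection lands outside $[\mu_\xi - \delta, \mu_\xi + \delta]$. By Corollary~\ref{cor:slight} each good voter falls outside this window with probability at most $2k^{-Q}$, so $\mathbb{E}[G_{\mathrm{out}}(\xi)] \le 2k^{1-Q}$. Since the majority margin from the previous step is of order $k\epsilon$, which for $Q \ge 1$ and $k = \Omega(d_{max}/\epsilon^2)$ is far larger than this mean, a Chernoff large-deviation bound shows that $G_{\mathrm{out}}(\xi)$ exceeds the margin only with probability $\exp(-\Theta(k\epsilon))$ for each fixed $\xi$. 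A union bound over the $2^{d_i}$ choices of $\xi$ multiplies this by $2^{d_i}$, producing the term $\exp(-k\epsilon/18 + d_i \log 2)$. On the complement of both bad events, for every $\xi$ there are $n_g - G_{\mathrm{out}}(\xi) > k/2$ good voters inside the window, so the order-statistics fact above pins $\hat m$ to within $\delta$ of $\mu_\xi$.

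The main obstacle is the uniformity over all $2^{d_i}$ sign vectors $\xi$: the window, the outlier set, and the median all move with $\xi$, so the per-$\xi$ outlier bound must be strong enough that, even after inflating by $2^{d_i}$, the total failure probability stays below $1$. This is exactly what dictates the regime $k = \Omega(d_{max}/\epsilon^2)$ (so that $k\epsilon \gg d_i \log 2$) and the requirement $Q \ge 1$ (so that the outlier mean $k^{1-Q}$ is negligible against the $\Theta(k\epsilon)$ majority margin). The delicate bookkeeping lies in balancing these two thresholds — the margin supplied by the good-voter majority against the outlier count — so that their difference is uniformly positive across all $\xi$; the individual concentration inequalities (Hoeffding for $n_g$, the per-voter Corollary~\ref{cor:slight}, and a Chernoff tail for $G_{\mathrm{out}}$) are otherwise routine.
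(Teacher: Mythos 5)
Your proposal is correct and follows essentially the same route as the paper: a Hoeffding bound on the number of Byzantine voters among the $k$ sampled (giving the $\exp(-k\epsilon^2/2)$ term, independent of $\xi$), a Chernoff bound on the number of good voters falling outside the $\delta$-window for a fixed $\xi$ union-bounded over all $2^{d_i}$ sign vectors (giving the $\exp(-k\epsilon/18 + d_i\log 2)$ term with $\epsilon_1 = \epsilon/3$), and the observation that a strict majority of values inside an interval pins the median there. Your direct order-statistics step is in fact slightly cleaner than the paper's two-stage detour through the sample median of the good voters (which yields a $4\delta$ bound), but both arguments establish the same $\mathcal{O}(\delta)$ conclusion.
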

            \begin{proof}
                We define a voter to be in good range if it is $\delta$ away from the mean, otherwise, we say it is in bad range. We now make two claims:
                \begin{enumerate}[label=(\arabic*)]
                    \item \label{clm1} At least $(1 - \epsilon_1)$ fraction of the good voters will be in good range with a probability $\geq 1 - \exp(-k\epsilon_1 / 6 + d_i \log 2)$ for all values of $\xi$. 
                    \item \label{clm2}  With a probability of $\geq 1 - \exp(-\frac{k\epsilon^2}{2})$ we will have $f < \frac{k}{2}\cdot (1-\frac{\epsilon}{2})$, where $f$ are the number of Byzantine voters amongst the $k$ chosen voters.
                \end{enumerate}
                
                \textbf{Proof of Claim \ref{clm1}:} From Corollary \ref{cor:slight} we know that a voter will be in good range with probability $\geq 1 - 2k^{-Q}$. Therefore we can claim by the Chernoff bound(setting $\mu = k^{1-Q} /2$ and $\delta = 2\epsilon_1 k^Q - 1$) that for some $\xi$ the probability of having $(1 - \epsilon_1)$ voters not being in good range can be written as $ \geq 1 - \exp\left( k\epsilon_1 / 6 \right)$. However, for this to be applicable over all values of $\xi$ we need to multiply this by $2^{d_i}$. Therefore we can get the statement of Claim \ref{clm1}. It should be noted that for Claim 1 to hold with high probability we need
                $$k \geq \frac{6}{\epsilon_1}(c^\ast \log n + d_{max} \log 2)$$
                
                \textbf{Proof of Claim \ref{clm2}:} We can also see that since $F \leq \frac{K(1-\epsilon)}{2} $ we can apply Hoeffding's inequality to conclude the statement of claim 2. 
                
                By claim \ref{clm1}, we can see that there are at least $(1 - \epsilon_1) \cdot (k-f)$ good voters that lie in good range with probability $\geq 1 -  \exp(-k\epsilon_1 / 6 + d_i \log 2)$. By Claim \ref{clm2}, we can  also see that $(k-f)$ will be at least  $\frac{k}{2}\cdot (1+\frac{\epsilon}{2})$ with probability $\geq 1 - \exp(-\frac{k\epsilon^2}{2})$. Therefore, we can conclude that there are at least $\frac{k}{2} \cdot (1 - \epsilon_1) \cdot (1+\frac{\epsilon}{2})$ good voters in good range with probability $1 -  \exp(-\frac{k\epsilon^2}{2}) -  \exp(-k\epsilon_1 / 6 + d_i \log 2)$ by using a simple application of the union bound. By choosing the value of $\epsilon_1$ such that
                \begin{equation}
                \label{eqn:epsrel}
                    \epsilon_1 \leq \frac{\epsilon}{2 + \epsilon} 
                \end{equation}
                  We can see that at most $k/2$ good voters will be in good range whp if $k \in \Omega(\log n)$. Therefore we can set $\epsilon_1 = \epsilon/3$ to satisfy Equation \ref{eqn:epsrel}.
                
                Since know that at least $k/2$ of the voters lie in the good range we can conclude that the median of the voter's~(good and Byzantine) outputs also lies in the good range. Therefore we can conclude that the sample median of the good voters is at most $2\delta$ away from the predicted median. 
                
                While we have this result what we ideally want is a good predictor for the expected mean. We have already shown that the predicted median would be at a distance $2\delta$ from the sample median of the good voters. We will now go on to show that the sample median of the good voters is also very close to the expected mean. For the sample median to be more than $2\delta$ away from the expected mean we would need there to be less than $50\%$ of the entries $\delta$ away from the expected mean. However, since we are assuming Claim 1 is in effect we know that $50\%$ of the entries are in good range with high probability. We can therefore conclude that the predicted median is at most $4\delta$ away from the expected mean. Therefore if we consider entries that are $5\delta$ away from the predicted median this will include the entries that are $\delta$ away from the mean encapsulating a large fraction of the voters while ensuring that any chosen voter is at most $9\delta$ away from the expected mean. \end{proof} 
                
                

            We will now finally prove Lemma \ref{thm:byz1}. As in the \RC algorithm's analysis, we will prove this in two parts. We know that $\Delta = D + \bar{\Delta}$, where $D$ is a diagonal matrix consisting only of the diagonal entries of $\Delta$. We can also decipher that $\lVert  \Delta \rVert_2 \leq \lVert  D \rVert_2 + \lVert \bar{\Delta}\rVert_2$ We now separately show that both $\lVert  D \rVert_2$ and $\lVert \bar{\Delta}\rVert_2$ are $o\left(1 \right)$.
            
            \begin{lem}
                Algorithm \ref{alg:byzalg1} removes some voters such that we have:
                \begin{equation*}
                    \lVert \bar{\Delta} \rVert_2 \leq 40 \max \left( \frac{d_{max}}{k} , \sqrt{\frac{\log k}{d_{min}}} \right)
                \end{equation*}
                with probability $\geq 1 - \frac{8}{n^{1.5C^2-1}}$. 
                \label{lem:byz_non_diag}
            \end{lem}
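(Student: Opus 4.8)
The plan is to follow the Rank-Centrality template and reduce the spectral bound to a deterministic control of the absolute row sums of the deviation matrix, which the filtering step of Algorithm \ref{alg:byzalg1} is designed to guarantee. Writing $C_{ij} = kA_{ij} - kp_{ij}$ for the surviving votes, the off-diagonal entries satisfy $\bar{\Delta}_{ij} = C_{ij}/(k d_{max})$, so by the Schur test $\lVert \bar{\Delta} \rVert_2 \leq \sqrt{\lVert \bar{\Delta} \rVert_1 \cdot \lVert \bar{\Delta} \rVert_\infty}$ it suffices to bound the maximum absolute row and column sums. Since Algorithm \ref{alg:byzalg1} applies the same cleanup to every object $i$, both directions reduce to showing, for each $i$, that $\sum_{j \in \partial i} |C_{ij}| \leq k d_i s$ with $s = 40\max(d_{max}/k, \sqrt{\log k / d_{min}})$; this is precisely the event flagged in the proof sketch of Lemma \ref{thm:byz1}.

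First I would rewrite the row sum as a maximum over sign patterns, $\sum_{j \in \partial i} |C_{ij}| = \max_{\xi \in \{1,-1\}^{d_i}} \sum_{j \in \partial i} \xi_j C_{ij} = \max_{\xi} \sum_{v} \left( D_{i,v}^{\xi} - \mu_i^{\xi} \right)$, where $D_{i,v}^{\xi} = \sum_{j} \xi_j y_{i,j,v}$ is exactly the coordinate of $U = T\xi$ computed inside $\mathsf{Bound\_Sum\_Deviations}$, $\mu_i^{\xi}$ is its expectation for a good voter, and the sum runs over the surviving voters. This identifies the quantity to be bounded with the object the algorithm thresholds, so I can invoke the preceding median-estimation lemma: with high probability, simultaneously over all $2^{d_i}$ sign patterns, the computed median $\hat{m}(\xi)$ lies within $4\delta$ of $\mu_i^{\xi}$ and at least $k/2$ of the chosen voters are good and within $\delta$ of $\mu_i^{\xi}$.

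The core of the argument is the per-pattern accounting for the worst-case pattern $\xi^{\ast} = \operatorname{sign}(C_{i,\cdot})$. I would split the surviving voters into two groups. Voters within $5\delta$ of $\hat{m}(\xi^{\ast})$ deviate from $\mu_i^{\xi^{\ast}}$ by at most $9\delta$, and there are at most $k$ of them; voters farther than $5\delta$ from the median are exactly those flagged by $M[\xi^{\ast}]$, and by the removal rule (all flagged voters are deleted once the flagged count reaches $\mathsf{max\_out}$) at most $\mathsf{max\_out}$ of them can survive, each contributing at most $d_i$ since $|D_{i,v}^{\xi} - \mu_i^{\xi}| \leq d_i$. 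Summing the two groups gives
\begin{equation*}
    \sum_{j \in \partial i} |C_{ij}| \;\leq\; 9\delta k + \mathsf{max\_out}\cdot d_i .
\end{equation*}

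Finally I would substitute $\delta = \sqrt{(Q/2)\, d_i \log k}$ and $\mathsf{max\_out} = 8k^{1-Q} + 8k^{1-\alpha}$, using $k^{1-\alpha} = d_{max}$ from $\alpha = 1 - \log d_{max}/\log k$, and divide by $k d_i$ to obtain $s \leq 9\sqrt{Q \log k /(2 d_i)} + 8k^{-Q} + 8 d_{max}/k$. Taking $Q \geq 1$ a constant and using $d_i \geq d_{min}$ and $d_i \leq d_{max}$ collapses this into $40\max(d_{max}/k, \sqrt{\log k/d_{min}})$, and feeding it through the Schur test yields the claimed spectral bound. The stated probability $1 - 8/n^{1.5C^2 - 1}$ then follows by union-bounding the median-estimation failure event over all $n$ objects, where the regime $p \geq 10C^2 \log n / n$ and $k \in \Omega(np)$ makes the per-object failure exponents dominate the $n$ and $2^{d_i}$ factors. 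The main obstacle is the accounting in the third paragraph: I must simultaneously argue that the adversary's best sign pattern leaves at most $\mathsf{max\_out}$ bad survivors (so the $d_i$-sized contributions stay few) while still retaining more than half the voters as good ones near the mean, so the median remains a trustworthy center — this is exactly where $F \leq K(1-\epsilon)/2$ and the lower bound on $k$ are needed, and where the $2^{d_i}$-fold union bound must be absorbed without destroying the probability.
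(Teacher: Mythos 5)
Your proposal is correct and follows essentially the same route as the paper's proof: the Schur-test reduction to absolute row sums, the sign-pattern decomposition handled via the $2^{d_i}$-fold union bound in the median-estimation lemma, and the split of surviving voters into those within $5\delta$ of $\hat{m}$ (each off by at most $9\delta$ from the true mean) versus the at most $\mathsf{max\_out}$ out-of-range survivors contributing $d_i$ each, followed by the same parameter substitution. The only (harmless) deviation is that you absorb the good voters' contribution into the deterministic $9\delta$-per-voter accounting, whereas the paper bounds the good voters' row sum separately by a Hoeffding argument and reserves the $9\delta f$ accounting for the Byzantine survivors; both yield the same dominant term $\sqrt{\log k/d_{\min}}$ within the stated constant.
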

            
            \begin{proof}
                
                We will initially show that the non-diagonal matrix is bounded. Since we know that:
                \begin{equation*}
                    \lVert \bar{\Delta} \rVert_2 \leq \sqrt{\lVert \bar{\Delta} \rVert_1 \lVert \bar{\Delta} \rVert_\infty}
                \end{equation*}
                Since $\bar{\Delta}$ is a symmetrical matrix we can simply bound the maximal-row sum of the absolute values of $\bar{\Delta}$. To that end we define 
                $$R_i = \frac{1}{kd_{max}} \sum_{j \neq i} |C_{ij}|$$
                where $C_{ij} = kA_{ij} - kp_{ij}$. Continuing from Equation (22) from \cite{negahban2017rank} we get:
                \begin{align}
                    \mathbb{P}(R_i \geq s) &= \mathbb{P}( \sum_{j\in \partial i} |C_{ij}| > kd_is )\\
                    &\leq \sum_{\xi \in \{-1, 1\}^{d_i}} \mathbb{P}\left(\sum_{j \in \partial i} \xi_j C_{ij} > kd_is \right) \\
                    &\leq \sum_{\xi \in \{-1, 1\}^{d_i}} \mathbb{P}\left(\sum_{j \in \partial i\text{ and good}} \xi_j C_{ij} > \frac{kd_is}{2} \right) + \mathbb{P}\left(\sum_{j \in \partial i\text{ and Byz}} \xi_j C_{ij} > \frac{kd_is}{2} \right) \label{eqn:split}
                \end{align}
                We need to show two things
                \begin{enumerate}[label=(\arabic*)]
                    \item The number of good voters that will get eliminated are at most a constant fraction. \label{case1}
                    \item The Byzantine voters that are not eliminated in any round can not cause the second term to get very large. \label{case2}
                \end{enumerate}
                
                The first term in Equation \ref{eqn:split} will work out the same way as originally upto constant factors because we will show that only a constant factor of the good voters can be removed with high probability.
                
                \textbf{Proof of \ref{case1}}: 
                
                We will now show that Line 13 in Algorithm \ref{alg:byzalg1} does not eliminate more than a constant fraction of the good voters. This is necessary to show a bound on the good voters' term in Equation \ref{eqn:split}.

                We will now calculate a bound on the maximum number of good out-of-bound($> 5\delta$ away from the $\hat{m}$) entries that can be there across all $\xi$. We claim that this number is $2k^{1-Q} + {2k^{1-\alpha}}$. This is because by a similar argument as claim \ref{clm1} we know that by the Chernoff-Hoeffding bound that the probability with which there will be so many out-of-bound entries is:
                \begin{align*}
                    \mathbb{P}&\left(\frac{1}{k} \sum Y_i \geq \frac{2}{k^Q} + \frac{2}{k^\alpha} \right)\\
                    &\leq
                    \exp\left(\infdiv*{\frac{2}{k^Q} + \frac{2}{k^\alpha}}{\frac{2}{k^Q}} k\right)\\
                    &\leq \exp\left(-\infdiv*{p+p'}{p}k\right) \tag{$p' = \frac{2}{k^\alpha}$ and $p = \frac{2}{k^Q}$}\\
                    &\leq \exp\left( \left(-(p + p') \cdot \log \left(1 + \frac{p'}{p}\right) - (1-p-p') \log\left(\frac{1-p-p'}{1-p} \right) \right) k \right)\\
                     &\leq \exp\left( \left(-(p + p') \cdot \log \left(1 + \frac{p'}{p} \right) - \log\left(1-p-p'\right) \right) k \right)\\
                      &\leq \exp\left( -(p + p') \cdot \left(\log \left(1 + \frac{p'}{p} \right) - \frac{3}{2} \right) k \right) \tag{$\log(1-t) \leq 3t/2$ for $t <0.5$}\\
                      &\leq \exp\left( -p'k/2 \right) = \exp(-k^{1-\alpha})
                \end{align*}
                For the last equation to hold we need $p'/ p \geq e^2 - 1$. In other words, we need $k^Q \geq k^\alpha \cdot (e^2 - 1)$, we can see that this can clearly be achieved for all values of $\alpha \in [0,1]$ for large values of $k$ if $Q > 1$. However, the probability that we have computed above is for some $\xi$. For it to be applicable over all $\xi$ we need $2^{d_i} \cdot \exp(-k^{1-\alpha})$ to be very small, which we can see is satisfied when $k^{1-\alpha} \geq d_i$. Since we have defined $\alpha = 1 - \log d_{max} / \log k$, we can see that it is always true.
                
                We see that for a voter to be removed there need to be $8k^{1-Q} + {8k^{1-\alpha}}$ out-of-bounds entries, therefore when we remove a single voter we will subsequently be removing 3 Byzantine voters. This ensures that at most $k/6$ good voters can be removed. Therefore in total after the removal, the number of good voters are $\geq \frac{k}{3}$. Therefore using the same equation from \cite{negahban2017rank}, with probability $\geq 1 - 2^{d_i} \cdot \exp(-k^{1-\alpha})$ we can conclude that:
                \begin{align}
                    \sum_{j\in \partial i} \sum_{\xi_j \in \{-1, 1\}} \mathbb{P}\left(\sum_{j\in good} |C_{ij}| > \frac{kd_is}{2} \right) \leq \exp \left(-  \frac{k}{3} d_{max} s^2 + d_i \log 2 \right)  \label{good_bound}
                \end{align}
                The above result comes directly by applying Hoeffding's inequality and is similar to the result in \cite{negahban2017rank} as we are only handling good agents here. We can see that for this result to hold with high probability we need $\frac{ks^2}{3} \geq \log 2$ since $d_i \in \Omega(\log n)$. Therefore we come to the conclusion of:
                \begin{equation}
                    \label{eqn:sbound1}
                    s \geq \sqrt{\frac{3 \log 2}{k}}
                \end{equation}

                \textbf{Proof of \ref{case2}}: We will now prove that the second term also does not occur with high probability. We will do this in two parts. We will consider voters that lie close to the mean and the voters that do not lie close to the mean.
                
                We initially only consider the cases where the voters lie within the $5\delta$ range from $\hat{m}$. We can see that for each $\xi$, the Byzantine entries of $D_{i,j,v}$ can only be at a distance of $9\delta$ away from the mean or they cannot fall in the $5\delta$ range. Therefore $\sum_j \xi_j C_{i,j}$ can only be at most $9\delta f $ away from the mean. Therefore all we need to show is that  $\frac{9 \delta k}{2}  \leq \frac{1}{4} k d_i s$, since $f \leq k/2$. This is equivalent to
                \begin{equation}
                    \label{eqn:sbound2}
                    s \geq 9 \sqrt{\frac{2Q\log k}{d_i}}.    
                \end{equation}
                Now all we need to do is consider the cases where the voters do not lie within the $\delta$ range. We can see that we eliminate voters if for a particular $\xi$ they have a lot of out-of-bound entries. Therefore we can claim that for any $\xi$ there are only $8 k^{1-Q} + {8k^{1-\alpha}}$ entries that are not in the $5\delta$ range and each can at most be $d$, we can see that the maximum deviation will at most be $8k \cdot \left(\frac{1}{k^Q} + \frac{1}{k^\alpha} \right) d_i$. Since we know that $k^Q \geq k^\alpha \cdot(e^2 - 1)$ we can conclude that the max deviation will be $\frac{8e^2k^{1-\alpha}}{e^2-1} \leq \frac{1}{4} kd_is$. Using an approximate value for $e$ we get:
                \begin{equation}
                    \label{eqn:sbound3}
                    s \geq \frac{38}{k^\alpha}.
                \end{equation}
                Therefore we can conclude that the second term does not occur. Therefore we can conclude that:
                \begin{align}
                    \sum_{j\in \partial i} \sum_{\xi_j \in \{-1, 1\}} \mathbb{P}\left(\sum_{j\in Byz} |C_{ij}| > \frac{kd_is}{2} \right) &\leq   \exp\left(-\frac{k\epsilon}{18} + d_i \log 2 \right) +  \exp\left(-\frac{k\epsilon^2}{2} \right)  \label{bad_bound}
                \end{align}
                The last equation comes by using Equation \ref{eqn:epsrel}. Combining Equations \ref{eqn:sbound1}, \ref{eqn:sbound2}, \ref{eqn:sbound3} we get
                
                    $$s = \max \left(\sqrt{\frac{3 \log 2}{k}}, \frac{38}{k^\alpha},  9\sqrt{\frac{2Q\log k}{d_{min}}}\right)$$
                
                However, since $d_{min} < k$ we can get rid of the first term to get:
                
                \begin{equation}
                    \label{eqn:sval}
                    s = \max \left(\frac{38d_{max}}{k}, 9 \sqrt{\frac{2Q\log k}{d_{min}}}\right)    
                \end{equation}

               Combining equations \ref{good_bound} and \ref{bad_bound} with equation \ref{eqn:split} we get:
                \begin{align*}
                 \mathbb{P}(R_i \geq s) &\leq e^{\left(-\frac{k\epsilon}{18} + d_i \log 2 \right)} +  e^{-\frac{k\epsilon^2}{2}} + \exp\left (- \frac{k}{3} d_i s^2 + d_i \log 2 \right)
                 + \exp(-k^{1-\alpha}+d_i\log 2)\\
                 &\leq   e^{\left(-\frac{k\epsilon}{18} + d_{max} \log 2 \right)} +  e^{-\frac{k\epsilon^2}{2}} + \exp \left (- \frac{k}{3} d_{max} s^2 + d_{max}\log 2 \right)
                 + \exp(-k^{1-\alpha}+d_{max}\log 2)
                \end{align*}
                Since this is a sum of 4 terms we individually analyse them. Firstly we see that the first two terms are clearly small because $k \in \Omega(\log n)$. For the third term, we see that because of equation \ref{eqn:sval}, $ks^2/3 \geq 1 + \log 2$ therefore making the third term fairly small as well. Finally, the definition of $\alpha$ ensures that the last term is small. Since we have to find a value for $k$ we can set $k$ to be:
                \begin{equation}
                    \label{eqn:kval}
                    k = \max\left(\frac{18}{\epsilon}(d_{max}\log 2 + c^\ast\log n), \frac{2c^\ast\log n}{\epsilon^2}, \frac{3(1 + \log 2)}{s^2}\right) 
                \end{equation}
                We see that the third term is asymptotically smaller than the first two, we can remove it from consideration. By setting 
                \begin{equation*}
                    k \geq 18 \frac{d_{max}}{\epsilon^2}
                \end{equation*}
                we get 
                \begin{equation*}
                    \mathbb{P}(R_i \geq s) \leq 1 - \frac{4}{e^{d_{max}(1 - \log 2)}} 
                \end{equation*}
                The \RC analysis later takes into account the probability that all degrees are in between $np/2$ and $3np/2$. Therefore we get
                \begin{equation*}
                    \mathbb{P}(R_i \geq s) \leq 1 - \frac{4}{n^{5C^2(1 - \log 2)}} 
                \end{equation*}
                Applying the Union bound we know that 
                \begin{align}
                    \mathbb{P}\left( \lVert \bar{\Delta} \rVert_2 \geq s\right)
                    &\leq 2n \mathbb{P}(R_i \geq s)
                    \label{eqn:deltabound}
                \end{align}
                Therefore we can say that:
                \begin{equation*}
                    \lVert \bar{\Delta} \rVert_2 \leq \max \left(\frac{38d_{max}}{k}, 9 \sqrt{\frac{2Q\log k}{d_{min}}}\right)  
                \end{equation*}
                 with probability $\geq 1 - \frac{8n}{n^{5C^2(1 - \log 2)}} \geq 1 - \frac{8}{n^{1.5C^2-1}}$. Since $\max \left( 38, 9\sqrt{2Q} \right) \leq 40$ for values of $1 \leq Q \leq 9$, we can say 
                \begin{equation*}
                    \lVert \bar{\Delta} \rVert_2 \leq 40 \max \left( \frac{d_{max}}{k} , \sqrt{\frac{\log k}{d_{min}}} \right)
                \end{equation*}
                with probability $\geq 1 - \frac{8}{n^{1.5C^2 - 1}}$. Hence Proved.
            \end{proof}
            
            We will now prove that the diagonal entries of $\Delta$ are bounded
            
            \begin{lem}
                Algorithm \ref{alg:byzalg1} removes some voters such that we have:
                \begin{equation*}
                    \lVert D \rVert_2 \leq 40 \max \left( \frac{d_{max}}{k} , \sqrt{\frac{\log k}{d_{min}}} \right)
                \end{equation*}
                with probability $\geq 1 - \frac{8}{n^{1.5C^2-1}}$. 
                \label{lem:byz_diag}
            \end{lem}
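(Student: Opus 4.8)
The plan is to exploit the fact that $D$ is a diagonal matrix, so its spectral norm is simply the largest magnitude among its diagonal entries, $\lVert D \rVert_2 = \max_{i \in [n]} |\Delta_{ii}|$. It therefore suffices to control a single $|\Delta_{ii}|$ with the required probability and then take a union bound over the $n$ objects, exactly as was done for the row sums in the previous lemma.

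First I would rewrite the diagonal entry in terms of the quantities already introduced in the proof of Lemma \ref{lem:byz_non_diag}. From the definition of $P$ and $\tilde{P}$ in Equation \ref{eqn:pfromadj}, the diagonal entries are $P_{ii} = 1 - \frac{1}{d_{max}} \sum_{j \neq i} A_{ij}$ and $\tilde{P}_{ii} = 1 - \frac{1}{d_{max}} \sum_{j \neq i} p_{ij}$, so that $\Delta_{ii} = -\frac{1}{d_{max}} \sum_{j \neq i} (A_{ij} - p_{ij}) = -\frac{1}{k d_{max}} \sum_{j \neq i} C_{ij}$, using $C_{ij} = k A_{ij} - k p_{ij}$. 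By the triangle inequality, $|\Delta_{ii}| \leq \frac{1}{k d_{max}} \sum_{j \neq i} |C_{ij}| = R_i$, where $R_i$ is precisely the row-sum quantity defined and bounded in Lemma \ref{lem:byz_non_diag}.

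The key observation is thus that the diagonal bound reduces pointwise to the same tail event already analyzed: since $|\Delta_{ii}| \leq R_i$, we have the event inclusion $\{|\Delta_{ii}| \geq s\} \subseteq \{R_i \geq s\}$. Consequently I can reuse the identical voter-filtering guarantee and the high-probability bound on $\mathbb{P}(R_i \geq s)$ established there, with $s = \max(38 d_{max}/k, \, 9\sqrt{2Q \log k / d_{min}})$, without any new concentration argument. Taking a union bound over all $n$ diagonal entries, exactly as in Equation \ref{eqn:deltabound}, gives $\lVert D \rVert_2 \leq s$ with probability $\geq 1 - 8/n^{1.5C^2 - 1}$, and since $\max(38, 9\sqrt{2Q}) \leq 40$ for $1 \leq Q \leq 9$ this yields the claimed bound.

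I expect essentially no new obstacle here: the substantive work, namely controlling the deviations of the surviving good and Byzantine voters after the filtering step so that $R_i$ is small, was already carried out in Lemma \ref{lem:byz_non_diag}. The only point requiring care is to verify that the diagonal entry is genuinely dominated by the same off-diagonal row sum $R_i$, so that the filtering step and the probability bound transfer verbatim; the computation of $\Delta_{ii}$ above confirms this. The remainder is just the elementary fact that a diagonal matrix's operator norm equals its maximal absolute entry.
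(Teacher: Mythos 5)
Your proposal is correct and follows essentially the same route as the paper's proof: both express $\Delta_{ii} = -\frac{1}{kd_{max}}\sum_{j\neq i} C_{ij}$, dominate $|\Delta_{ii}|$ by the absolute row sum $R_i$ via the triangle inequality, and then transfer the tail bound on $R_i$ from Lemma \ref{lem:byz_non_diag} with a union bound over the $n$ diagonal entries. Your write-up is in fact slightly more explicit than the paper's in spelling out that $\lVert D \rVert_2 = \max_i |\Delta_{ii}|$ for a diagonal matrix and in deriving the expression for $\Delta_{ii}$ from Equation \ref{eqn:pfromadj}.
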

            
            \begin{proof}
                From Equations (15) and (16) we know that
                \begin{equation}
                    \Delta_{ii} = - \frac{1}{kd_{max}}  \sum_{j \neq i} C_{ij}
                \end{equation}
                
                Therefore we can write
                \begin{align*}
                    \mathbb{P}\left( \lVert D \rVert_2 \geq 40 \max \left( \frac{d_{max}}{k} , \sqrt{\frac{\log k}{d_{min}}} \right)\right) &\leq \sum_{i=1}^n \mathbb{P}\left( |\sum_{j \neq i} C_{ij}| \geq 40 \max \left( \frac{d_{max}}{k} , \sqrt{\frac{\log k}{d_{min}}} \right) \right)
                \end{align*}
                Since in the previous part we had proved a much strong bound wherein we had shown that $\sum_{i=1}^n \mathbb{P}\left( \sum_{j \neq i} |C_{ij}| \geq  40 \sqrt{\frac{\log d_i}{d_i}} \right) \leq \frac{8}{n^{1.5C^2 - 1}}$, we can conclude that by the triangle inequality we have:
                \begin{align*}
                    \mathbb{P}\left( \lVert D \rVert_2 \geq 40 \max \left( \frac{d_{max}}{k} , \sqrt{\frac{\log k}{d_{min}}} \right) \right) &\leq \frac{8}{n^{1.5C^2 - 1}}
                \end{align*}
                Hence Proved.
            \end{proof}
            
            Now combining Lemmas \ref{lem:byz_non_diag} and \ref{lem:byz_diag}  we have proved Lemma \ref{thm:byz1}.
            \end{proof}
            }

            \ifthenelse{\boolean{if_appendix_algo}}{}{\PROOFALG} 
           
        \subsection{Fast Byzantine Spectral Ranking}
                
        While Algorithm \ref{alg:byzalg1} performs asymptotically similar to the \RC algorithm, it is fairly slow requiring at least $\Omega(2^{d_{min}} \cdot n)$ time. We now present the Fast Byzantine Spectral Ranking~(FBSR) Algorithm for more practical settings. Essentially in our previous algorithm, we ensure that  $\mathbb{P}( \sum_{j\in \partial i} |C_{ij}| > kd_is )$ is upper-bounded. This takes up a lot of time because essentially we iterate over all $\xi \in \{-1, 1\}^{d_i}$ in the algorithm. But, this can be improved. If $d_i = \beta \log_2 n$ we can bucket the entries into $\beta$ buckets~($[B_1, \dots B_\beta]$) of size $\log_2 n$ each. Therefore an equivalent formulation would be to upper-bound $\sum_{l=1}^{\beta} \mathbb{P}( \sum_{j\in \partial i \text{ \& } j \in B_l} |C_{ij}| > \frac{kd_is}{\beta} )$. Here instead of summing up $d_i$ absolute values, we will only be summing up $\log_2 n$ values $\beta$ times.

        \begin{algorithm}
            \caption{Fast Byzantine Spectral Ranking}
            \begin{algorithmic}[1]
                \State $\alpha \gets 1 - \log((2+C/8) \log n) / \log k $
                \For {each object $i$ to be compared}
                    \State $\mathsf{max\_size} \gets \log_2 n$ \Comment{Can be made larger than $\log_2 n$}
                    \State $\beta \gets \left\lceil \frac{d_i}{\mathsf{max\_size}} \right\rceil$
                    \State Split $N_G(i)$ into $\beta$ buckets such that they are equally sized and let them be $[B_1, \dots, B_{\beta}]$
                    \For{$\mathsf{bucket} \gets 1$ to $\beta$}
                        \State $\mathsf{votes}  \gets \mathsf{Bound\_Sum\_Deviations}$($i$, $B_{\mathsf{bucket}}$, $\mathsf{size}(B_{\mathsf{bucket}})$, $k$, $\alpha$)
                        \State Using $\mathsf{votes}$ update $P$ as described in Equation \ref{eqn:pfromadj} 
                    \EndFor    
                \EndFor
                \State Compute the stationary distribution $\pi$ which is the limit of $p^T_{t+1} = p^T_{t} P$.
            \end{algorithmic}
            \label{alg:byzalg2}
        \end{algorithm}   
        
        \begin{thm}
        \label{thm:fbsr}
            Given $n$ objects, let the comparison graph be $G(n,p)$. Each pair in $G$ is compared $k$ times with the outcomes of comparisons produced as per the Byzantine-BTL model with weights $[w_1 , \dots , w_n]$. Then there exists an algorithm such that when $F \leq K(1-\epsilon)/2$, $p = 10C^2 \log n / n$, $\epsilon > 0$, and $k \geq 18(2+C/8)\log n/\epsilon^2$ the following bound on the error 
                $$\frac{\lVert \pi - \tilde{\pi} \rVert}{\lVert \tilde{\pi} \rVert} \leq 480 b^{5/2} \max \left( {\frac{\log n}{k}}, \sqrt{\frac{\log \log n}{\log n}} \right)$$
            holds with probability $\geq 1 - (6+240C^2)n^{-C/8}$ that runs in $\mathcal{O}(n^2)$ time.
        \end{thm}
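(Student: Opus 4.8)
The plan is to reuse the two-stage architecture of the proof of Theorem \ref{thm:byz2}. First I would bound the spectral norm of the fluctuation matrix $\Delta = P - \tilde{P}$ obtained from the filtered votes, and then feed this bound into the Rank-Centrality convergence machinery of \cite{negahban2017rank} exactly as in Theorem \ref{thm:byz2}. The second stage is unchanged: once we establish $\lVert \Delta \rVert_2 \leq 80\max\left(\frac{\log n}{k}, \sqrt{\frac{\log\log n}{\log n}}\right)$ with the claimed probability, the same perturbation inequality and norm-conversion lemmas of \cite{negahban2017rank}, together with the skew factor $b = \max_{i,j} w_i/w_j$, yield the $480\,b^{5/2}$ relative-error bound. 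Hence essentially all the new work is re-deriving the analog of Lemma \ref{thm:byz1} for the bucketed filter of Algorithm \ref{alg:byzalg2}.

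For the $\lVert \Delta \rVert_2$ bound I would again use $\lVert \Delta \rVert_2 \leq \sqrt{\lVert \Delta \rVert_1 \lVert \Delta \rVert_\infty}$ and reduce to controlling the row sum $\sum_{j \in \partial i} |C_{ij}|$, where $C_{ij} = kA_{ij} - k p_{ij}$ (the diagonal part of $\Delta$ is then controlled by the same row-sum bound and a triangle inequality, exactly as in the BSR analysis). The one structural change is that Algorithm \ref{alg:byzalg2} splits $N_G(i)$ into $\beta = \lceil d_i/\log_2 n\rceil$ buckets of size at most $\log_2 n$ and runs $\mathsf{Bound\_Sum\_Deviations}$ separately on each. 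I would therefore split the row sum as $\sum_{j\in\partial i}|C_{ij}| = \sum_{l=1}^{\beta}\sum_{j \in B_l}|C_{ij}|$ and bound each bucket term by $kd_is/\beta$. Within a single bucket the entire Case-1 (few good voters removed) and Case-2 (surviving Byzantine voters contribute little) analysis from the proof of Lemma \ref{thm:byz1} goes through verbatim with $d_i$ replaced by the bucket size $\leq \log_2 n$; crucially, the union bound over $\xi$ now costs only $2^{\log_2 n} = n$ rather than $2^{d_i}$, which is what makes the algorithm polynomial.

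The heart of the argument, and the main obstacle, is choosing the filtering threshold so that the per-bucket union bound stays affordable while still removing only a constant fraction of good voters. This is exactly why Algorithm \ref{alg:byzalg2} sets $\alpha = 1 - \log((2+C/8)\log n)/\log k$, yielding the union-bound budget $k^{1-\alpha} = (2+C/8)\log n$: this budget must simultaneously dominate the $\log n = (\log 2)\log_2 n$ cost of enumerating $\xi$ over a full bucket and the additional $(C/8)\log n$ slack needed for the $n^{-C/8}$ high-probability guarantee. With this choice, after replacing $d_i$ by the bucket size $\log_2 n$, the good-voter bound becomes dominated and the two Byzantine bounds give $s = \max\left(\Theta\left(\frac{\log n}{k}\right),\, 9\sqrt{\frac{2Q\log k}{\log_2 n}}\right)$; in the regime $k = \mathrm{polylog}(n)$ we have $\log k = \Theta(\log\log n)$, so the second term is $\Theta(\sqrt{\log\log n/\log n})$ and $s = \Theta\left(\max\left(\frac{\log n}{k}, \sqrt{\frac{\log\log n}{\log n}}\right)\right)$, matching the claim.

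Finally I would accumulate the failure probability and verify the runtime. Since each \ER vertex has degree $\Theta(C^2\log n)$ with high probability, there are only $\beta = O(C^2)$ buckets per object, so the union bound runs over $O(C^2 n)$ bucket-level events; carrying the engineered per-event tail (made small enough by the assumption $k \geq 18(2+C/8)\log n/\epsilon^2$) through this count produces the $(6+240C^2)n^{-C/8}$ bound, the $240C^2$ factor replacing the object-level count of the BSR analysis. The running-time claim then follows from the same two facts: $O(C^2) = O(1)$ buckets per object, each enumerating $2^{\log_2 n} = n$ sign vectors, over $n$ objects, for $O(n^2)$ total (treating $k$ and $C$ as constants, or amortizing each sign-vector evaluation via a Gray-code update). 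The delicate point throughout is that the bucketing must not let the per-bucket slack accumulate across the $\beta$ buckets into a term larger than the claimed $\max(\log n/k,\sqrt{\log\log n/\log n})$, which is guaranteed by splitting the target row-sum budget evenly as $kd_is/\beta$ and by the fact that $\beta = O(1)$.
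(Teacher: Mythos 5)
Your proposal follows essentially the same route as the paper's own proof in Appendix \ref{sec:fbsralgana}: bucket each neighborhood into $\beta=O(C^2)$ pieces of size $\approx\log_2 n$, rerun the BSR row-sum analysis per bucket with the union bound over $\xi$ now costing only $2^{\log_2 n}=n$, accumulate the $O(C^2 n)$ bucket-level failure events into the $(6+240C^2)n^{-C/8}$ bound, and feed the resulting $\lVert\Delta\rVert_2$ estimate into the unchanged Rank-Centrality machinery. The only additions beyond the paper are minor (the Gray-code amortization remark and the explicit justification of the choice of $\alpha$), so this is the same proof.
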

        
        The detailed analysis can be found in Appendix \ref{sec:fbsralgana}. 
        
        \subsection{An Impossibility Result and optimality of FBSR}
        \label{sec:imp}
        
        The main idea in the BSR algorithm was the estimation of an accurate mean for all values of $\xi$. However, we can see that finding the mean can become very challenging if the Byzantine voters outnumber the good voters. This is because the Byzantine voters can always create a shifted binomial distribution and trick us into believing that an entirely different value is the mean.
        
        We answer a natural question: is it even possible to find algorithms when $F$ crosses $K/2$? We prove a fundamental result showing that for any ranking algorithm to give a satisfactory result, a majority of good voters will always be required. Formally we show that:
        
        \begin{thm}
            If $F \geq K/2$, then no algorithm can for all weights ($\tilde{\pi}$), output weights ($\pi^\ast$) such that $$\frac{\lVert {\pi}^\ast - \tilde{\pi} \rVert}{\lVert \tilde{\pi} \rVert} \leq f(n)$$ with probability $> 1/2$, where $f(n)$ is a function that converges to 0 as $n$ goes to $\infty$. 
        \end{thm}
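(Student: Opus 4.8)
The plan is to run a two-point (Le Cam style) indistinguishability argument, where the leverage comes from the Byzantine voters being able to perfectly impersonate the good voters of a second, conflicting instance. First I would fix two weight vectors $w^A$ and $w^B$ whose induced normalized score vectors $\tilde{\pi}^A,\tilde{\pi}^B$ are coordinate reversals of one another --- for example $w^A_i = 1$ for $i \le n/2$ and $w^A_i = b$ for $i > n/2$, with $w^B$ the reversal --- so that $\|\tilde{\pi}^A\| = \|\tilde{\pi}^B\|$ and the relative separation $\|\tilde{\pi}^A - \tilde{\pi}^B\| / \|\tilde{\pi}^A\|$ is a constant $c_b>0$ independent of $n$. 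Any candidate output $\pi^\ast$ that is relatively $f(n)$-close to both would, by the triangle inequality, force $\|\tilde{\pi}^A-\tilde{\pi}^B\| \le f(n)\bigl(\|\tilde{\pi}^A\|+\|\tilde{\pi}^B\|\bigr) = 2 f(n)\|\tilde{\pi}^A\|$; since $f(n)\to 0$, for all large $n$ this contradicts $c_b > 0$, so the two ``success'' events must be disjoint.

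Next I would build the pair of indistinguishable worlds. Because $F \ge K/2$ we have $2(K-F) \le K$, so I can choose two \emph{disjoint} voter sets $S_1,S_2$, each of size exactly $K-F$, and let $R$ be the remaining $2F-K \ge 0$ voters. In world A the true weights are $w^A$: the good voters are $S_1$ (voting BTL$(w^A)$), and the adversary uses the $F$ Byzantine voters so that those in $S_2$ emit fresh independent BTL$(w^B)$ votes while those in $R$ follow some fixed rule $\mathcal{D}_R$. In world B the true weights are $w^B$: the good voters are $S_2$ (voting BTL$(w^B)$), while the Byzantine set makes $S_1$ emit BTL$(w^A)$ and $R$ again follow $\mathcal{D}_R$. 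Voter by voter, the per-pair vote distribution is then identical across the two worlds ($S_1 \mapsto$ BTL$(w^A)$, $S_2 \mapsto$ BTL$(w^B)$, $R \mapsto \mathcal{D}_R$), and this is a legitimate Byzantine strategy since the adversary knows the true weights and can therefore compute the reversed instance and draw its own independent samples; it can also honor the ``same answer on repeated queries'' constraint by flipping each voter's per-pair coin once. Consequently the joint law of every observation the algorithm can ever see is exactly the same in both worlds, so for any (possibly randomized) algorithm with any fixed pair-to-voter mapping, the output $\pi^\ast$ has one and the same distribution in both worlds.

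Finally I would conclude. Writing $E_A = \{\,\|\pi^\ast - \tilde{\pi}^A\|/\|\tilde{\pi}^A\| \le f(n)\,\}$ and $E_B$ analogously, these are events on the common law of $\pi^\ast$, and by the separation step they are disjoint for large $n$, hence $\Pr[E_A] + \Pr[E_B] \le 1$. The algorithm's success probability on $\tilde{\pi}^A$ equals $\Pr[E_A]$ and on $\tilde{\pi}^B$ equals $\Pr[E_B]$, so these cannot both exceed $1/2$; thus on at least one of the two instances the algorithm succeeds with probability at most $1/2$, contradicting success $>1/2$ for all weights. The delicate point to get right is the \emph{exact} matching of the observation laws under the model's constraints: I must ensure the per-voter simulation uses fresh independent randomness (so a simulated ``good'' voter of the other world is distributed identically to a genuine good voter), that the consistency-on-repeats and passive pair-to-voter assumptions leak no side information, and that $\tilde{\pi}^A,\tilde{\pi}^B$ stay separated by a constant uniformly in $n$ so that disjointness holds for every $f(n)=o(1)$. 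Establishing equality of laws, rather than mere closeness, is the crux of the argument.
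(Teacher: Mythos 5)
Your proposal is correct and follows essentially the same two-instance indistinguishability argument as the paper: a pair of reversed weight vectors with constant relative separation, role-swapped good/Byzantine populations that produce identical observation laws, and a triangle-inequality step showing no single output can be $f(n)$-close to both, so the algorithm fails with probability at least $1/2$ on one instance. Your handling of $F > K/2$ (disjoint sets $S_1, S_2$ of size $K-F$ plus a remainder set following one fixed rule in both worlds) is in fact slightly cleaner than the paper's reduction, which assumes the algorithm can identify and ignore the extra $2F-K$ Byzantine voters.
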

        
        \begin{proof}[Proof Sketch]
            
            We prove this result by contradiction, let us suppose there is an algorithm $\mathcal{A}$ which can give a satisfactory ranking for all weights. We consider the $F = K/2$ case, as the case where $F > K/2$ can clearly be reduced to the $F= K/2$ case.
            
            We then consider two instances:
            \begin{enumerate*}[label=(\arabic*)]
                \item Good voters vote according to $\tilde{\pi}$ and the Byzantine voters vote according to $\tilde{\pi}'$.
                \item Byzantine voters vote according to $\tilde{\pi}$ and the good voters vote according to $\tilde{\pi}'$.
            \end{enumerate*}
            We go on to claim that when $F = K/2$ these instances are indistinguishable to any algorithm. Since $\mathcal{A}$ succeeds with probability $>1/2$ for any instance, we can say by using the union bound that there must be a $\pi^\ast$ such that it is close to both $\tilde{\pi}$ and $\tilde{\pi}'$. We can then choose the values of $\tilde{\pi}$ and $\tilde{\pi}'$ such that they are far from each other and therefore showing that there is no $\pi^\ast$ that it is close to both $\tilde{\pi}$ and $\tilde{\pi}'$. Giving us a contradiction. \end{proof}
            
            The detailed proof can be found in Appendix \ref{sec:impossproof}.

        \def \PROOFIMPOSSIBLE {
        \begin{proof}
            We will prove by contradiction. Let there be an algorithm $\mathcal{A}$ such that for all input weights it returns output weights such that $ \frac{\lVert \pi^\ast - \tilde{\pi} \rVert}{\lVert \tilde{\pi} \rVert} \leq f(n)$ with probability $> 1/2$, where $f(n)$ converges to $0$. We consider the case where $F = K/2$. We can see that the cases where $F > K/2$ can be handled by making $2F - K$ Byzantine voters just output $1$ irrespective of the queries. We can assume that the algorithm is aware of these $2F-K$ voters and will therefore always ignore these voters. Therefore the problem reduces to the $F = K/2$ case.  
            
            Consider an instance~(Instance 1) where $\tilde{\pi}$ is defined as follows:
            \begin{equation}
                \tilde{\pi}_i = \begin{cases}
                    \frac{1}{n + (b - 1) \cdot \lceil \frac{n}{2} \rceil } & \text{for }1 \leq i \leq \lfloor \frac{n}{2} \rfloor\\
                    \frac{b}{n + (b - 1) \cdot \lceil \frac{n}{2} \rceil } & \text{for }\lfloor \frac{n}{2} \rfloor < i \leq n 
                \end{cases}
                \label{eqn:pival}
            \end{equation}
            Let us consider the case where the good voters are numbered $[1, \frac{K}{2}]$, while the Byzantine voters are numbered $[\frac{K}{2} + 1, K]$. We can use $\mathcal{A}$ to give us a $\pi^\ast$ such that $ \frac{\lVert \pi^\ast - \tilde{\pi} \rVert}{\lVert \tilde{\pi} \rVert}  \leq f(n)$ with probability $> 1/2$. We now outline the strategy that the Byzantine voters can use. The Byzantine voters give their results according to some $\tilde{\pi}'$. Where:
            \begin{equation}
                \tilde{\pi}'_i = \begin{cases}
                    \frac{b}{n + (b - 1) \cdot \lceil \frac{n}{2} \rceil  } & \text{for }1 \leq i \leq \lceil \frac{n}{2} \rceil\\
                    \frac{1}{n + (b - 1) \cdot \lceil \frac{n}{2} \rceil } & \text{for }\lceil \frac{n}{2} \rceil < i \leq n 
                \end{cases}
                \label{eqn:pival1}
            \end{equation}

            In another instance~(Instance 2), we consider the good and Byzantine voters flip their numberings i.e. the Byzantine voters are numbered from $[1, \frac{K}{2}]$ and the good voters are numbered from $[\frac{K}{2} + 1, K]$. Furthermore, let the input weights be $\tilde{\pi}'$ and the Byzantine voters vote according to $\tilde{\pi}$. Based on the description of our instance and the existence of algorithm $\mathcal{A}$, with probability $> 1/2$ we will get a weight output from $\mathcal{A}$ such that
            \begin{equation}
                \label{eqn:trick1}
                \frac{\lVert \pi^\ast - \tilde{\pi}' \rVert}{\lVert \tilde{\pi}' \rVert} \leq f(n)    
            \end{equation}
            But to any algorithm, the above two instances are identical and therefore with probability $> 1/2$:
            \begin{equation}
                \label{eqn:trick2}
                \frac{\lVert \pi^\ast - \tilde{\pi} \rVert}{\lVert \tilde{\pi} \rVert}  \leq f(n)
            \end{equation}

            We can therefore conclude that with a non-zero probability we require both Equations \ref{eqn:trick1} and \ref{eqn:trick2} by a simple application of the union bound. We see that:
            \begin{align*}
                \lVert \tilde{\pi} - \tilde{\pi}' \rVert &\leq \lVert \tilde{\pi} - \pi^\ast \rVert + \lVert {\pi}^\ast - \tilde{\pi}'\rVert \tag{Triangle Inequality}\\
                \lVert \tilde{\pi} - \tilde{\pi}' \rVert &\leq f(n) \cdot \left( \lVert \tilde{\pi} \rVert 
                + \lVert \tilde{\pi}' \rVert\right) \tag{Equations \ref{eqn:trick1} and \ref{eqn:trick2}}\\
                \frac{1}{2} \cdot \frac{(b-1)\cdot \sqrt{n-1}}{\sqrt{b^2 \cdot \lceil{n/2}\rceil + \lfloor n/2\rfloor}} &\leq f(n) \tag{Equations \ref{eqn:pival} and \ref{eqn:pival1}}
            \end{align*}
            For larger $n$, the LHS converges to $\geq \frac{b-1}{2b}$, while the RHS converges to 0. Giving a contradiction.
        \end{proof}
        }
        \ifthenelse{\boolean{if_appendix_imposs}}{}{\PROOFIMPOSSIBLE} 
        
        \textbf{Remark.} This impossibility consequently shows us that the FBSR algorithm~(cf. Theorem \ref{thm:fbsr}) is optimal in terms of tolerance towards the Byzantine fraction~($F/K$). 
    
        \subsection{Comparison with \cite{pmlr-v119-agarwal20a}}
        \label{sec:agarwal}
        
        While the setting of \cite{pmlr-v119-agarwal20a} is different from our setting as mentioned in Section \ref{sec:relworks}, if we were to consider both works in  Byzantine voter setting a comparison could be made by considering an edge that has a Byzantine voter to be corrupted as per their model. Here, their algorithm ,to have a valid convergence proof, needs $k \in \Omega(\log n)$. This would imply that $F / K \in O( \log \log n / \log^2 n)$ for the sparse graphs and $F /K \in O( 1 /\log n)$ for denser graphs. Our algorithms can potentially tolerate a significantly higher corruption rate of $1/2$. 
  
        Another way to compare our results with \cite{pmlr-v119-agarwal20a} would be to set $k = \Omega(\log n), p = \Theta(\log n / n) $ for our model and $k' = 1, p' = pk / k' $ for their model. This allocation ensures only a single voter per edge for their model and equal number of comparisons in expectation. We see that their algorithm does not converge ($L_1 \in \mathcal{O}(\sqrt{\log n})$), while our algorithms converges $\left( L_2 \in \mathcal{O} \left(\max\left(\frac{\log n}{k}, \sqrt{\frac{\log k}{\log n}} \right) \right) \right)$. Furthermore, our algorithm can handle a corruption rate of $1/2$ while their algorithm can handle a corruption rate of $\mathcal{O} \left( \log \log n / \log n\right)$\footnote{they also come up with an algorithm for denser graphs that can handle a corruption rate of $1/4$, however the algorithm has worse time-complexity and requires $p \in n^{\epsilon - 1}$ for $\epsilon > 0$}.        
    \section{Experimental Results}
    \label{sec:expresults}
    
    In this section, we confirm our theoretical bounds with experiments conducted on synthetic and real data. While truly Byzantine strategies might be hard to find and can even be computationally infeasible, we consider some practically possible strategies. We show the performance of \RC against these strategies, supporting our results in section \ref{sec:fail}. We then proceed to show that these strategies are not as effective against the BSR/FBSR algorithm which gives better rankings. 
    
        \textbf{Experimental Settings:}
        We see that for the FBSR Algorithm to work well we need the existence of at least a few entries that are at a distance $\geq 5\delta $ from $\hat{m}$. However, since we are summing up $\log n$ entries we will need $5\delta \leq \log n$. Otherwise, the BSR algorithm will run exactly as the \RC algorithm. Using the value of $\delta$ from Algorithm \ref{alg:byzalg2} and setting $Q = 1$ and using $k \geq \log n$ we get:
        $$\frac{25}{2} \log \log n \leq \log n $$
        While it is asymptotically true, we see that the RHS only becomes greater than the LHS when $n \geq 1.18 \times 10^{21}$. However, as our proof was designed to work against any Byzantine strategy, and at the same time the convergence derivation of \cite{negahban2017rank} was fairly loose (for example, the union bound of $2^d$ probabilities to ensure the sum of absolute terms is bounded) the FBSR algorithm empirically works for a variety of strategies even for significantly smaller thresholds. We applied the FBSR Algorithm with the following modifications considering the smaller $k$ and $n$ values: \begin{enumerate*}[label=(\arabic*)]
            \item setting $5\delta = 1+\sqrt{\mathsf{size}(B_{\mathsf{bucket}})}$ and
            \item setting $\mathsf{max\_out}$ as $k/20$.
            \label{modif2}
        \end{enumerate*} All experimental results in sections \ref{sec:syndata} and \ref{sec:realdata} have been taken by taking the mean over 10 random trials.
    
        \textbf{Synthetic Data:}
        \label{sec:syndata}
        We consider the following parameters in our testing \begin{enumerate*}[label=(\arabic*)]
            \item $n = 200$ \item $k = 100$ \item $p = 20\log n / n$ and \item $w_i = \text{Uniform}(1, 100)$. These are then normalized such that $\sum \tilde{\pi}_i = 1$.
        \end{enumerate*} We compare our model with the \RC algorithm on synthetic data. We consider four potential strategies used by Byzantine voters:
        
        \begin{enumerate}[label=(\arabic*)]
            \item \emph{Fixed Order Vote:} The Byzantine voters vote according to some random pre-determined order. This is a fairly simple strategy that does not even require the Byzantine voters to know the $\tilde{\pi}$ values.
            \item \emph{Opposite Vote:} The Byzantine voters vote exactly the reverse of the weights. We had shown in section \ref{sec:fail} that this strategy leads to a failure for some weights.
            \item \emph{Opposite Vote Probabilistic:} The Byzantine voters are equivalent to good voters, except their final answer is flipped. For example, the Byzantine voters will vote for object $j$ with probability $\frac{w_i}{w_j+w_i}$ and vote for object $i$ with probability $\frac{w_j}{w_j+w_i}$.
            \item \emph{Random Subset:} For each pair, the Byzantine voters either decide to behave like good voters with half probability or decide to {\emph opposite vote}.
        \end{enumerate}

        Figures \ref{fig:l2_combined} and \ref{fig:tau_combined} demonstrate the performance of the \RC algorithm and the FBSR algorithm against the aforementioned strategies. For Figure \ref{fig:l2_combined}, we plot the relative $L_2$-Error against the Byzantine Fraction~($F/K$). Similar to our results in section \ref{sec:fail}, we see that for most strategies the relative $L_2$-Error for the \RC Algorithm grows linearly with $F/K$. We can see that the relative $L_2$-Error for both Fixed Order Vote and Opposite Vote strategies are fairly high for the \RC algorithm and that the Fixed Order Vote dominates the relative $L_2$-Error for the FBSR algorithm. However, even though the Fixed Order Vote is the best strategy against the FBSR algorithm. The FBSR algorithm is still able to perform around two times better than the \RC algorithm across all values of $F/K$.
        
        We also confirm that our rankings are close to the actual rankings. To do this we consider Kendall's Tau Correlation~\citep{kt} between our ranking and the actual ranking. From Figure \ref{fig:tau_combined}, we see that for the \RC algorithm, the Opposite Vote strategy is able to give very low values of $\tau$. When the Byzantine fraction is around $0.3$ we get a Kendall's Tau Correlation close to $0$~(essentially a random ranking). We also see that the FBSR algorithm is able to give better results for all strategies.

        \begin{figure}
            \begin{minipage}[b]{0.33\textwidth}
                \centering
                \includegraphics[width=\textwidth]{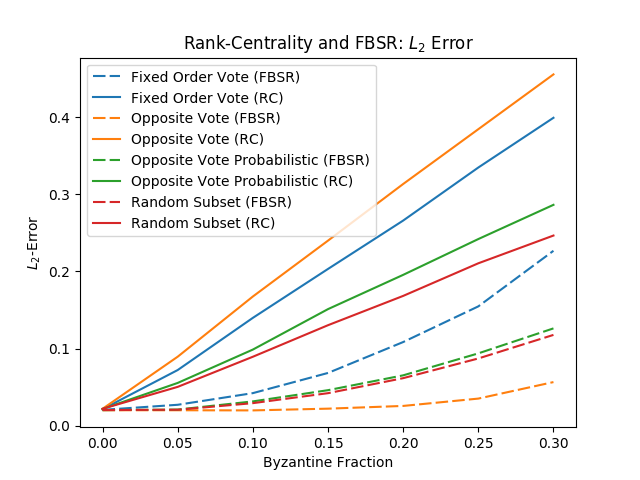}
                \caption{$L_2$ vs $F/K$}
                \label{fig:l2_combined}
            \end{minipage}
            \begin{minipage}[b]{0.33\textwidth}
                \centering
                \includegraphics[width=\textwidth]{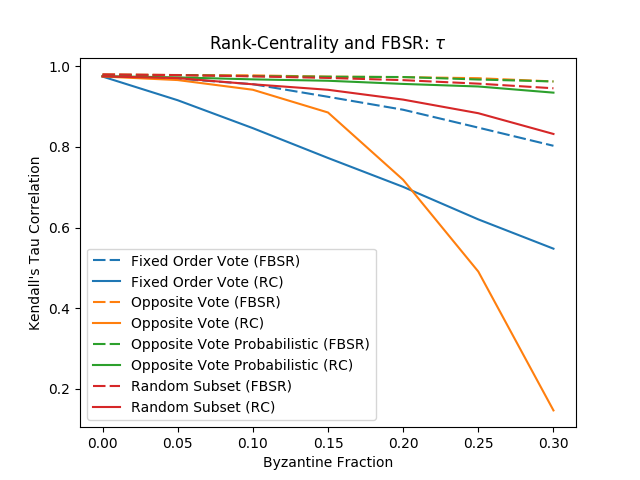}
                \caption{$\tau$ vs $F/K$}
                \label{fig:tau_combined}
            \end{minipage}
            \begin{minipage}[b]{0.33\linewidth}
                \centering
                \includegraphics[width=\textwidth]{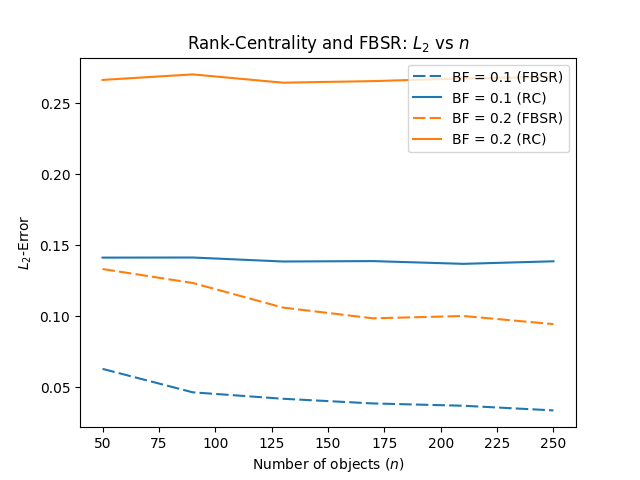}
                \caption{$L_2$ vs $n$}
                \label{fig:rcperf_item_l2}
            \end{minipage}
        \end{figure}

        To further show the convergence of the algorithms in the Byzantine-BTL model for larger $n$, we plot the relative $L_2$ error and Kendall's Tau Correlation against the number of objects in Figures \ref{fig:rcperf_item_l2} and \ref{fig:rcperf_item_tau} for the Fixed Order Vote strategy. We consider values of $F/K$ in $[0.1, 0.2]$. These results come when we have set $k = n$. We see that for a large range of the number of objects both metrics have remained constant for the \RC algorithm thus supporting our hypothesis. On the other hand, we see that as we increase the number of objects the $L_2$-error gradually falls and the Kendall's Tau Correlation gradually increases. The rather slow descent of the relative $L_2$-error is backed by Theorem \ref{thm:fbsr} where we see that for larger $k$ the relative $L_2$ error is $\in \mathcal{O}\left(\sqrt{\frac{\log \log n}{\log n}} \right)$.

        \textbf{Real Data}:
        \label{sec:realdata}
            Experimentation with real data was not straightforward, considering that most datasets do not have voter labels nor do they have voter queries as requested by the BSR algorithm~(one voter is always asked multiple queries of the form $(i,j)$ where $i$ is fixed and $j$ is varied). To get around this issue, we use complete rankings. We consider the Sushi dataset comprising $5000$ voters ordering the $10$ sushis from most preferred to least preferred. Given the preference order, we get $45$ pairwise votes from each of the voters. Since we are assuming that we are dealing with permutations here, we assume the Byzantine voters can use the following strategies:
            \begin{enumerate}[label=(\arabic*)]
                \item \textit{Fixed Order Vote / Opposite Vote:} Same as previously defined, but this time the entire permutation is given.
                \item \textit{Opposite Random Flips:} Each Byzantine voter starts with the opposite permutation and then swaps some of the objects at random.
            \end{enumerate}
            
            Since the Sushi dataset has many voters we do not use our \ref{modif2}nd modification mentioned above. We compute the true weights by applying the \RC algorithm directly~(thus the \RC algorithm has a big advantage, especially for smaller weights). Figures \ref{fig:rc_perf_item_l2_sushi} and \ref{fig:tau_perf_item_l2_sushi} show the \RC algorithm and the BSR algorithm's performance. We see that the BSR algorithm achieves lower relative $L_2$-errors/Kendall's Tau Correlation for most Byzantine fractions and strategies.  The comparatively small improvement in Kendall's Tau Correlation is attributed to a rather small difference in the weights of the various Sushi~(5 sushis have weights $\in [0.085, 0.13]$).
        
        \begin{figure}
            
            \begin{minipage}[b]{0.33\linewidth}
                \centering
                \includegraphics[width=\textwidth]{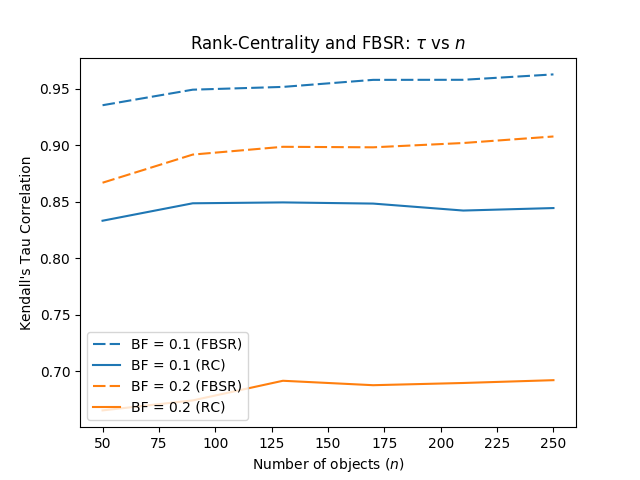}
                \caption{$\tau$ vs $n$}
                \label{fig:rcperf_item_tau}
            \end{minipage}
            \begin{minipage}[b]{0.33\linewidth}
                \centering
                \includegraphics[width=\textwidth]{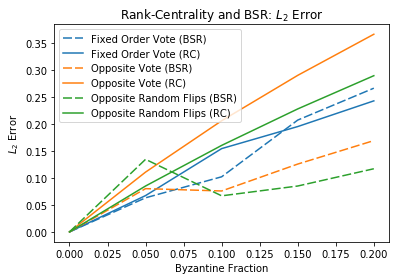}
                \caption{Sushi: $L_2$}
                \label{fig:rc_perf_item_l2_sushi}
            \end{minipage}
            \begin{minipage}[b]{0.33\linewidth}
                \centering
                \includegraphics[width=\textwidth]{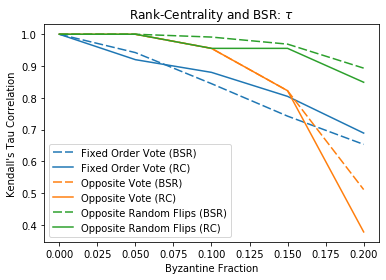}
                \caption{Sushi: $\tau$}
                \label{fig:tau_perf_item_l2_sushi}
            \end{minipage}
        \end{figure}

    \section{Conclusion and Further Work}
    \label{sec:conclusion}
	
	We studied the potential disruption that Byzantine voters can do to the widely-used \RC algorithm and proposed fast solutions to counter the same. Our algorithms and analysis techniques are quite non-trivial and indicate to us the need for developing this theory further. We require $d \in \Theta(\log n)$ and $k \in \Omega(\log n)$ for the success of the BSR and FBSR algorithms. However, \cite{negahban2017rank} only required $d \in \Omega(\log n)$ and $k \in \Omega(1)$. An interesting direction would be to explore whether sub-logarithmic values of $k$ will be possible. Alternatively, a lower bound on the number of required votes per edge for an algorithm to succeed with constant probability would also be an interesting direction.
	
	

	
    \section*{Updated Checklist}

    \begin{enumerate}

    \item For all authors...
    \begin{enumerate}
      \item Do the main claims made in the abstract and introduction accurately reflect the paper's contributions and scope?
        \answerYes{}
      \item Did you describe the limitations of your work?
        \answerYes{}
      \item Did you discuss any potential negative societal impacts of your work?
        \answerYes{}
      \item Have you read the ethics review guidelines and ensured that your paper conforms to them?
        \answerYes{}
    \end{enumerate}

    \item If you are including theoretical results...
    \begin{enumerate}
      \item Did you state the full set of assumptions of all theoretical results?
        \answerYes{}
            \item Did you include complete proofs of all theoretical results?
        \answerYes{}
    \end{enumerate}

    \item If you ran experiments...
    \begin{enumerate}
        \item Did you include the code, data, and instructions needed to reproduce the main experimental results (either in the supplemental material or as a URL)?
        \answerYes{}
        \item Did you specify all the training details (e.g., data splits, hyperparameters, how they were chosen)?
        \answerYes{}
        \item Did you report error bars (e.g., with respect to the random seed after running experiments multiple times)?
        \answerYes{}
        \item Did you include the total amount of compute and the type of resources used (e.g., type of GPUs, internal cluster, or cloud provider)?
        \answerYes{}
    \end{enumerate}

    \item If you are using existing assets (e.g., code, data, models) or curating/releasing new assets...
    \begin{enumerate}
      \item If your work uses existing assets, did you cite the creators?
        \answerNA{}
      \item Did you mention the license of the assets?
        \answerNA{}
      \item Did you include any new assets either in the supplemental material or as a URL?
        \answerNA{}
      \item Did you discuss whether and how consent was obtained from people whose data you're using/curating?
        \answerNA{}
      \item Did you discuss whether the data you are using/curating contains personally identifiable information or offensive content?
        \answerNA{}
    \end{enumerate}

    \item If you used crowdsourcing or conducted research with human subjects...
    \begin{enumerate}
      \item Did you include the full text of instructions given to participants and screenshots, if applicable?
        \answerNA{}
      \item Did you describe any potential participant risks, with links to Institutional Review Board (IRB) approvals, if applicable?
        \answerNA{}
      \item Did you include the estimated hourly wage paid to participants and the total amount spent on participant compensation?
        \answerNA{}
    \end{enumerate}

    \end{enumerate}


	\clearpage
	
	\appendix

	\section{Appendix}

	\subsection{A primer on \RC}
	\label{sec:rc_primer}
	
        \subsubsection{\RC Convergence}
        
        \RC proves the below theorem
        
        \begin{thm}[From \cite{negahban2017rank}] \label{rc:thm1}
           Given $n$ objects and a connected comparison graph $G = ([n], E)$, let each pair $(i, j) \in E$ be compared for $k$ times with outcomes produced as per a BTL model with parameters $w_1 , \dots , w_n$. Then, for some positive constant $C \geq 8$ and when $k \geq 4C^2 (b^5\kappa^2/d_{max} \xi^2 ) \log n$, the following bound on the normalized error holds with probability at least $1 - 4n^{-C/8}$ :
           \begin{equation*}
                \frac{\lVert \pi - \tilde{\pi} \rVert}{\lVert \tilde{\pi} \rVert}   \leq \frac{Cb^{5/2}\kappa}{\xi} \sqrt{\frac{\log n}{kd_{max}}}
           \end{equation*}
           where $\tilde{\pi}(i) = w_i / \sum_l w_l$, $b \equiv \max_{i,j} w_i/w_j$ and $\kappa = d_{max} / d_{min}$
           
        \end{thm}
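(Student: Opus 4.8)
The plan is to treat this as a Markov-chain perturbation problem. View $\tilde\pi$ as the stationary distribution of the expected transition matrix $\tilde P$ (whose off-diagonal entries are the true BTL win-probabilities $p_{ij}=w_i/(w_i+w_j)$ scaled by $1/d_{max}$), view the algorithm's output $\pi$ as the stationary distribution of the empirical matrix $P$, and control the error entirely through the fluctuation matrix $\Delta = P-\tilde P$. Since both $P$ and $\tilde P$ are reversible with respect to $\tilde\pi$, the natural geometry is the weighted space $L^2(\tilde\pi)$ with $\langle a,b\rangle_{\tilde\pi}=\sum_i a_i\tilde\pi_i b_i$. First I would record the two norm-equivalence inequalities (the analogues of \ref{normbound1} and \ref{normbound2}) that translate between this weighted norm and the Euclidean norm, because the final statement is in Euclidean norm while the contraction estimates are cleanest in $L^2(\tilde\pi)$; these conversions are what inject the factors $\sqrt{\tilde\pi_{max}/\tilde\pi_{min}}$, i.e.\ powers of $b$.

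Next I would establish the one-step recursion $(p_t-\tilde\pi)^T=(p_{t-1}-\tilde\pi)^T(\tilde P-\tilde P_1+\Delta)+\tilde\pi^T\Delta$, exactly as in \ref{eqn:rc14}, where $\tilde P_1$ is the rank-one projector built from $S=\tilde\Pi^{1/2}\tilde P\tilde\Pi^{-1/2}$ and its top eigenvector. Subtracting the rank-one part is what turns the operator into a genuine contraction, since $\lVert\tilde P-\tilde P_1\rVert_{\tilde\pi,2}$ equals the second-largest-in-modulus eigenvalue $\lambda_{max}=\max(\lambda_2,|\lambda_n|)$ of $\tilde P$. Iterating and passing to the limit $p_t\to\pi$ gives
\[
\lVert\pi-\tilde\pi\rVert_{\tilde\pi}\le\frac{\lVert\tilde\pi^T\Delta\rVert_{\tilde\pi}}{1-\lambda_{max}-\lVert\Delta\rVert_{\tilde\pi,2}},
\]
valid once the denominator is positive. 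The theorem then reduces to three quantitative estimates: (i) a spectral-gap lower bound controlling $1-\lambda_{max}$ from below in terms of the graph spectral-gap parameter $\xi$ and the skew $b$; (ii) a high-probability spectral-norm bound on $\lVert\Delta\rVert_2$; and (iii) a high-probability bound on the one-step term $\lVert\tilde\pi^T\Delta\rVert$. The hypothesis $k\ge 4C^2(b^5\kappa^2/d_{max}\xi^2)\log n$ is precisely what forces $\lVert\Delta\rVert_{\tilde\pi,2}$ small relative to the gap, keeping the denominator bounded away from zero.

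For (ii) and (iii) I would use concentration of the empirical frequencies: writing $C_{ij}=kA_{ij}-kp_{ij}$, each $C_{ij}$ is a centered sum of $k$ bounded independent indicators, so Hoeffding applies directly. The spectral norm is handled via $\lVert\Delta\rVert_2\le\sqrt{\lVert\Delta\rVert_1\lVert\Delta\rVert_\infty}$, i.e.\ by the maximum row sum of absolute deviations $\sum_j|C_{ij}|$; the standard device is to rewrite this absolute-value row sum as $\max_{\xi\in\{-1,1\}^{d_i}}\sum_j\xi_j C_{ij}$ and union-bound over all $2^{d_i}$ sign patterns, absorbing the entropy factor $d_i\log 2$ into the requirement $k\gtrsim\log n$. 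This yields $\lVert\Delta\rVert_2=O(\sqrt{\log n/(k\,d_{max})})$ with high probability, and a simpler Hoeffding argument gives the numerator term $\lVert\tilde\pi^T\Delta\rVert$ of the same order. Combining the three estimates in the displayed inequality and converting back to Euclidean norm collects the dependence on $b$, $\kappa=d_{max}/d_{min}$, and $\xi$ into the claimed $\frac{Cb^{5/2}\kappa}{\xi}\sqrt{\log n/(k\,d_{max})}$, with the failure probability $4n^{-C/8}$ coming from the union bounds across the $n$ rows.

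The \emph{main obstacle} is step (ii): obtaining the correct high-probability spectral-norm bound on $\Delta$. The exponential $2^{d_i}$ union bound is exactly why one needs both $k$ and the degrees to be at least logarithmic, and it is the estimate that is most delicate to make tight. It is also precisely the one step that breaks under Byzantine corruption, since an adversary can inflate a single row sum $\sum_j|C_{ij}|$ and defeat the union bound; this is why the later analysis must replace it with Lemma \ref{thm:byz1}. The secondary difficulty is the spectral-gap estimate (i) for general graphs, which requires relating the gap of the reversible chain $\tilde P$ to the combinatorial spectral gap $\xi$ of the comparison graph and is where the additional factors of $b$ and $\kappa$ are generated.
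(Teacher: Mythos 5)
Your proposal reconstructs exactly the argument the paper attributes to \cite{negahban2017rank} and summarizes in its proof sketch: the $L^2(\tilde\pi)$ norm equivalences, the rank-one-corrected perturbation recursion (the paper's Lemma~\ref{rc:lm2}), the spectral-gap bound $1-\rho\geq\xi d_{min}/(b^2 d_{max})$ (Lemma~\ref{rc:lm4}), and the bound $\lVert\Delta\rVert_2\leq C\sqrt{\log n/(kd_{max})}$ via the maximum absolute row sum and the $2^{d_i}$ sign-pattern union bound (Lemma~\ref{rc:lm3}). This is essentially the same approach as the paper's, including your correct identification of the $\Delta$-norm bound as the step that fails under Byzantine corruption.
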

        
        which when extended to \ER random graphs takes the form of
        
        \begin{thm}[From \cite{negahban2017rank}] \label{rc:thm2}
            Given $n$ objects, let the comparison graph $G = ([n], E)$ be generated by selecting each pair $(i, j)$ to be in $E$ with probability $d/n$ independently of everything else. Each such chosen pair of objects is compared $k$ times with the outcomes of comparisons produced as per a BTL model with parameters $w_1 , \dots , w_n$ . Then if $d \geq 10C^2 \log n$ and $kd \geq 128 C^2b^5 \log n$, the following bound on the error rate holds with probability at least $1 - 10n^{-C/8}$:
            \begin{equation*}
                \frac{\lVert \pi - \tilde{\pi} \rVert}{\lVert \tilde{\pi} \rVert}   \leq 8Cb^{5/2} \sqrt{\frac{\log n}{kd_{max}}}
           \end{equation*}
            where $\tilde{\pi}(i) = w_i / \sum_l w_l$ and $b \equiv \max_{i,j} w_i/w_j$
        \end{thm}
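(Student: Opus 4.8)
The plan is to derive this \ER result as a corollary of the general connected-graph bound of Theorem \ref{rc:thm1} by showing that, for $G(n,d/n)$ with $d \geq 10C^2\log n$, the two graph-dependent quantities appearing on its right-hand side---the degree ratio $\kappa = d_{max}/d_{min}$ and the spectral gap $\xi$---are each controlled by absolute constants with high probability. Once that is established, substituting $\kappa = O(1)$ and $1/\xi = O(1)$ into the bound $\frac{Cb^{5/2}\kappa}{\xi}\sqrt{\log n/(k d_{max})}$ immediately yields a bound of the form $8Cb^{5/2}\sqrt{\log n/(k d_{max})}$, and the sample-complexity hypothesis $k \geq 4C^2(b^5\kappa^2/d_{max}\xi^2)\log n$ of Theorem \ref{rc:thm1} is implied by $kd \geq 128C^2b^5\log n$ after the same substitution. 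The whole argument is therefore a conditioning argument: condition on the realized edge set $E$, apply Theorem \ref{rc:thm1} on that now-deterministic connected graph, and then union-bound over the graph-level bad events.

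First I would control the degrees. Each $d_i$ is a sum of $n-1$ independent Bernoulli$(d/n)$ variables with mean $(n-1)d/n \approx d$, so a two-sided Chernoff bound gives $\mathbb{P}(|d_i - d| \geq d/2) \leq 2e^{-\Theta(d)}$. Because $d \geq 10C^2\log n$, each tail is at most $n^{-\Theta(C^2)}$, and a union bound over the $n$ vertices shows that with probability $\geq 1 - O(n^{-C/8})$ every degree lies in $[d/2, 3d/2]$. On this event the graph is connected (again using $d \gtrsim \log n$), one has $d_{max} \leq 3d/2$ and $d_{min} \geq d/2$, hence $\kappa \leq 3$; this also pins $d_{max} = \Theta(d)$, so the $d_{max}$ in the final denominator is the genuine maximum degree.

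The main obstacle is the spectral gap. Here I would show that the second-largest-in-magnitude eigenvalue of the random-walk matrix $\tilde{P}$ on $G$ is bounded away from $1$ by an absolute constant, i.e. $\xi \geq c_0$. The natural route is matrix concentration for the adjacency matrix: $\mathbb{E}[A_G] = \frac{d}{n}(\mathbf{1}\mathbf{1}^\top - I)$ has top eigenvalue $\approx d$ and all remaining eigenvalues $\approx -d/n \approx 0$, giving it a gap of order $d$, while standard spectral-norm concentration for \ER graphs in the regime $d \geq 10C^2\log n \gtrsim \log n$ gives $\lVert A_G - \mathbb{E}[A_G]\rVert_2 = O(\sqrt{d})$ with probability $\geq 1 - O(n^{-C/8})$. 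Since the perturbation $O(\sqrt{d})$ is of smaller order than the gap $\Theta(d)$, Weyl's inequality transfers the gap to $A_G$; combining this with the degree regularity from the previous step (so that the degree-normalization relating $A_G$ to $\tilde{P}$ is uniformly $\Theta(1/d)$, with the BTL reweighting contributing only bounded $b$-dependent factors already absorbed into the $b^{5/2}$ of Theorem \ref{rc:thm1}) yields $\xi = \Theta(1)$, and in particular $\xi^2 \geq 1/32$. This is exactly what is needed to turn $k \geq 4C^2(b^5/(d_{max}\xi^2))\log n$ into $kd \geq 128C^2b^5\log n$.

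Finally I would assemble the pieces. On the intersection of the degree event and the spectral event---each failing with probability $O(n^{-C/8})$---Theorem \ref{rc:thm1} applies and, after plugging $\kappa \leq 3$ and $\xi \geq 1/\sqrt{32}$ and absorbing these constants into the leading factor, gives $\frac{\lVert \pi - \tilde{\pi}\rVert}{\lVert \tilde{\pi}\rVert} \leq 8Cb^{5/2}\sqrt{\log n/(k d_{max})}$. A union bound over the two graph-level bad events together with the $4n^{-C/8}$ failure probability internal to Theorem \ref{rc:thm1} accounts for the stated $1 - 10n^{-C/8}$. The quantitative points that require care are matching the absolute constants ($3$, $\sqrt{32}$, and the resulting $8$) and checking that all high-probability estimates hold simultaneously within the total slack $10n^{-C/8}$; the genuinely substantive step, however, is the spectral-gap concentration, since the degree bounds are routine Chernoff estimates.
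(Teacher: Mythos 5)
Your proposal is correct and follows essentially the same route as the paper (and the original Negahban et al.\ argument it imports): condition on the realized \ER graph, use Chernoff bounds to get all degrees in $[d/2,3d/2]$ so that $\kappa\leq 3$, use spectral concentration to bound the Laplacian gap $\xi$ below by an absolute constant (the paper uses $\xi\geq 1/2$ where you use $1/\sqrt{32}$), and then apply Theorem~\ref{rc:thm1} with a union bound over the graph-level bad events. The only differences are in the exact constants, which do not affect the structure of the argument.
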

        \subsubsection{\RC Convergence Proof Sketch}
        
        The proof from theorem \ref{rc:thm1} to \ref{rc:thm2} is straightforward and has no relation to the fact that there are Byzantine voters. \RC algorithm's convergence proof for Theorem \ref{rc:thm1} requires the proving of three lemmas~(Lemmas 2,3 and 4 given below) 
        
        \begin{lem}[From \cite{negahban2017rank}] \label{rc:lm2}
            For any Markov chain $P = \tilde{P} + \Delta$ with a reversible Markov chain $\tilde{P}$, let $p_t$ be the distribution of the Markov chain $P$ when started with the initial distribution $p_0$. Then, 
            \begin{equation*}
                \frac{\lVert p_t - \tilde{\pi} \rVert}{\tilde{\pi}} \leq \rho^t \frac{\lVert p_0 - \tilde{\pi} \rVert}{\tilde{\pi}} \sqrt{\frac{\tilde{\pi}_{max}}{\tilde{\pi}_{min}}} + \frac{1}{1 - \rho} \lVert \delta \rVert_2 \sqrt{\frac{\tilde{\pi}_{max}}{\tilde{\pi}_{min}}}
            \end{equation*}
            where $\tilde{\pi}$ is the stationary distribution of $\tilde{P}$, $\tilde{\pi}_{min} = \min_{i} \tilde{\pi}(i)$, $\tilde{\pi}_{max} = \max_{i} \tilde{\pi}(i)$ and $\rho = \lambda_{max}(\tilde{P}) + \Delta \sqrt{\frac{\tilde{\pi}_{max}}{\tilde{\pi}_{min}}}$
        \end{lem}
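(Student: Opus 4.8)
The plan is to treat this as a standard perturbation bound for the iterates of a reversible Markov chain, carried out in the weighted inner-product space $L^2(\tilde{\pi})$ in which $\tilde{P}$ is self-adjoint. First I would derive a one-step recursion for the error vector $e_t := p_t - \tilde{\pi}$. Starting from $p_{t+1}^T = p_t^T P$, substituting $P = \tilde{P} + \Delta$, and using stationarity $\tilde{\pi}^T \tilde{P} = \tilde{\pi}^T$, a direct expansion gives $e_{t+1}^T = e_t^T \tilde{P} + e_t^T \Delta + \tilde{\pi}^T \Delta$. The key structural observation is that $e_t$ is annihilated by the rank-one projection onto the stationary direction: writing $\tilde{P}_1 = \mathbf{1}\,\tilde{\pi}^T$ for this projection (equivalently $\tilde{P}_1 = \tilde{\Pi}^{-1/2} S_1 \tilde{\Pi}^{1/2}$, where $S_1$ is the top eigenprojection of the symmetrized matrix $S = \tilde{\Pi}^{1/2}\tilde{P}\tilde{\Pi}^{-1/2}$), we have $e_t^T \tilde{P}_1 = (e_t^T\mathbf{1})\,\tilde{\pi}^T = 0$ because $e_t^T\mathbf{1} = \sum_i ((p_t)_i - \tilde{\pi}_i) = 0$, both $p_t$ and $\tilde{\pi}$ being distributions. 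This lets me insert $-\tilde{P}_1$ at no cost and rewrite the recursion as $e_{t+1}^T = e_t^T(\tilde{P} - \tilde{P}_1 + \Delta) + \tilde{\pi}^T\Delta$.

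Next I would take $L^2(\tilde{\pi})$ norms and apply the triangle inequality together with sub-multiplicativity of the operator norm, yielding $\lVert e_{t+1}\rVert_{\tilde{\pi}} \le (\lVert \tilde{P} - \tilde{P}_1\rVert_{\tilde{\pi},2} + \lVert \Delta\rVert_{\tilde{\pi},2})\lVert e_t\rVert_{\tilde{\pi}} + \lVert \tilde{\pi}^T\Delta\rVert_{\tilde{\pi}}$. Here I would use that $\tilde{P} - \tilde{P}_1$ is self-adjoint in $L^2(\tilde{\pi})$ and that removing the top eigenprojection leaves exactly the second-largest eigenvalue in magnitude, so that $\lVert \tilde{P} - \tilde{P}_1\rVert_{\tilde{\pi},2} = \lVert S - S_1\rVert_2 = \lambda_{max}(\tilde{P})$ by the Perron--Frobenius theorem. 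Setting $\rho = \lambda_{max}(\tilde{P}) + \lVert \Delta\rVert_{\tilde{\pi},2}$ produces the scalar recursion $\lVert e_{t+1}\rVert_{\tilde{\pi}} \le \rho\,\lVert e_t\rVert_{\tilde{\pi}} + \lVert \tilde{\pi}^T\Delta\rVert_{\tilde{\pi}}$.

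Assuming $\rho < 1$, I would unroll this recursion and sum the resulting geometric series to get $\lVert e_t\rVert_{\tilde{\pi}} \le \rho^t \lVert e_0\rVert_{\tilde{\pi}} + \tfrac{1}{1-\rho}\lVert \tilde{\pi}^T\Delta\rVert_{\tilde{\pi}}$. Finally I would transfer back to the Euclidean norm using the equivalence inequalities $\sqrt{\tilde{\pi}_{min}}\,\lVert a\rVert \le \lVert a\rVert_{\tilde{\pi}} \le \sqrt{\tilde{\pi}_{max}}\,\lVert a\rVert$: dividing through by $\sqrt{\tilde{\pi}_{min}}$ on the left, and bounding $\lVert e_0\rVert_{\tilde{\pi}} \le \sqrt{\tilde{\pi}_{max}}\lVert e_0\rVert$ and the forcing term in the same way, yields the two $\sqrt{\tilde{\pi}_{max}/\tilde{\pi}_{min}}$ factors and the perturbation term $\tfrac{1}{1-\rho}\lVert \delta\rVert_2 \sqrt{\tilde{\pi}_{max}/\tilde{\pi}_{min}}$ of the claimed bound.

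I expect the main obstacle to be the rank-one projection step and its accompanying spectral identity. One must argue carefully that subtracting $\tilde{P}_1$ is legitimate — this uses exactly the fact that the error vector has zero total mass, which holds only because $p_t$ and $\tilde{\pi}$ are both probability distributions — and that after the subtraction the operator norm of $\tilde{P} - \tilde{P}_1$ in $L^2(\tilde{\pi})$ collapses from $1$ to the second-largest eigenvalue $\lambda_{max}$. It is precisely this collapse that makes $\rho < 1$ and renders the geometric sum convergent; everything else (taking norms, the geometric bound, and the conversion between the $L^2(\tilde{\pi})$ and Euclidean norms) is routine bookkeeping with the two norm-equivalence inequalities.
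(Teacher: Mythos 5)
Your proposal is correct and follows essentially the same argument as the source: the paper imports this lemma from \cite{negahban2017rank} without reproving it, but the key steps you describe — the recursion $e_{t+1}^T = e_t^T(\tilde{P}-\tilde{P}_1+\Delta) + \tilde{\pi}^T\Delta$ justified by $e_t^T\mathbf{1}=0$, the identity $\lVert \tilde{P}-\tilde{P}_1\rVert_{\tilde{\pi},2} = \lVert S-S_1\rVert = \lambda_{max}$, the geometric unrolling, and the transfer between $L^2(\tilde{\pi})$ and Euclidean norms via Equations \ref{normbound1}--\ref{normbound2} — are exactly the machinery the paper itself redeploys in Appendix \ref{sec:rc_fail_appendix}. (Note the lemma as transcribed has typographical slips: the denominators should be $\lVert\tilde{\pi}\rVert$, $\lVert\delta\rVert_2$ should be $\lVert\Delta\rVert_2$, and $\rho$ should involve $\lVert\Delta\rVert_{\tilde{\pi},2}$; your reading of the intended statement is the right one.)
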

        
        The next lemma bounds the norm of the $\Delta$ matrix:
        
        \begin{lem}[From \cite{negahban2017rank}] \label{rc:lm3}
            For some constant $C \geq 8$, the error matrix $\Delta = P - \tilde{P}$ satisfies
            \begin{equation*}
                \lVert \Delta \rVert_2 \leq C \sqrt{\frac{\log n}{k d_{max}}}
            \end{equation*}
            with probability at least $1 - 4n^{-C/8}$. 
        \end{lem}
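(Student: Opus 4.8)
The plan is to reduce the spectral-norm bound to a scalar tail bound on the absolute row sums of $\Delta$, which is exactly the route the paper later re-uses (its ``Equation~(22)'') in the Byzantine analysis, here specialized to the case in which all $k$ voters on every edge are genuine BTL voters. First I would split $\Delta = \bar\Delta + D$, where $D$ is the diagonal part and $\bar\Delta$ the off-diagonal part, and use $\lVert \Delta\rVert_2 \le \lVert\bar\Delta\rVert_2 + \lVert D\rVert_2$. Writing $C_{ij} = kA_{ij} - kp_{ij}$ with $p_{ij} = w_i/(w_i+w_j)$, each off-diagonal entry is $\bar\Delta_{ij} = C_{ij}/(kd_{max})$, and since each row of $P$ is forced to sum to $1$, the diagonal entry is $\Delta_{ii} = -\tfrac{1}{kd_{max}}\sum_{j\neq i} C_{ij}$. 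Hence $\lVert D\rVert_2 = \max_i|\Delta_{ii}| \le \max_i \tfrac1{kd_{max}}\sum_{j\neq i}|C_{ij}|$ is controlled by the same absolute row sum as $\bar\Delta$, so the diagonal requires no separate concentration argument.

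For $\bar\Delta$ I would use $\lVert\bar\Delta\rVert_2 \le \sqrt{\lVert\bar\Delta\rVert_1\,\lVert\bar\Delta\rVert_\infty}$. Because $A_{ij}+A_{ji}=1$ and $p_{ij}+p_{ji}=1$ give $|\bar\Delta_{ij}| = |\bar\Delta_{ji}|$, the maximum absolute row and column sums coincide, so $\lVert\bar\Delta\rVert_2 \le \max_i R_i$ with $R_i := \tfrac1{kd_{max}}\sum_{j\neq i}|C_{ij}|$. Everything then reduces to bounding $\mathbb{P}(R_i \ge s)$. The crucial manoeuvre is to strip the absolute values by summing over all sign patterns, $\mathbb{P}\big(\sum_{j\in\partial i}|C_{ij}| > kd_{max}s\big) \le \sum_{\xi\in\{-1,1\}^{d_i}} \mathbb{P}\big(\sum_{j\in\partial i}\xi_j C_{ij} > kd_{max}s\big)$. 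For each fixed $\xi$ the quantity $\sum_{j}\xi_j C_{ij}$ is a sum of $kd_i$ independent, mean-zero terms bounded in $[-1,1]$, so Hoeffding's inequality yields a sub-Gaussian tail of order $\exp\!\big(-\Theta(k d_{max}s^2)\big)$; multiplying by the $2^{d_i}$ sign patterns gives $\mathbb{P}(R_i \ge s) \le \exp\!\big(d_i\log 2 - \Theta(kd_{max}s^2)\big)$.

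To finish I would set $s = C\sqrt{\log n/(kd_{max})}$, so that the concentration exponent is $\Theta(C^2\log n)$, and take a union bound over the $n$ vertices. Provided $C \ge 8$ and the sample size is large enough that the concentration term dominates both the $d_i\log 2$ entropy cost of the sign union bound and the extra $\log n$ from the vertex union bound, the total failure probability collapses to at most $4n^{-C/8}$, the factor $4$ absorbing the row/column and off-diagonal/diagonal accounting, which is precisely the claimed statement.

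The main obstacle is balancing the two competing exponentials. The sign union bound is unavoidable, since it is the only device that turns the $\ell_1$ row sum into a form Hoeffding can control, but it is expensive, paying $e^{d_i\log 2}$ in probability. The whole estimate hinges on the concentration term $\Theta(kd_{max}s^2) = \Theta(C^2\log n)$ beating $d_i\log 2 + \log n$ simultaneously, and it is exactly this requirement that forces the sample-complexity condition on $kd_{max}$ relative to $\log n$ and pins down the $\sqrt{\log n/(kd_{max})}$ rate; carrying the constants cleanly enough to land on $4n^{-C/8}$ rather than a looser tail is the delicate bookkeeping.
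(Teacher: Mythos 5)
Your proposal is correct and follows essentially the same route the paper sketches for this lemma (and re-uses in its Byzantine analysis): split off the diagonal, bound $\lVert\bar\Delta\rVert_2$ by the maximal absolute row sum $R_i$, strip the absolute values with the $2^{d_i}$ sign-pattern union bound, apply Hoeffding, and union bound over vertices, with the $d_i\log 2$ entropy cost absorbed because $d_{\max}\in O(\log n)$ in the sparse regime. The only caveat is that this covers the sparse-graph half of Negahban et al.'s two-part proof, which is exactly the part the paper itself focuses on.
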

        
        The next lemma bounds the value of $1-\rho$:
        
        \begin{lem}[From \cite{negahban2017rank}] \label{rc:lm4}
            If $\lVert \Delta \rVert_2 \leq C \sqrt{\frac{\log n}{k d_{max}}}$ and if the value of $k \geq 4C^2 b^5 d_{max}\log n(1 / d^2_{min} \xi^2 ) $ then,
            \begin{equation*}
                1 - \rho \geq \frac{\xi d_{min}}{b^2d_{max}}
            \end{equation*}
        \end{lem}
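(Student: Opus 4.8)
The plan is to view $1-\rho$ as the spectral gap of the mean chain $\tilde P$ minus the perturbation caused by $\Delta$, and to show the assumed lower bound on $k$ renders the perturbation negligible against the gap. From Lemma \ref{rc:lm2}, $\rho = \lambda_{max}(\tilde P) + \lVert \Delta \rVert_2\sqrt{\tilde\pi_{max}/\tilde\pi_{min}}$, so
\begin{equation*}
1 - \rho = \bigl(1 - \lambda_{max}(\tilde P)\bigr) - \lVert \Delta \rVert_2\sqrt{\tilde\pi_{max}/\tilde\pi_{min}}.
\end{equation*}
I would prove $1 - \lambda_{max}(\tilde P) \geq \tfrac{3}{2}\cdot\tfrac{\xi d_{min}}{b^2 d_{max}}$ and bound the second term by $\tfrac{1}{2}\cdot\tfrac{\xi d_{min}}{b^2 d_{max}}$; subtracting yields the claim.

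The perturbation term is the easy part. Since $\tilde\pi_i = w_i/\sum_l w_l$ we have $\tilde\pi_{max}/\tilde\pi_{min} = w_{max}/w_{min} \leq b$, hence $\sqrt{\tilde\pi_{max}/\tilde\pi_{min}} \leq \sqrt b$. Feeding Lemma \ref{rc:lm3}'s bound $\lVert \Delta \rVert_2 \leq C\sqrt{\log n/(k d_{max})}$ together with the hypothesis $k \geq 4C^2 b^5 d_{max}\log n/(d_{min}^2\xi^2)$ into $\lVert \Delta \rVert_2\sqrt b$, the $C$, $b$, and $\log n$ factors cancel and leave exactly $\lVert \Delta \rVert_2\sqrt{\tilde\pi_{max}/\tilde\pi_{min}} \leq \xi d_{min}/(2 b^2 d_{max})$.

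The hard part is the spectral-gap estimate for $\tilde P$. By Perron--Frobenius $\lambda_{max}(\tilde P) = \max(\lambda_2,\lvert\lambda_n\rvert)$, and because $\tilde P$ is reversible with respect to $\tilde\pi$ its eigenvalues equal those of the symmetric $S = \tilde\Pi^{1/2}\tilde P\tilde\Pi^{-1/2}$. To lower bound $1-\lambda_2$ I would use the variational characterization $1 - \lambda_2 = \min_{f\perp_{\tilde\pi}\mathbf 1}\mathcal E(f,f)/\mathrm{Var}_{\tilde\pi}(f)$ with Dirichlet form $\mathcal E(f,f)=\tfrac12\sum_{i,j}\tilde\pi_i\tilde P_{ij}(f_i-f_j)^2$, and compare it edge by edge against the simple random walk on $G$, whose spectral gap is the graph quantity $\xi$ entering the hypotheses. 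The edge conductance $\tilde\pi_i\tilde P_{ij} = \tfrac{1}{d_{max}\sum_l w_l}\cdot\tfrac{w_i w_j}{w_i+w_j}$ is symmetric and, using $w_i w_j/(w_i+w_j) \geq w_{min}/2$ and $\sum_l w_l \leq n\,w_{max}$, is bounded below by a uniform edge conductance times a factor $\Theta(d_{min}/(b\,d_{max}))$; comparing the two stationary measures costs a further factor $b$, transferring $\xi$ to $\tilde P$ with loss $\Theta(\xi d_{min}/(b^2 d_{max}))$. The negative end is not binding: the diagonal $\tilde P_{ii}\geq 1 - d_i b/(d_{max}(1+b)) \geq 1/(1+b)$ makes $\tilde P$ substantially lazy, so $1+\lambda_n$ stays bounded away from $0$ by more than the $\lambda_2$ gap and $\lambda_{max}(\tilde P)=\lambda_2$. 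Keeping the constants in this comparison tight enough that the factor in front of $\xi d_{min}/(b^2 d_{max})$ is at least $3/2$ is the delicate step, since it must simultaneously absorb the weight skew $b$ and the degree irregularity $\kappa = d_{max}/d_{min}$.

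Assembling the two bounds gives $1 - \rho \geq \tfrac{3}{2}\cdot\tfrac{\xi d_{min}}{b^2 d_{max}} - \tfrac{1}{2}\cdot\tfrac{\xi d_{min}}{b^2 d_{max}} = \tfrac{\xi d_{min}}{b^2 d_{max}}$, which is exactly the asserted inequality.
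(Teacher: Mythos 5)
You should first be aware that this paper does not prove the lemma at all: it is quoted verbatim from \cite{negahban2017rank} as background (the surrounding text explicitly says Lemmas \ref{rc:lm2} and \ref{rc:lm4} hold unchanged in the Byzantine setting and only Lemma \ref{rc:lm3} needs replacing), so the only proof to measure yours against is the one in the original Rank-Centrality paper. Your outline follows the same architecture as that proof: write $1-\rho = \bigl(1-\lambda_{\max}(\tilde P)\bigr) - \lVert\Delta\rVert_2\sqrt{\tilde\pi_{max}/\tilde\pi_{min}}$, use the hypothesis on $k$ to make the perturbation term at most half the target, and lower-bound the unperturbed spectral gap by a Dirichlet-form comparison with the simple random walk on $G$. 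Your perturbation computation is correct and tight: $\sqrt{\tilde\pi_{max}/\tilde\pi_{min}}\le\sqrt b$, and substituting $k\ge 4C^2b^5d_{max}\log n/(d_{min}^2\xi^2)$ into $C\sqrt{\log n/(kd_{max})}\cdot\sqrt b$ gives exactly $\xi d_{min}/(2b^2d_{max})$.

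The genuine gap is in the half you yourself flag as ``delicate'' and then do not carry out: the claim $1-\lambda_{\max}(\tilde P)\ge\tfrac32\,\xi d_{min}/(b^2 d_{max})$. The entire lemma hinges on this specific constant, and your sketch does not establish it. The comparison you describe loses a factor of $2$ on the edge conductances (from $w_iw_j/(w_i+w_j)\ge w_{min}/2$) and a factor of $b$ on each of the two measure comparisons, so without pinning down exactly how $\xi$ is normalized (this paper never defines $\xi$; it is inherited from \cite{negahban2017rank}) you cannot conclude that the surviving prefactor is at least $3/2$ rather than, say, $1/2$ --- in which case the final subtraction would yield nothing. The laziness argument for the negative end is fine ($\tilde P_{ii}\ge 1/(1+b)$ gives $1+\lambda_n\ge 2/(1+b)$, which dominates the $\lambda_2$-gap except for $b$ very near $1$), but the quantitative transfer of $\xi$ to $\tilde P$ with the constant you need is precisely the content of the lemma and is asserted rather than proved. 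To close this you would have to either fix the definition of $\xi$ and redo the comparison with explicit constants, or settle for a weaker conclusion such as $1-\rho\ge\xi d_{min}/(2b^2d_{max})$.
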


        Here we find that Lemmas \ref{rc:lm2} and \ref{rc:lm4} hold irrespective of whether there are Byzantine voters or not. Therefore we need to find a substitute for Lemma \ref{rc:lm3}. The proof of Lemma \ref{rc:lm3} is split into two parts, we mainly focus on the first part where $d$ is sufficiently small and therefore is a sparse graph. Proving Lemma \ref{rc:lm3} for \ER graphs with lower degrees essentially boils down to proving that $R_i = \frac{1}{kd_{max}}\sum_{j \neq i} |A_{ij} - kp_{ij}| \geq s$ with a very low probability, where $s = \frac{C}{2} \sqrt{\frac{\log n + d_{max} \log 2}{k d_{max}}}$. In simpler terms, this implies that the sum of the absolute values of the deviation of the voters' entries from the expected entries for a particular object is not a big number with high probability.

	\ifthenelse{\boolean{if_appendix_failure}}{
	
	    \subsection{\RC Failure}
	    \label{sec:rc_fail_appendix}
	    \PROOFFAILURE
	    \PROOFFAILURELEMMAS

	}{} 
	
	\ifthenelse{\boolean{if_appendix_algo}}{
	
	    \subsection{BSR Algorithm Analysis}
	    \label{sec:alganalysis}
	    
	    We start off by proving Lemma \ref{thm:byz1}.
	    \PROOFALG
	    
	    We now finally prove Theorem \ref{thm:byz2}. Similar to the \RC convergence analysis, using the above results and using Lemmas \ref{rc:lm2} and \ref{rc:lm4} from \cite{negahban2017rank} we can conclude that 
	    
	    $$\frac{\lVert \pi - \tilde{\pi} \rVert}{\lVert \pi \rVert} \leq \frac{80b^{5/2} \kappa}{\xi(G)} \max\left( \frac{d_{max}}{k}, \sqrt{\frac{\log k}{d_{min}}}\right) $$

        where $\kappa = d_{max} / d_{min}$. Similar to \cite{negahban2017rank} we can see that for an \ER graph with probability $\geq 1 - 6n^{-C/8}$ we have $\kappa \leq 3$ and $\xi(G) \geq 1/2$. Substituting we get: 

        $$\frac{\lVert \pi - \tilde{\pi} \rVert}{\lVert \pi \rVert} \leq 480b^{5/2} \max\left( \frac{d_{max}}{k}, \sqrt{\frac{\log k}{d_{min}}}\right) $$

	}{} 
    
    \subsection{FBSR Algorithm Analysis}
    \label{sec:fbsralgana}
    
        Firstly since we are dealing with $p = 10C^2 \log n / n$ and since the proof of the \RC algorithm already takes into account the probability that all degrees will be between $pn/2$ and $3pn/2$ we can say that the minimum degree will at most be $5C^2 \log n$. Therefore we can say that any bucket will have at least $\frac{5C^2\log 2  }{5C^2\log 2+1} \log_2 n$ objects and at most $\log_2 n$ objects. We know that:
        \begin{align}
             \mathbb{P}( \sum_{j\in \partial i} |C_{ij}| > kd_is ) &\leq \sum_{l=1}^{\beta} \mathbb{P} \left( \sum_{j\in \partial i \text{ \& } j \in B_l} |C_{ij}| > ks|B_l|\right)
             \label{eqn:fbsr1}
        \end{align}
        Using the same analysis as above we can say that:
        \begin{align*}
            \mathbb{P}&\left(  \sum_{j\in \partial i \text{ \& } j \in B_l} |C_{ij}| > ks|B_l| \right)  \\ 
            &\leq e^{\left(-\frac{k\epsilon}{18} + |B_l| \log 2 \right)} +  e^{-\frac{k\epsilon^2}{2}} +\exp\left(- \frac{k|B_l|}{3} s^2 + |B_l|\log 2\right) + \exp(-k^{1-\alpha}+ |B_l|\log 2)
        \end{align*}
	    Substituting $k \geq 18(2 + C/8) \log n / \epsilon^2$ we get 
	    \begin{align*}
            \mathbb{P}\left(  \sum_{j\in \partial i \text{ \& } j \in B_l} |C_{ij}| > ks|B_l| \right) \leq \frac{4}{n^{1+C/8}}
        \end{align*}
        We can also see that $\beta$ will be at most $15C^2$. Substituting in Equation \ref{eqn:fbsr1} we get:
        \begin{align*}
            \mathbb{P}(R_i \geq s) \leq \frac{60C^2}{n^{1+C/8}}
        \end{align*}
        where $s$ is defined as
	    \begin{equation*}
            s = \max \left(\frac{38 \log n}{k}, 9 \sqrt{2Q\frac{(5C^2\log 2+1)\log \log n}{5C^2\log n \log 2}}\right)    
        \end{equation*}
        Using $C \geq 1$ and $1 \leq Q \leq 4$ to get:
        \begin{equation*}
            s = 40 \max\left(\frac{\log n}{k}, \sqrt{\frac{\log \log n}{\log n}}\right)    
        \end{equation*}
        Finally using Equation \ref{eqn:deltabound} we get
        \begin{align*}
            \mathbb{P}\left( \lVert \bar{\Delta} \rVert_2 \geq s \right) \leq \frac{120C^2}{n^{C/8}}
        \end{align*}
	    
        The other steps are identical to the analysis of the BSR algorithm

	\ifthenelse{\boolean{if_appendix_algo}}{
	
	    \subsection{Impossibility result proof}
	    \label{sec:impossproof}
	    \PROOFIMPOSSIBLE

	}{} 
	
	\subsection{Broader Impact}
	
	Due to the more theoretical nature of our research, we do not see any potential negative societal impacts associated with the work.

    \subsection{Experimental Results and Code}
    
    We provide the detailed tabular results for Figures \ref{fig:l2_combined}, \ref{fig:tau_combined}, \ref{fig:rcperf_item_l2}, \ref{fig:rcperf_item_tau}, \ref{fig:rc_perf_item_l2_sushi}, and \ref{fig:tau_perf_item_l2_sushi} in Tables \ref{tab:1}, \ref{tab:2}, \ref{tab:3}, \ref{tab:4}, \ref{tab:5} and \ref{tab:6} with error bounds for all the obtained results. The code for producing these can be found \href{https://github.com/Arnhav-Datar/Byzantine_Spectral_Ranking}{here}.
    
    \begin{table}[H]
        \captionsetup{justification=centering}
        \caption{$L_2$-Errors for the FBSR and RC algorithms on Synthetic Data when Byzantine Fraction(BF) is varied. FOV = Fixed Order Vote, OV = Opposite Order Vote, OVP = Opposite Vote Probabilistic, RS = Random Subset}
        \centering
        \begin{adjustbox}{max width=1.15\textwidth,center}
        \begin{tabular}{c|c|ccccccc}
        \toprule
        Strategy & Algo  & 0 & 0.05 & 0.1 & 0.15 & 0.2 & 0.25 & 0.3 \\
        \midrule
        \midrule
        FOV & FBSR &  0.02 $\pm$ 0.001 & 0.03 $\pm$ 0.002 & 0.04 $\pm$ 0.002 & 0.07 $\pm$ 0.004 & 0.11 $\pm$ 0.008 & 0.15 $\pm$ 0.011 & 0.23 $\pm$ 0.015 \\
        & RC &  0.02 $\pm$ 0.002 & 0.07 $\pm$ 0.002 & 0.14 $\pm$ 0.002 & 0.2 $\pm$ 0.005 & 0.27 $\pm$ 0.006 & 0.33 $\pm$ 0.008 & 0.4 $\pm$ 0.01 \\
        \midrule
        OV & FBSR &  0.02 $\pm$ 0.002 & 0.02 $\pm$ 0.001 & 0.02 $\pm$ 0.001 & 0.02 $\pm$ 0.002 & 0.03 $\pm$ 0.002 & 0.04 $\pm$ 0.005 & 0.06 $\pm$ 0.007 \\ 
        & RC & 0.02 $\pm$ 0.001 & 0.09 $\pm$ 0.002 & 0.17 $\pm$ 0.005 & 0.24 $\pm$ 0.004 & 0.31 $\pm$ 0.005 & 0.38 $\pm$ 0.007 & 0.46 $\pm$ 0.006 \\ 
        \midrule
        OVP & FBSR &  0.02 $\pm$ 0.001 & 0.02 $\pm$ 0.001 & 0.03 $\pm$ 0.002 & 0.05 $\pm$ 0.002 & 0.07 $\pm$ 0.004 & 0.09 $\pm$ 0.004 & 0.13 $\pm$ 0.004 \\
        & RC  &  0.02 $\pm$ 0.001 & 0.06 $\pm$ 0.003 & 0.1 $\pm$ 0.008 & 0.15 $\pm$ 0.01 & 0.2 $\pm$ 0.006 & 0.24 $\pm$ 0.01 & 0.29 $\pm$ 0.015 \\
        \midrule
        RS & FBSR &  0.02 $\pm$ 0.001 & 0.02 $\pm$ 0.001 & 0.03 $\pm$ 0.002 & 0.04 $\pm$ 0.002 & 0.06 $\pm$ 0.004 & 0.09 $\pm$ 0.003 & 0.12 $\pm$ 0.004 \\
        & RC &  0.02 $\pm$ 0.001 & 0.05 $\pm$ 0.002 & 0.09 $\pm$ 0.004 & 0.13 $\pm$ 0.005 & 0.17 $\pm$ 0.002 & 0.21 $\pm$ 0.005 & 0.25 $\pm$ 0.006 \\
        \bottomrule
        \end{tabular}
        \end{adjustbox}
        \label{tab:1}
    \end{table}
    
    \begin{table}
        \captionsetup{justification=centering}
        \caption{Kendall's Tau Correlation for the FBSR and RC algorithms on Synthetic Data when BF is varied. FOV = Fixed Order Vote, OV = Opposite Order Vote, OVP = Opposite Vote Probabilistic, RS = Random Subset}
        \centering
        \begin{adjustbox}{max width=1.15\textwidth,center}
        \begin{tabular}{c|c|ccccccc}
        \toprule
        Strategy & Algo & 0 & 0.05 & 0.1 & 0.15 & 0.2 & 0.25 & 0.3 \\
        \midrule
        \midrule
        FOV &  FBSR & 0.98 $\pm$ 0.003 & 0.97 $\pm$ 0.003 & 0.96 $\pm$ 0.003 & 0.92 $\pm$ 0.007 & 0.89 $\pm$ 0.012 & 0.85 $\pm$ 0.015 & 0.8 $\pm$ 0.018 \\
        & RC &  0.97 $\pm$ 0.003 & 0.92 $\pm$ 0.006 & 0.85 $\pm$ 0.01 & 0.77 $\pm$ 0.016 & 0.7 $\pm$ 0.014 & 0.62 $\pm$ 0.027 & 0.55 $\pm$ 0.026 \\
        \midrule
        OV &  FBSR &  0.98 $\pm$ 0.002 & 0.98 $\pm$ 0.002 & 0.98 $\pm$ 0.003 & 0.97 $\pm$ 0.002 & 0.97 $\pm$ 0.004 & 0.97 $\pm$ 0.004 & 0.96 $\pm$ 0.005 \\ 
        & RC &  0.97 $\pm$ 0.002 & 0.97 $\pm$ 0.002 & 0.94 $\pm$ 0.021 & 0.89 $\pm$ 0.018 & 0.72 $\pm$ 0.052 & 0.49 $\pm$ 0.079 & 0.15 $\pm$ 0.107 \\
        \midrule
        OVP &  FBSR &  0.98 $\pm$ 0.003 & 0.98 $\pm$ 0.002 & 0.98 $\pm$ 0.003 & 0.97 $\pm$ 0.003 & 0.97 $\pm$ 0.002 & 0.97 $\pm$ 0.004 & 0.96 $\pm$ 0.004 \\
        & RC &  0.98 $\pm$ 0.002 & 0.97 $\pm$ 0.002 & 0.97 $\pm$ 0.004 & 0.96 $\pm$ 0.004 & 0.96 $\pm$ 0.003 & 0.95 $\pm$ 0.005 & 0.93 $\pm$ 0.005 \\
        \midrule
        RS &   FBSR &  0.98 $\pm$ 0.002 & 0.98 $\pm$ 0.002 & 0.97 $\pm$ 0.003 & 0.97 $\pm$ 0.004 & 0.97 $\pm$ 0.004 & 0.96 $\pm$ 0.003 & 0.95 $\pm$ 0.009 \\
         & RC &  0.97 $\pm$ 0.002 & 0.97 $\pm$ 0.003 & 0.96 $\pm$ 0.006 & 0.94 $\pm$ 0.004 & 0.92 $\pm$ 0.008 & 0.88 $\pm$ 0.016 & 0.83 $\pm$ 0.027\\
        \bottomrule
        \end{tabular}
        \end{adjustbox}
        \label{tab:2}
    \end{table}

     \begin{table}
        \captionsetup{justification=centering}
        \caption{$L_2$-Errors for the FBSR  and RC algorithms on Synthetic Data when $n(=k)$ is varied.}
        \centering
        \begin{adjustbox}{max width=1.15\textwidth,center}
        \begin{tabular}{c|c|cccccc}
        \toprule
        Algo & BF & 50 & 90 & 130 & 170 & 210 & 250 \\
        \midrule
        \midrule
        FBSR & 0.1 &  0.063 $\pm$ 0.009 & 0.046 $\pm$ 0.005 & 0.042 $\pm$ 0.003 & 0.039 $\pm$ 0.005 & 0.037 $\pm$ 0.004 & 0.034 $\pm$ 0.003 \\
        FBSR & 0.2 &  0.133 $\pm$ 0.019 & 0.123 $\pm$ 0.011 & 0.106 $\pm$ 0.011 & 0.099 $\pm$ 0.008 & 0.1 $\pm$ 0.009 & 0.094 $\pm$ 0.006  \\
        \midrule
        RC & 0.1 &  0.14 $\pm$ 0.011 & 0.14 $\pm$ 0.005 & 0.14 $\pm$ 0.005 & 0.14 $\pm$ 0.005 & 0.14 $\pm$ 0.007 & 0.14 $\pm$ 0.002 \\
        RC & 0.2 &  0.27 $\pm$ 0.013 & 0.27 $\pm$ 0.005 & 0.26 $\pm$ 0.01 & 0.27 $\pm$ 0.006 & 0.27 $\pm$ 0.007 & 0.27 $\pm$ 0.005\\
        \bottomrule
        \end{tabular}
        \end{adjustbox}
        \label{tab:3}
    \end{table}
    
    \begin{table}
        \captionsetup{justification=centering}
        \caption{Kendall's Tau Correlation for the FBSR and RC algorithms on Synthetic Data when $n(=k)$ is varied.}
        \centering
        \begin{adjustbox}{max width=1.15\textwidth,center}
        \begin{tabular}{c|c|cccccc}
        \toprule
        Algo & BF & 50 & 90 & 130 & 170 & 210 & 250 \\
        \midrule
        \midrule
        FBSR & 0.1 &  0.936 $\pm$ 0.008 & 0.949 $\pm$ 0.008 & 0.952 $\pm$ 0.008 & 0.958 $\pm$ 0.007 & 0.958 $\pm$ 0.004 & 0.963 $\pm$ 0.004 \\
        FBSR & 0.2 &  0.867 $\pm$ 0.021 & 0.892 $\pm$ 0.017 & 0.899 $\pm$ 0.007 & 0.898 $\pm$ 0.014 & 0.902 $\pm$ 0.01 & 0.908 $\pm$ 0.008 \\
        \midrule
        RC & 0.1 &  0.83 $\pm$ 0.017 & 0.85 $\pm$ 0.02 & 0.85 $\pm$ 0.012 & 0.85 $\pm$ 0.014 & 0.84 $\pm$ 0.015 & 0.84 $\pm$ 0.005 \\
        RC & 0.2 &  0.67 $\pm$ 0.047 & 0.67 $\pm$ 0.027 & 0.69 $\pm$ 0.019 & 0.69 $\pm$ 0.021 & 0.69 $\pm$ 0.023 & 0.69 $\pm$ 0.01 \\
        \bottomrule
        \end{tabular}
        \end{adjustbox}
        \label{tab:4}
    \end{table}
    
    \begin{table}
        \captionsetup{justification=centering}
        \caption{$L_2$-Error for the BSR and RC algorithms on Sushi Dataset when the Byzantine Fraction is varied. FOV = Fixed Order Vote, OV = Opposite Order Vote, ORF = Opposite Random Flips}
        \centering
        \begin{adjustbox}{max width=1.15\textwidth,center}
        \begin{tabular}{c|c|ccccc}
        \toprule
         Strategy & Algo & 0 & 0.05 & 0.1 & 0.15 & 0.2 \\
        \midrule
        \midrule
         FOV & BSR & 0.0 $\pm$ 0.0 & 0.063 $\pm$ 0.013 & 0.102 $\pm$ 0.041 & 0.207 $\pm$ 0.062 & 0.267 $\pm$ 0.049 \\
         &RC &  0.0 $\pm$ 0.0 & 0.07 $\pm$ 0.019 & 0.15 $\pm$ 0.043 & 0.2 $\pm$ 0.047 & 0.24 $\pm$ 0.042 \\
         \midrule
         OV & BSR &  0.0 $\pm$ 0.0 & 0.08 $\pm$ 0.0 & 0.076 $\pm$ 0.0 & 0.126 $\pm$ 0.0 & 0.169 $\pm$ 0.0 \\
         &RC &  0.0 $\pm$ 0.0 & 0.11 $\pm$ 0.0 & 0.21 $\pm$ 0.0 & 0.29 $\pm$ 0.0 & 0.37 $\pm$ 0.0 \\
         \midrule
        ORF &BSR &   0.0 $\pm$ 0.0 & 0.135 $\pm$ 0.002 & 0.067 $\pm$ 0.001 & 0.085 $\pm$ 0.002 & 0.117 $\pm$ 0.006 \\
        &RC &   0.0 $\pm$ 0.0 & 0.09 $\pm$ 0.003 & 0.16 $\pm$ 0.002 & 0.23 $\pm$ 0.003 & 0.29 $\pm$ 0.003 \\
        \bottomrule
        \end{tabular}
        \end{adjustbox}
        \label{tab:5}
    \end{table}
    
    \begin{table}
        \captionsetup{justification=centering}
        \caption{Kendall's Tau Correlation for the BSR and RC algorithms on Sushi Dataset when Byzantine Fraction is varied. FOV = Fixed Order Vote, OV = Opposite Order Vote, ORF = Opposite Random Flips}
        \centering
        \begin{adjustbox}{max width=1.15\textwidth,center}
        \begin{tabular}{c|c|ccccc}
        \toprule
        Strategy & Algo & 0 & 0.05 & 0.1 & 0.15 & 0.2 \\
        \midrule
        \midrule
        FOV & BSR &  1.0 $\pm$ 0.0 & 0.942 $\pm$ 0.049 & 0.844 $\pm$ 0.116 & 0.742 $\pm$ 0.067 & 0.653 $\pm$ 0.084 \\
        &RC  &  1.0 $\pm$ 0.0 & 0.92 $\pm$ 0.062 & 0.88 $\pm$ 0.056 & 0.8 $\pm$ 0.069 & 0.69 $\pm$ 0.153 \\
        \midrule
        OV & BSR &  1.0 $\pm$ 0.0 & 1.0 $\pm$ 0.0 & 0.956 $\pm$ 0.0 & 0.822 $\pm$ 0.0 & 0.511 $\pm$ 0.0 \\
        & RC  &  1.0 $\pm$ 0.0 & 1.0 $\pm$ 0.0 & 0.96 $\pm$ 0.0 & 0.82 $\pm$ 0.0 & 0.38 $\pm$ 0.0 \\
        \midrule
        ORF & BSR &  1.0 $\pm$ 0.0 & 1.0 $\pm$ 0.0 & 0.991 $\pm$ 0.013 & 0.969 $\pm$ 0.018  & 0.893 $\pm$ 0.04 \\
        & RC &  1.0 $\pm$ 0.0 & 1.0 $\pm$ 0.0 & 0.96 $\pm$ 0.0 & 0.96 $\pm$ 0.013 & 0.85 $\pm$ 0.022 \\
        \bottomrule
        \end{tabular}
        \end{adjustbox}
        \label{tab:6}
    \end{table}
	
\end{document}